%%%%%%%% ICML 2022 EXAMPLE LATEX SUBMISSION FILE %%%%%%%%%%%%%%%%%

\documentclass[nohyperref]{article}

% Recommended, but optional, packages for figures and better typesetting:
\usepackage{microtype}
\usepackage{graphicx}
\usepackage{subfigure}
\usepackage{booktabs} % for professional tables

% hyperref makes hyperlinks in the resulting PDF.
% If your build breaks (sometimes temporarily if a hyperlink spans a page)
% please comment out the following usepackage line and replace
% \usepackage{icml2022} with \usepackage[nohyperref]{icml2022} above.
\usepackage{hyperref}

% Attempt to make hyperref and algorithmic work together better:

% Use the following line for the initial blind version submitted for review:
% \usepackage{icml2022}

% If accepted, instead use the following line for the camera-ready submission:
\usepackage[accepted]{icml2022}

% For theorems and such
\usepackage{amsmath}
\usepackage{amssymb}
\usepackage{mathtools}
\usepackage{amsthm}

\usepackage{booktabs}
\usepackage{bm}
\usepackage{url}  
\usepackage{mathrsfs}
\usepackage{multirow}
\usepackage{balance}
\usepackage{amstext}
\usepackage{overpic}
\usepackage{color}
\usepackage{enumitem}
\usepackage{fancyvrb}
\usepackage{graphicx}
\usepackage{caption}
\usepackage{multirow}
\usepackage{bm}
\usepackage{algorithm, algorithmic}
\usepackage{subfigure}

% if you use cleveref..
\usepackage[capitalize,noabbrev]{cleveref}

%%%%%%%%%%%%%%%%%%%%%%%%%%%%%%%%
% THEOREMS
%%%%%%%%%%%%%%%%%%%%%%%%%%%%%%%%
\theoremstyle{plain}
\newtheorem{theorem}{Theorem}

\newtheorem{corollary}{Corollary}
\theoremstyle{definition}

\theoremstyle{remark}
\newtheorem{remark}{Remark}

\newtheorem{define}{Definition}
\definecolor{gray}{RGB}{150, 150, 150}
\definecolor{orange}{RGB}{255, 97, 0}

\def\Algname{\underline{SH}apley \underline{E}xplanation \underline{A}ccele\underline{R}ation}
\def\Algnameb{Shapley Explanation Acceleration}
\def\Algnameabbr{SHEAR}

% Todonotes is useful during development; simply uncomment the next line
%    and comment out the line below the next line to turn off comments
%\usepackage[disable,textsize=tiny]{todonotes}
\usepackage[textsize=tiny]{todonotes}

% The \icmltitle you define below is probably too long as a header.
% Therefore, a short form for the running title is supplied here:
\icmltitlerunning{Accelerating Shapley Explanation via Contributive Cooperator Selection}

\begin{document}

\twocolumn[
\icmltitle{Accelerating Shapley Explanation via Contributive Cooperator Selection}

% It is OKAY to include author information, even for blind
% submissions: the style file will automatically remove it for you
% unless you've provided the [accepted] option to the icml2022
% package.

% List of affiliations: The first argument should be a (short)
% identifier you will use later to specify author affiliations
% Academic affiliations should list Department, University, City, Region, Country
% Industry affiliations should list Company, City, Region, Country

% You can specify symbols, otherwise they are numbered in order.
% Ideally, you should not use this facility. Affiliations will be numbered
% in order of appearance and this is the preferred way.
\icmlsetsymbol{equal}{*}

% Guanchu Wang* and Yu-Neng Chuang*, Mengnan Du, Fan Yang, Quan Zhou, Pushkar Tripathi, Xuanting Cai, and Xia Hu
% PS: Guanchu Wang and Yu-Neng Chuang have equation contribution to this work

\begin{icmlauthorlist}
\icmlauthor{Guanchu Wang}{equal,aff1}
\icmlauthor{Yu-Neng Chuang}{equal,aff1}
\icmlauthor{Mengnan Du}{aff2}
\icmlauthor{Fan Yang}{aff1} \\
\icmlauthor{Quan Zhou}{aff3}
\icmlauthor{Pushkar Tripathi}{aff3}
\icmlauthor{Xuanting Cai}{aff3}
%\icmlauthor{}{sch}
\icmlauthor{Xia Hu}{aff1}
% \icmlauthor{Firstname8 Lastname8}{yyy,comp}
%\icmlauthor{}{sch}
%\icmlauthor{}{sch}
\end{icmlauthorlist}

\icmlaffiliation{aff1}{Department of Computer Science, Rice University}
\icmlaffiliation{aff2}{Department of Computer Science and Engineering, Texas A\&M University}
\icmlaffiliation{aff3}{Meta Platforms, Inc.}

\icmlcorrespondingauthor{Guanchu Wang}{guanchu.wang@rice.edu}
\icmlcorrespondingauthor{Xia Hu}{xia.hu@rice.edu}

% You may provide any keywords that you
% find helpful for describing your paper; these are used to populate
% the "keywords" metadata in the PDF but will not be shown in the document
\icmlkeywords{Machine Learning, ICML}

\vskip 0.3in
]

% this must go after the closing bracket ] following \twocolumn[ ...

% This command actually creates the footnote in the first column
% listing the affiliations and the copyright notice.
% The command takes one argument, which is text to display at the start of the footnote.
% The \icmlEqualContribution command is standard text for equal contribution.
% Remove it (just {}) if you do not need this facility.

%\printAffiliationsAndNotice{}  % leave blank if no need to mention equal contribution
\printAffiliationsAndNotice{\icmlEqualContribution} % otherwise use the standard text.

\begin{abstract}

Even though Shapley value provides an effective explanation for a DNN model prediction, the computation relies on the enumeration of all possible input feature coalitions, which leads to the exponentially growing complexity. To address this problem, we propose a novel method \Algnameabbr{} to significantly accelerate the Shapley explanation for DNN models, where only a few coalitions of input features are involved in the computation. The selection of the feature coalitions follows our proposed Shapley chain rule to minimize the absolute error from the ground-truth Shapley values, such that the computation can be both efficient and accurate. To demonstrate the effectiveness, we comprehensively evaluate \Algnameabbr{} across multiple metrics including the absolute error from the ground-truth Shapley value, the faithfulness of the explanations, and running speed. The experimental results indicate \Algnameabbr{} consistently outperforms state-of-the-art baseline methods across different evaluation metrics, which demonstrates its potentials in real-world applications where the computational resource is limited.
The source code is available at \url{https://github.com/guanchuwang/SHEAR}.

\end{abstract}

% \begin{CCSXML}
% <ccs2012>
% <concept>
% <concept_id>10010147.10010178.10010187.10010192</concept_id>
% <concept_desc>Computing methodologies~Causal reasoning and diagnostics</concept_desc>
% <concept_significance>500</concept_significance>
% </concept>
% <concept>
% <concept_id>10010147.10010257.10010293.10010294</concept_id>
% <concept_desc>Computing methodologies~Neural networks</concept_desc>
% <concept_significance>500</concept_significance>
% </concept>
% <concept>
% <concept_id>10010147.10010257.10010293.10011809.10011815</concept_id>
% <concept_desc>Computing methodologies~Generative and developmental approaches</concept_desc>
% <concept_significance>500</concept_significance>
% </concept>
% <concept>
% <concept_id>10010147.10010257.10010258.10010259.10010263</concept_id>
% <concept_desc>Computing methodologies~Supervised learning by classification</concept_desc>
% <concept_significance>300</concept_significance>
% </concept>
% </ccs2012>
% \end{CCSXML}

% \ccsdesc[500]{Computing methodologies~Causal reasoning and diagnostics}
% \ccsdesc[500]{Computing methodologies~Neural networks}
% \ccsdesc[500]{Computing methodologies~Generative and developmental approaches}
% \ccsdesc[300]{Computing methodologies~Supervised learning by classification}

% \keywords{Model interpretation, Shapley value estimation, Shapley chain rule}

\section{Introduction}

% due to the application of deep learning to the areas where interpretability , 
% The back-box nature of deep neural networks

Despite the remarkable achievement of deep neural networks (DNNs) in a variety of fields, the black-box nature of DNNs still limits its deployment in domains where model explanations are required for acquiring trustful results, such as healthcare~\cite{esteva2019guide}, finance~\cite{caruana2020intelligible} and recommender systems~\cite{yang2018towards}. 
Explaining the behavior of DNNs is a significant problem due to both the practical requirements of the stakeholders as well as the regulations in different domains, e.g., GDPR~\cite{goodman2017european, floridi2019establishing}.
To overcome the black-box nature of DNNs, existing work has developed various techniques for model interpretation such as gradient-based methods~\cite{sundararajan2017axiomatic}, casual interpretation~\cite{luo2020causal}, counterfactual explanation~\cite{yang2021model} and Shapley value explanation~\cite{lundberg2017unified}.
Among these, the Shapley value~\cite{shapley201617} has emerged as a popular explanation approach due to its strong theoretical properties~\cite{chuang2023efficient}.

% Among different techniques of model interpretation~\cite{sundararajan2017axiomatic, ribeiro2016should, wang2021shapley, selvaraju2017grad, hendricks2016generating, shrikumar2017learning}, 

% Hence, the problem of explaining the behavior of deep neural networks has recently gained a lot of attention.
% The interpretability of deep learning is a requirement for 
% Recently, European regulators introduced the legal notion of a right to explanation, demanding transparency for any utomated decision having a deep impact on the life of the people involved, which affects the adoption of black-box models like DNNs on some domains.

The Shapley value provides a natural and effective explanation for DNNs from the perspective of cooperative game theory~\cite{kuhn1953contributions, winter2002shapley, roth1988introduction}.
The explanation can be adaptive to individual features, an instance, or the representation of global feature contribution~\cite{covert2020understanding}.
However, the Shapley explanation is known to be an NP-hard problem with extremely high computational complexity, which prevents its application to real-world scenarios.
The brute-force algorithm to calculate the exact Shapley values requires the enumeration of all possible input feature coalitions, where the complexity grows exponentially with the feature number~\cite{van2021tractability}.
Hence, it is crucial to reduce the computational complexity of Shapley explanation.
Existing work can be categorized into two groups to overcome this challenge.
The first group studies specific approximation of Shapley value for DNN models~\cite{chen2018shapley, ancona2019explaining, jia2019towards, wang2021shapley}.
Even though these kinds of work contribute to efficient explanations for the prediction of DNN models, they suffer from the inevitable gap with the ground-truth Shapley explanation due to the approximation~\cite{liu2021synthetic}, and lack of flexibility to deal with different types of models.
Another group proposes the regression of Shapley values based on model evaluation on either the sampling of feature coalitions or permutations~\cite{kokhlikyan2020captum, covert2020understanding}.
Although these kinds of methods provide the effective Shapley explanation for DNN models, they require large numbers of model evaluations, which is inefficient and often performs an undesirable trade-off between the computational complexity and interpretation performance~\cite{covert2021improving, ribeiro2016should, liu2021synthetic}. 

In this work, we propose a novel method to reduce the complexity of Shapley value estimation for the acceleration of DNN explanation.
Instead of using all possible input feature coalitions, we focus on selecting a few features to generate the coalitions for the estimation such that the complexity can be significantly reduced.
Specifically, we first propose the Shapley chain rule to indicate that the absolute estimation error is related to the selection of input features, and those can optimize the estimation are termed as \emph{contributive cooperators}.
Then, we propose \Algname~(\Algnameabbr{}) following the Shapley chain rule to conduct contributive cooperator selection in the estimation of Shapley value.
In this way, the enumeration of all possible feature coalitions can be avoided and the explanation achieves significant acceleration.
To demonstrate \Algnameabbr{} enables both accurate and efficient Shapley explanation, we conduct experiments on three benchmark datasets to compare it with five state-of-the-art baseline methods using five evaluation metrics.
The evaluation involves two metrics to compare the explanation with ground-truth Shapley values: the absolute estimation error and accuracy of feature importance ranking~\cite{wojtas2020feature};
two metrics to evaluate the explanation via model perturbation: the faithfulness~\cite{liu2021synthetic} and monotonicity~\cite{luss2019generating};
and algorithmic throughput~\cite{teich2018plaster} to evaluate the running speed.
The contributions of this work are summarized as follows:
\begin{itemize}[leftmargin=10pt, topsep=0pt]
\setlength{\parskip}{1pt}
\setlength{\parsep}{0pt}
\setlength{\itemsep}{0pt}

    \item We propose the Shapley chain rule to theoretically guide the complexity reduction of Shapley value estimation.
    
    \item Following the Shapley chain rule, we further propose \Algnameabbr{} for Shapley explanation acceleration.
    
    \item Experimental results over three datasets indicate that our SHEAR works more efficiently than state-of-the-art methods without degradation of interpretation performance.
    
    %The effectiveness of our proposed \Algnameabbr{} is demonstrated using five evaluation metrics on three datasets compared with five state-of-the-art baseline methods.
    
    % \item We propose to evaluate the estimated Shapley value without groundtruth.
    
\end{itemize}

\section{Preliminaries}

In this section, we introduce the notations used throughout this work and provide an overview of Shapley values.
% and Kernel SHAP method.

\subsection{Notations}

%We consider a general scenario of deep learning: 
We consider an arbitrary DNN model $f$ and input feature $\boldsymbol{x} \in \mathcal{X}$, where $x_1, \cdots, x_M$ denotes the value of input feature $1, \cdots, M$, respectively, and each feature has either continuous or categorical value.
To formalize the contribution of each feature to the prediction, the inference of $f$ is regarded as a cooperative game on the feature set $\mathcal{U} = \{ 1, \cdots, M \}$, where we let $f_v: 2^M \to \mathbb{R}$ denote the value function.
Specifically, for a feature coalition~(i.e., subset) $\boldsymbol{S} \subseteq \mathcal{U}$, $f_v (\boldsymbol{S})$ returns the prediction based on the features in coalition $\boldsymbol{S}$ that are marginalized over the features not in coalition $\boldsymbol{S}$, which is given as follows
\begin{equation}
\setlength\abovedisplayskip{1mm}
\setlength\belowdisplayskip{1mm}
\label{eq:value_func}
    f_v(\boldsymbol{S}) = \mathbb{E} \big[ f( x_1, \cdots, x_M) \mid \boldsymbol{x}_{\mathcal{U} \setminus \boldsymbol{S}} \sim p(\boldsymbol{x}_{\mathcal{U} \setminus \boldsymbol{S}}) \big],
\end{equation}
where $\boldsymbol{x}_{\mathcal{U} \setminus \boldsymbol{S}}$ denotes $[x_j \mid j \in \mathcal{U} \setminus \boldsymbol{S}]$, and $p(\boldsymbol{x}_{\mathcal{U} \setminus \boldsymbol{S}})$ denotes the joint distribution of $x_j$ for $j \in \mathcal{U} \setminus \boldsymbol{S}$.

However, the marginalized value function is difficult to estimate following Equation~(\ref{eq:value_func}), since it depends on the enumeration of data instances over the whole dataset.
A widely used solution~\cite{lundberg2017unified, wang2021shapley, kokhlikyan2020captum, covert2021improving} is to approximate Equation~(\ref{eq:value_func}) as follows
\begin{equation}
\setlength\abovedisplayskip{1mm}
\setlength\belowdisplayskip{1mm}
    \label{eq:value_func_approx}
    f_v(\boldsymbol{S}) \approx f \big( \boldsymbol{x}_{\boldsymbol{S}}, \overline{\boldsymbol{x}}_{\mathcal{U} \setminus \boldsymbol{S}} \big),
\end{equation}
where $\boldsymbol{x}_{\boldsymbol{S}}$ denotes $[x_j \mid j \in \boldsymbol{S}]$; and $\overline{\boldsymbol{x}}_{\mathcal{U} \setminus \boldsymbol{S}} = \mathbb{E}[ \boldsymbol{x}_{\mathcal{U} \setminus \boldsymbol{S}} \mid \boldsymbol{x}_{\mathcal{U} \setminus \boldsymbol{S}} \sim p(\boldsymbol{x}_{\mathcal{U} \setminus \boldsymbol{S}})]$ denotes the reference values\footnote{\scriptsize Other statistic value can also be adopted for the reference value.} of feature $j \in \mathcal{U} \setminus \boldsymbol{S}$.
We follow this approximation in this work.

% It is widely used in existing works~\cite{lundberg2017unified, wang2021shapley, kokhlikyan2020captum, covert2021improving}, and 

% e.g. $f(\mathcal{U}) = f(x_1, \cdots, x_M)$; $f ( \{ 1, 2 \} ) = f(\bar{x}_1, \bar{x}_2, x_3, \cdots, x_M)$; $f( \varnothing ) = f(\bar{x}_1, \cdots, \bar{x}_M)$.

% setting $x_j = \bar{x}_j$ for $j \in \mathcal{U} \setminus \mathcal{S}$\footnote{The features not belonging to $\mathcal{S}$ takes baseline value such that $f( \varnothing ) = f(\bar{x}_1, \cdots, \bar{x}_M)$, where $\bar{x}_j$ denotes the baseline value of feature $j$.}

\subsection{Shapley Value}

Shapley value regards the input features to the DNN model as the \emph{cooperators}, and estimates the feature contribution from the perspective of cooperative game theory~\cite{kuhn1953contributions}.
Specifically, it assigns an importance value $\phi_i(f_v, \mathcal{U})$ to indicate the contribution of feature~$i \in \mathcal{U}$ to the DNN prediction, and formalizes the feature contribution following the brute-force algorithm in Equation~(\ref{eq:shapley_value}).
According to Equation~(\ref{eq:shapley_value}), Shapley value adopts the preceding difference $f_v(\{ i \} \cup \boldsymbol{S}) - f_v(\boldsymbol{S})$ to indicate the contribution of feature~$i$ considering the features in coalition $\boldsymbol{S}$, and enumerates the feature coalitions throughout the cooperators, which are the $M$ input features.
The average preceding difference considering all possible feature coalitions indicates the contribution of feature~$i$,
\begin{equation}
\setlength\abovedisplayskip{1mm}
\setlength\belowdisplayskip{1mm}
\label{eq:shapley_value}
    \phi_i (f_v, \mathcal{U}) \!=\! \frac{1}{M} \!\!\!\! \sum_{\boldsymbol{S} \subseteq \mathcal{U} \setminus \{ i \}} \!\!\!\! \binom{M \!-\! 1}{|\boldsymbol{S}|}^{-1} \!\!\!\!\! \big[ f_v(\{ i \} \cup \boldsymbol{S}) \!-\! f_v(\boldsymbol{S}) \big].
    % \nonumber
    % \frac{|\boldsymbol{S}|! (M - |\boldsymbol{S}| - 1)!}{M!}
\end{equation}

The brute-force algorithm takes all of the $M$ features as the cooperators, and relies on $2^M$ times of model evaluation to estimate the contribution of feature $i$, which has the computational complexity exponentially growing with the feature number $T [ \phi_i (f, \mathcal{U}) ] = O(2^M)$.
In this work, we propose a low-complexity estimation of the feature contribution $\{ \hat{\phi}_i \}_{1 \leq i \leq M}$, where the contribution of each feature $\hat{\phi}_i$ is estimated based on $N$ times of model evaluation, and we have $N \ll 2^M$.
The estimation aims to minimize the following absolute error taking the ground-truth Shapley value~(GT-Shapley value) as the reference,
\begin{equation}
\setlength\abovedisplayskip{0mm}
\setlength\belowdisplayskip{0mm}
\label{eq:phi_hat_objective}
\min \sum_{i=1}^M |\phi_i(f_v, \mathcal{U}) - \hat{\phi}_i|,
\end{equation}
where $\hat{\phi}_i$ denotes the estimated contribution of feature $i$. % , for $1 \leq i \leq M$.

% formalizes the overall contribution of feature $i \in \mathcal{U}$ following the brute-force enumeration in Equation~(\ref{eq:shapley_value}),

% The concept Shapley value derives from cooperative game theory~\cite{kuhn1953contributions}.
% For each input feature, it regards other features as the \emph{cooperators}, from the perspective of cooperative game theory~\cite{kuhn1953contributions}.
% The preceding differences are calculated for all possible feature coalitions $\boldsymbol{S} \subseteq \mathcal{U} \setminus \{ i \}$ since the effect of withholding a feature depends on all of the remaining features, and the average value implies the overall contribution of feature $i$,

% it assigns an importance value $\phi_i(f_v, \mathcal{U})$ to each feature $i \in \mathcal{U}$ to indicate its contribution. In such a manner, Shapley value formalizes the feature contribution following Equation~(\ref{eq:shapley_value}), where the preceding difference $f_v(\{ i \} \cup \boldsymbol{S}) - f_v(\boldsymbol{S})$ indicates the contribution of feature $i$ to the value function considering the features in coalition $\boldsymbol{S}$.

% where $\sum_{\boldsymbol{S} \subseteq \mathcal{U} \setminus \{ i \}} \bullet$ relies on $2^{M-1}$ times of model evaluation.

% A challenge of Shapley value estimation is its computational complexity grows exponentially with the feature number, which limits its application to the real-world scenarios.

% In this work, we address this problem from a different perspective.

% \subsection{Kernel SHAP}

\section{Shapley Chain Rule}
\label{sec:chain_rule}

In this section, we propose \emph{Shapley Chain Rule} in Theorem~\ref{theorem:shapley_chain_rule} to provide theoretical instructions for the Shapley explanation acceleration.
Note that the brute-force algorithm considers all of the $M$ input features as the cooperators, which leads the complexity of $\phi_i(f_v, \mathcal{U})$ to grow exponentially with the feature number.
Shapley chain rule provides the estimation of feature contribution based on only a few features as the cooperators to significantly reduce the computational complexity.
We give the proof of Theorems~\ref{theorem:shapley_chain_rule} and~\ref{prop:error_term_bound} in Appendix~\ref{sec:proof_theorem_1} and~\ref{sec:proof_theorem_2}, respectively.

% and the Shapley chain rule provides a theoretical instruction to bridge $\phi_i(f_v, \mathcal{U})$ to $\phi_i(f_v, \mathcal{U} \setminus \{j\})$, where $j \in \mathcal{U} \setminus \{i\}$.

% relax the dependency on the enumeration of feature subset over the whole feature space in Shapley value estimation, and provide the theoretical upper bound of the error term of using partial feature space for Shapley value estimation.
% The high computational complexity of Shapley Value estimation derives from the enumeration of feature subset over the whole feature space.
% To address this problem, we propose Theorem~\ref{theorem:shapley_chain_rule} to bridge the feature contribution that is estimate throughout the 
% separate feature $j$ from the contribution of feature $i$ for any $i \neq j \in \mathcal{U}$, such that the feature contributions can be estimated in certain feature subspace, which can significantly reduce the complexity. 

\begin{theorem}[Shapley Chain Rule]
\label{theorem:shapley_chain_rule}
For any differentiable value function $f_v: 2^M \to \mathbb{R}$, the contribution of feature $i$ to $f_v(\mathcal{U})$ satisfies 
\begin{equation}
\setlength\abovedisplayskip{2mm}
\setlength\belowdisplayskip{2mm}
% \boldsymbol{x} = [x_1, \cdots, x_M]$,
% &\phi_i[f(x_1, \cdots, x_M)]
% \\
% &= \phi_i[f (x_1, \cdots, x_{j-1}, x_{j+1}, \cdots, x_M \mid x_j )] - \Delta_{i,j}  + o(\Delta_{i,j})
% \phi_i[f( \boldsymbol{x} )] = \phi_i[ f (\boldsymbol{x}) \setminus \{j\} ] - \Delta_{i,j}  + o(\Delta_{i,j}),
\phi_i( f_v, \mathcal{U} ) = \phi_i( f_v, \mathcal{U} \setminus \{j\} ) + \Delta_{i,j} + o_{i,j}, % + o(\Delta_{i,j}),
\nonumber
\end{equation}
where $j \in \mathcal{U} \setminus \{i\}$ denotes another feature; $\phi_i( f_v, \mathcal{U} \setminus \{j\} )$ denotes the contribution of feature $i$ to $f_v(\mathcal{U} \setminus \{j\})$; $o_{i,j} \!=\! o(x_i \!-\! \bar{x}_i) \!+\! o(x_j \!-\! \bar{x}_j)$; and the error term $\Delta_{i,j}$ is given by
\begin{equation}
\begin{aligned}\\[-6mm]
\Delta_{i,j} = &(x_i - \bar{x}_i) (x_j - \bar{x}_j) 
\\
&\!\!\!\!\! \sum_{\boldsymbol{S} \subseteq \mathcal{U} \setminus \{ i,j \}} \!\!\!\!\! \frac{ \nabla_{i,j}^2 f_v( \boldsymbol{S} \!\cup\! \{ i,j \}) \!+\! \nabla_{j,i}^2 f_v( \boldsymbol{S} \!\cup\! \{ i,j \} ) }{2 (M - |\boldsymbol{S}| - 1) \binom{M}{|\boldsymbol{S}|+1}},
% \Delta_{i,j} = (x_i - \bar{x}_i) (x_j - \bar{x}_j) \!\!\!\!\! \sum_{\mathcal{V} \subseteq \mathcal{U} \setminus \{ i,j \}} \!\!\!\!\! \frac{ \nabla_{i,j}^2 f_v( \mathcal{U} \setminus \mathcal{V} ) + \nabla_{j,i}^2 f_v( \mathcal{U} \setminus \mathcal{V} ) }{2 |\mathcal{V}| \binom{M - |\mathcal{V}|}{M}},
\nonumber
\end{aligned}
\end{equation}\\[-4mm]
where $x_i$ denotes the value of feature $i$; $\bar{x}_i$ denotes the reference value of feature $i$; and $\nabla_{i,j}^2 f_v ( \boldsymbol{S} ) = \frac{\partial^2 f ( \boldsymbol{x}_{\boldsymbol{S}},  \bar{\boldsymbol{x}}_{\mathcal{U} \setminus \boldsymbol{S}} )}{\partial x_i \partial x_j}$ denotes the cross-gradient towards $x_i$ and $x_j$. 
% $\boldsymbol{x} \setminus \mathcal{K}$ denotes setting $x_k = \bar{x}_k$ for $k \in \mathcal{K}$
\end{theorem}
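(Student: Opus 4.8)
The plan is to start from the exact definition in Equation~(\ref{eq:shapley_value}) for both ground sets $\mathcal{U}$ and $\mathcal{U}\setminus\{j\}$ and reduce their difference to an explicit weighted sum of mixed differences, which is then Taylor expanded. First I would split the coalitions $\boldsymbol{S}\subseteq\mathcal{U}\setminus\{i\}$ appearing in $\phi_i(f_v,\mathcal{U})$ into those with $j\notin\boldsymbol{S}$ and those with $j\in\boldsymbol{S}$; writing $\boldsymbol{S}=\boldsymbol{S}'\cup\{j\}$ in the latter case reindexes both sums over the common ground set $\boldsymbol{S}\subseteq\mathcal{U}\setminus\{i,j\}$. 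Introducing the two marginal contributions $A_{\boldsymbol{S}}=f_v(\{i\}\cup\boldsymbol{S})-f_v(\boldsymbol{S})$ and $B_{\boldsymbol{S}}=f_v(\{i,j\}\cup\boldsymbol{S})-f_v(\{j\}\cup\boldsymbol{S})$, both $\phi_i(f_v,\mathcal{U})$ and $\phi_i(f_v,\mathcal{U}\setminus\{j\})$ become linear combinations of the $A_{\boldsymbol{S}}$ and $B_{\boldsymbol{S}}$ with binomial weights depending only on $|\boldsymbol{S}|$.

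The key step is a combinatorial identity. Subtracting the two expressions and collecting coefficients, I expect the weight on $A_{\boldsymbol{S}}$, namely $\frac{1}{M}\binom{M-1}{|\boldsymbol{S}|}^{-1}-\frac{1}{M-1}\binom{M-2}{|\boldsymbol{S}|}^{-1}$, to simplify to $-\frac{(|\boldsymbol{S}|+1)!\,(M-|\boldsymbol{S}|-2)!}{M!}$, which is exactly the negative of the weight $\frac{1}{M}\binom{M-1}{|\boldsymbol{S}|+1}^{-1}$ on $B_{\boldsymbol{S}}$. This collapses the difference to $\phi_i(f_v,\mathcal{U})-\phi_i(f_v,\mathcal{U}\setminus\{j\})=\sum_{\boldsymbol{S}\subseteq\mathcal{U}\setminus\{i,j\}}\frac{B_{\boldsymbol{S}}-A_{\boldsymbol{S}}}{(M-|\boldsymbol{S}|-1)\binom{M}{|\boldsymbol{S}|+1}}$, exactly matching the denominator appearing in $\Delta_{i,j}$; verifying the rewriting $\frac{(|\boldsymbol{S}|+1)!\,(M-|\boldsymbol{S}|-2)!}{M!}=\frac{1}{(M-|\boldsymbol{S}|-1)\binom{M}{|\boldsymbol{S}|+1}}$ is a routine factorial computation.

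The final step is the Taylor expansion. Using the approximation $f_v(\boldsymbol{S})\approx f(\boldsymbol{x}_{\boldsymbol{S}},\overline{\boldsymbol{x}}_{\mathcal{U}\setminus\boldsymbol{S}})$ from Equation~(\ref{eq:value_func_approx}), I would regard $B_{\boldsymbol{S}}-A_{\boldsymbol{S}}$ as the second-order mixed difference of $f$ in the two coordinates $x_i,x_j$, obtained by toggling each between its reference value $\bar{x}_i,\bar{x}_j$ and its actual value $x_i,x_j$ with the remaining coordinates held fixed. A second-order Taylor expansion makes the pure first- and second-order terms in $(x_i-\bar{x}_i)$ and $(x_j-\bar{x}_j)$ cancel pairwise, leaving $(x_i-\bar{x}_i)(x_j-\bar{x}_j)$ times the cross second derivative plus a higher-order remainder; symmetrizing the mixed partial (equal for smooth $f$ by Clairaut's theorem) produces the numerator $\tfrac12\big(\nabla^2_{i,j}f_v+\nabla^2_{j,i}f_v\big)$ of $\Delta_{i,j}$. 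Summing over $\boldsymbol{S}$ with the weights from the previous step then yields $\Delta_{i,j}$.

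I expect the main obstacle to be the bookkeeping of the remainder $o_{i,j}$: the cross derivative in $\Delta_{i,j}$ is evaluated at the \emph{actual} values of $x_i,x_j$ (i.e.\ on $\boldsymbol{S}\cup\{i,j\}$), whereas the clean Taylor cancellation most naturally anchors the expansion at the reference point, so I must absorb both the shift of the evaluation point and the cubic-and-higher Taylor residual into $o_{i,j}=o(x_i-\bar{x}_i)+o(x_j-\bar{x}_j)$ and argue these are negligible relative to the retained quadratic term when $\boldsymbol{x}$ is close to $\overline{\boldsymbol{x}}$.
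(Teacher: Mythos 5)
Your proposal is correct and follows essentially the same route as the paper's proof: both split the coalitions in $\phi_i(f_v,\mathcal{U})$ according to whether they contain $j$, use the same combinatorial weight identity (you state it as the $A_{\boldsymbol{S}}$-coefficient being the negative of the $B_{\boldsymbol{S}}$-coefficient, the paper equivalently as $\frac{1}{M}[\binom{M-1}{|\boldsymbol{S}|}^{-1}+\binom{M-1}{|\boldsymbol{S}|+1}^{-1}]=\frac{1}{M-1}\binom{M-2}{|\boldsymbol{S}|}^{-1}$), and extract the cross term via a second-order Taylor expansion in $x_i,x_j$ about the reference values. Your closing remark about absorbing the shift of the evaluation point of the cross-derivative into $o_{i,j}$ addresses a subtlety the paper itself passes over silently.
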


% Intuitively, the absolute value of error term $|\Delta_{i,j}|$ reflects the strength of cooperation between features $i$ and $j$.
% Little $|\Delta_{i,j}|$ indicates weak cooperation, and feature $j$ can be separated from the estimation of $\phi_i$, i.e. to simplify $\phi_i( f_v, \mathcal{U} )$ into $\phi_i( f_v, \mathcal{U} \setminus \{j\} )$.

% \begin{proof}
% We give the proof in the appendix.
% \end{proof}

\begin{remark}[Complexity Reduction]
\label{rk:complexity_reduce}
The computational complexity of $\phi_i( f_v, \mathcal{U} \setminus \{j\})$ equals to the half of $\phi_i( f_v, \mathcal{U})$.
\end{remark}

% separate feature $j$ in the estimation of the contribution of feature $i$, i.e. to simplify $\phi_i( f_v, \mathcal{U} )$ into $\phi_i( f_v, \mathcal{U} \setminus \{ j \} )$.
% However, according to Theorem~\ref{theorem:shapley_chain_rule}, it inevitable to introduce the error term to the estimation.

According to Remark~\ref{rk:complexity_reduce}, the computational complexity can be significantly reduced if we remove feature $j$ from the cooperators such that the contribution of feature $i$ can be estimated by $\phi_i( f_v, \mathcal{U} \setminus \{ j \} )$.
However, it causes a significant estimation error according to Theorem~\ref{theorem:shapley_chain_rule}.
To reduce the complexity but without loss of accuracy, we propose Theorem~\ref{prop:error_term_bound} to bound the error term.
\begin{theorem}[Upper Bound of Error Term]
\label{prop:error_term_bound}
For any features $i \neq j \in \mathcal{U}$, the upper bound of the absolute gap between $\phi_i( f_v, \mathcal{U} )$ and $\phi_i( f_v, \mathcal{U} \setminus \{ j \} )$ is given by
\begin{equation}
\setlength\abovedisplayskip{2mm}
\setlength\belowdisplayskip{2mm}
\label{eq:error_upper_bound}
\big| \phi_i( f_v, \mathcal{U} ) \!-\! \phi_i( f_v, \mathcal{U} \setminus \{ j \} )  \big| \!\leq\! \epsilon_{i,j} |x_i \!-\! \bar{x}_i| |x_j \!-\! \bar{x}_j|,
\end{equation}
where $\epsilon_{i,j}$ relies on the gradient towards $x_i$ and $x_j$ by
\begin{equation}
\setlength\abovedisplayskip{1mm}
\setlength\belowdisplayskip{2mm}
\label{eq:epsilon}
\epsilon_{i,j} \!=\! \max\limits_{\mathcal{V} \subseteq \mathcal{U} \setminus \{ i,j \}} \frac{1}{4} \big| \nabla_{i,j}^2 f_v (\mathcal{U} \setminus \mathcal{V}) \!+\! \nabla_{j,i}^2 f_v (\mathcal{U} \setminus \mathcal{V}) \big|.
\end{equation}
\end{theorem}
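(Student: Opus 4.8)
The plan is to start directly from the Shapley chain rule of Theorem~\ref{theorem:shapley_chain_rule}, which already rewrites the quantity of interest as $\phi_i(f_v, \mathcal{U}) - \phi_i(f_v, \mathcal{U} \setminus \{j\}) = \Delta_{i,j} + o_{i,j}$. To leading order in the feature perturbations I would discard the higher-order remainder $o_{i,j} = o(x_i - \bar{x}_i) + o(x_j - \bar{x}_j)$ and bound $|\Delta_{i,j}|$, so the whole argument reduces to controlling the explicit sum appearing in $\Delta_{i,j}$.

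First I would pull the common factor $(x_i - \bar{x}_i)(x_j - \bar{x}_j)$ out of $\Delta_{i,j}$, apply the triangle inequality over the subsets $\boldsymbol{S} \subseteq \mathcal{U} \setminus \{i,j\}$, and replace each cross-gradient combination by its magnitude. The key observation is that for every $\boldsymbol{S}$ the argument $\boldsymbol{S} \cup \{i,j\}$ can be written as $\mathcal{U} \setminus \mathcal{V}$ with $\mathcal{V} = \mathcal{U} \setminus (\boldsymbol{S} \cup \{i,j\}) \subseteq \mathcal{U} \setminus \{i,j\}$. Hence by the definition of $\epsilon_{i,j}$ in Equation~(\ref{eq:epsilon}) each term satisfies $| \nabla_{i,j}^2 f_v(\boldsymbol{S} \cup \{i,j\}) + \nabla_{j,i}^2 f_v(\boldsymbol{S} \cup \{i,j\}) | \leq 4\epsilon_{i,j}$ uniformly in $\boldsymbol{S}$. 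This detaches the gradient factor from the summation and leaves a purely combinatorial sum multiplied by $2\epsilon_{i,j} |x_i - \bar{x}_i| |x_j - \bar{x}_j|$.

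The crux is then to evaluate the remaining combinatorial quantity $\sum_{\boldsymbol{S} \subseteq \mathcal{U} \setminus \{i,j\}} [ (M - |\boldsymbol{S}| - 1) \binom{M}{|\boldsymbol{S}|+1} ]^{-1}$. I would group subsets by their cardinality $s = |\boldsymbol{S}|$, of which there are $\binom{M-2}{s}$, and simplify the ratio $\binom{M-2}{s} / \binom{M}{s+1}$ by expanding factorials; this collapses to $(s+1)(M-s-1) / [M(M-1)]$. Cancelling the factor $M-s-1$ against the denominator, each cardinality contributes $(s+1)/[M(M-1)]$, so that
\[
\sum_{s=0}^{M-2} \frac{s+1}{M(M-1)} = \frac{1}{M(M-1)} \sum_{k=1}^{M-1} k = \frac{1}{2}.
\]
Multiplying this constant $\tfrac{1}{2}$ against the prefactor $2\epsilon_{i,j}$ yields exactly $\epsilon_{i,j} |x_i - \bar{x}_i| |x_j - \bar{x}_j|$, which is the claimed bound.

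I expect the combinatorial evaluation to be the main obstacle, since the remaining steps amount to a routine triangle inequality together with the definition of $\epsilon_{i,j}$. A secondary subtlety worth flagging is the treatment of the remainder $o_{i,j}$: the stated inequality holds to leading order in $|x_i - \bar{x}_i|$ and $|x_j - \bar{x}_j|$, so I would make explicit that the higher-order term is absorbed in the regime where the feature perturbations are small.
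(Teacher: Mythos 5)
Your proposal is correct and follows essentially the same route as the paper's own proof: drop the remainder $o_{i,j}$, bound the cross-gradient in each term of $\Delta_{i,j}$ by $4\epsilon_{i,j}$ via the observation that $\boldsymbol{S}\cup\{i,j\}=\mathcal{U}\setminus\mathcal{V}$, and evaluate the same combinatorial sum (the paper computes $\sum_{\boldsymbol{S}}2/[(M-|\boldsymbol{S}|-1)\binom{M}{|\boldsymbol{S}|+1}]=1$, which is your $\tfrac12$ times the prefactor $2$). Your explicit flag that the inequality only holds after absorbing the higher-order remainder matches the paper's own caveat.
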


% \begin{proof}
% We give the proof in the appendix.
% \end{proof}

\begin{remark}
\label{rk:cumulative_error}
For any feature subset $\mathcal{S} \subseteq \mathcal{U} \setminus \{ i \}$, the absolute gap between $\phi_i( f_v, \mathcal{U} )$ and $\phi_i( f_v, \mathcal{S} \cup \{ i \} )$ is bounded by
\begin{equation}
\setlength\abovedisplayskip{2mm}
\setlength\belowdisplayskip{1mm}
\label{eq:upper_bound_error_group}
\! \big| \phi_i( f_v, \mathcal{U} ) \!-\! \phi_i( f_v, \mathcal{S} \!\cup\! \{ i \} ) \big| \!\leq\! \!\!\!\!\!\!\!\! \sum_{j \in \mathcal{U} \setminus \mathcal{S} \setminus \{ i \} } \!\!\!\!\!\!\!\! \epsilon_{i,j} |x_i \!-\! \bar{x}_i| |x_j \!-\! \bar{x}_j|. \!\!\!\!\!\!\!\!
\end{equation}
%  where $\phi_i[f( \boldsymbol{x} ) \setminus \mathcal{J} ]$ denotes fixing the value of feature $j$ to $x_j$ for $j \in \mathcal{J}$; 

\end{remark}

% In order to significantly reduce the computational complexity, we adopt $\phi_i [f( \mathcal{U} \setminus \mathcal{V}_i ) ]$ to approach $\phi_i [f(\mathcal{U})]$ for $i \in \mathcal{U}$, where the computational complexity reduces for $2^{|\mathcal{V}_i|}$ times.

% Theorem~\ref{prop:error_term_bound} gives the upper bound of the error term in estimating the contribution of feature $i$, where another feature $j$ is separated from the estimation.
% Furthermore, Remark~\ref{rk:cumulative_error} studies the case of cumulative error term, where more than one features are separated.
% Theorem~\ref{prop:error_term_bound} provides the to have $\phi_i (f, \mathcal{S}_i \cup \{ i \})$ to approach $\phi_i (f, \mathcal{U})$, where $\mathcal{S}_i \subset \mathcal{U} \setminus \{ i \}$ 

% Equation~(\ref{eq:S_selection}) such that the feature contribution can be estimated throughout the contributive cooperators such that the computational complexity can be significantly reduces.

% Considering the objective function of feature contribution estimation given by Equation~(\ref{eq:phi_hat_objective}), 
% $\hat{\phi}_i = \phi_i (f, \mathcal{S}_i \cup \{ i \})$, for $1 \leq i \leq M$, where $\mathcal{S}_i$ denotes the contributive cooperators of feature $i$.
% Hence, we have the contributive cooperators $\mathcal{S}_i$ selected by

Note that Equation~(\ref{eq:upper_bound_error_group}) provides the upper bound for the absolute estimation error in Equation~(\ref{eq:phi_hat_objective}).
We take $\hat{\phi}_i = \phi_i (f, \mathcal{S}_i \cup \{ i \})$, for $1 \leq i \leq M$.
In this way, the problem of reducing the estimation complexity can be formulated into selecting the optimal contributive cooperators for each feature to minimize the worst-case absolute estimation error.
According to the upper bound in Equation~(\ref{eq:upper_bound_error_group}), we have
\begin{align} \\[-6mm]
\mathcal{S}_i &= \mathop{\arg\min}_{\mathcal{S} \subseteq \mathcal{U} \setminus \{ i \}} | \phi_i (f_v, \mathcal{U}) - \phi_i (f_v, \mathcal{S} \cup \{ i \}) |,
\nonumber
\\
\label{eq:S_selection_0}
&\sim \mathop{\arg\min}_{\mathcal{S} \subseteq \mathcal{U} \setminus \{ i \}} \sum_{j \in \mathcal{U} \setminus \mathcal{S}} \epsilon_{i,j} |x_i - \bar{x}_i| |x_j - \bar{x}_j|,
\end{align} \\[-4mm]
where we have $|\mathcal{S}_i| = \log_2 \frac{N}{2}$ to constrain the number of contributive cooperators such that the estimation complexity satisfies $T[ \phi_i (f, \mathcal{S}_i \cup \{ i \}) ] = O(N)$, where $N$ denotes the time of model evaluation which depends on the available limited computational resource.

% \begin{equation}
% \setlength\abovedisplayskip{1mm}
% \setlength\belowdisplayskip{1mm}
% \label{eq:S_selection1}
% \mathcal{S}_i = \mathop{\arg\min}_{\mathcal{S} \subset \mathcal{U} \setminus \{ i \}} | \phi_i (f_v, \mathcal{U}) - \phi_i (f_v, \mathcal{S} \cup \{ i \}) |,
% \end{equation}

For the convenience of optimization, we transform the $\arg \min$ problem in Equation~(\ref{eq:S_selection_0}) to $\arg \max$ as the objective function of the contributive cooperator selection as follows,
\begin{equation}
\setlength\abovedisplayskip{1mm}
\setlength\belowdisplayskip{1mm}
\label{eq:S_selection}
\mathcal{S}_i = \mathop{\arg\max}_{\mathcal{S} \subseteq \mathcal{U} \setminus \{ i \} \atop |\mathcal{S}| = \log_2 (N/2)} \sum_{j\in \mathcal{S}} \epsilon_{i,j} |x_i - \bar{x}_i| |x_j - \bar{x}_j|.
\end{equation}

\begin{figure*}
\setlength{\abovecaptionskip}{1mm}
\setlength{\belowcaptionskip}{-3mm}
\centering
\includegraphics[width=0.98\linewidth]{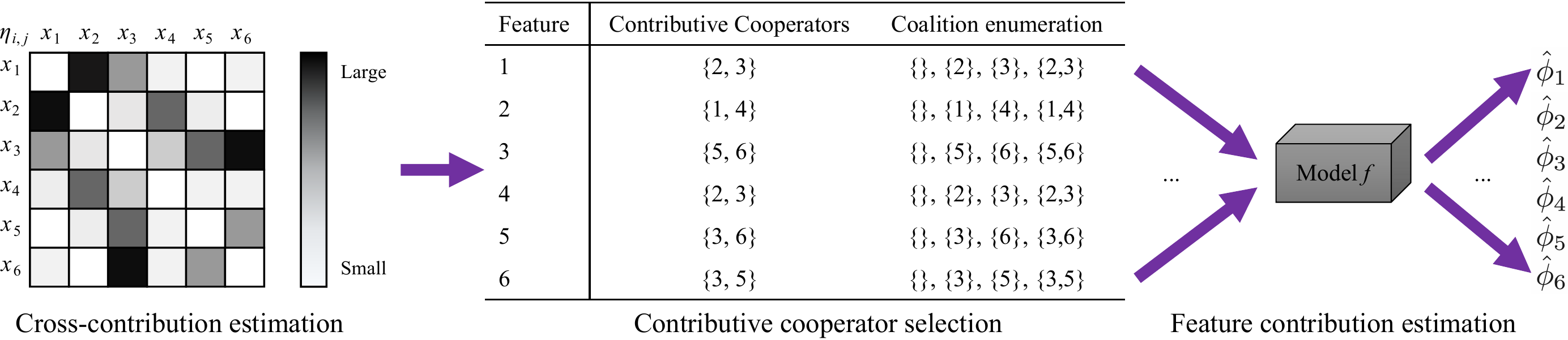}
    \caption{Algorithmic configuration of \text{\Algnameabbr{}}.}
    \label{fig:eff_shap}
\end{figure*}

\section{\Algnameb{}}
\label{sec:eff_shap}

In this section, we propose the  \Algname{}~(\Algnameabbr{}) to provide efficient Shapley explanations for DNN models.
First, we propose the definition of feature \emph{cross-contribution} for \Algnameabbr{} to approximately solve Equation~(\ref{eq:S_selection}).
Second, we adopt antithetical sampling for \Algnameabbr{} to promote the estimation.
Afterwards, we give the details of \Algnameabbr{}.
Finally, the empirical complexity analysis is provided for \Algnameabbr{}.

\subsection{Feature Cross-contribution}

The selection of contributive cooperators is challenging following Equation~(\ref{eq:S_selection}) due to the high computational complexity of $\epsilon_{i,j}$ following Equation~(\ref{eq:epsilon}).
To address this problem, we propose to approximate $\epsilon_{i,j}$ into $\hat{\epsilon}_{i,j} \approx \frac{1}{4} | \nabla^2_{i,j} f_v(\mathcal{U}) + \nabla^2_{j,i} f_v(\mathcal{U}) |$, and propose feature \emph{cross-contribution} in Definition~\ref{def:feature_coop} for reaching the approximately optimal solution of Equation~(\ref{eq:S_selection}) in \Algnameabbr{}.
\begin{define}[Cross-contribution]
\label{def:feature_coop}
For features $i \neq j \in \mathcal{U}$, the cross-contribution of features $i$ and $j$ is defined as  
\begin{equation}
\setlength\abovedisplayskip{1mm}
\setlength\belowdisplayskip{1mm}
\label{eq:cross_contribution}
    \eta_{i,j} = |x_i - \bar{x}_i| \Big| \nabla^2_{i,j} f_v(\mathcal{U}) + \nabla^2_{j,i} f_v(\mathcal{U}) \Big| |x_j - \bar{x}_j|,
\end{equation}
where $\nabla_{i,j}^2 f_v ( \mathcal{U} ) = \frac{\partial^2 f ( \boldsymbol{x} )}{\partial x_i \partial x_j}$ denotes the cross-gradient\footnote{\scriptsize torch.autograd provides APIs to estimate the backward gradient of DNNs.} towards features $i$ and $j$; and we have the cross-gradient satisfying $\nabla_{i,j}^2 f_v ( \mathcal{U} ) = \nabla_{j,i}^2 f_v ( \mathcal{U} )$ for DNNs.

\end{define}

\begin{remark}
Definition~\ref{def:feature_coop} considers a special case $\mathcal{V} \!=\! \varnothing$ for Equation~(\ref{eq:epsilon}) to simplify the computation.
\end{remark}

% \begin{remark}
% Definition~\ref{def:feature_coop} simplifies Equation~(\ref{eq:epsilon}) by considering a special case $\mathcal{V} \!=\! \varnothing$ for the convenience of computation.
% \end{remark}

Intuitively, $\eta_{i,j}$ indicates the strength of cooperation between features $i$ and $j$; and the feature subset that can maximize the cross-contribution with feature $i$ provides an approximately optimal solution for Equation~(\ref{eq:S_selection}).
In such a manner, \Algnameabbr{} selects the contributive cooperators $\mathcal{S}_i$ for each feature $i \in \mathcal{U}$ following 
\begin{equation}
\setlength\abovedisplayskip{1mm}
\setlength\belowdisplayskip{1mm}
\label{eq:S_selection2}
    \mathcal{S}_i = \mathop{\arg\max}_{\mathcal{S} \subseteq \mathcal{U} \setminus \{ i \} \atop |\mathcal{S}| = \log_2 (N/2)} \sum_{j\in \mathcal{S}} \eta_{i,j},
\end{equation}
where we have the constraint $|\mathcal{S}| = \log_2 \frac{N}{2}$ for the $\arg \max$ problem such that $T(\hat{\phi}_i) \!=\! O(N)$;
and the optimal solution of Equation~(\ref{eq:S_selection2}) can be achieved via ranking and greedy search under $O(M \log M)$ complexity.
After this, \Algnameabbr{} estimates the contribution of feature~$i$ following
\begin{equation}
\setlength\abovedisplayskip{1mm}
\setlength\belowdisplayskip{1mm}
\label{eq:eff_shap_wo_as}
    \hat{\phi}_i = \frac{1}{|\mathcal{S}_i|\!+\!1\!} \!\! \sum_{\boldsymbol{S} \subseteq \mathcal{S}_i} \!\! \binom{|\mathcal{S}_i|}{|\boldsymbol{S}|}^{-1} \!\!\!\!\!\! \big[ f_v(\{ i \} \cup \boldsymbol{S}) - f_v(\boldsymbol{S}) \big]. 
\end{equation}

\subsection{Antithetical Sampling}

% \Algnameabbr{} adopts the antithetical sampling~(AS) to promote the estimation.
% As a sampling strategy, AS has been proved to be effective to reduce the variance of the estimation results.

% Instead of sampling from standard i.i.d distribution, AS takes the samples as correlated pairs, where the two samples within each pair are negatively correlated with each other.

Note that Equation~(\ref{eq:eff_shap_wo_as}) totally ignores the influence of non-contributive features $j \! \in \! \mathcal{U} \! \setminus \! \mathcal{S}_i \! \setminus \! \{ i \}$ to the value function, which leads to sub-optimal solutions.
To address this problem but without extra calculation, \Algnameabbr{} employs the antithetical sampling~(AS)~\cite{lomeli2019antithetic} to fix the preceding difference in Equation~(\ref{eq:eff_shap_wo_as}) to promote the estimation.
Specifically, following the enumeration of $\boldsymbol{S} \subseteq \mathcal{S}_i$ in Equation~(\ref{eq:eff_shap_wo_as}) where $|\mathcal{S}_i| \!=\! \log_2 \frac{N}{2}$, let $\boldsymbol{S}_1, \cdots, \boldsymbol{S}_{N/2} \subseteq \mathcal{S}_i$ denote the entirely possible feature coalitions~(i.e. subsets) on $\mathcal{S}_i$ where $\boldsymbol{S}_1 \neq \cdots \neq \boldsymbol{S}_{N/2}$.
In each case of the enumeration where $\boldsymbol{S} \!=\! \boldsymbol{S}_n$, the AS revises the preceding difference $f_v(\{ i \} \! \cup \! \boldsymbol{S}_n) \!-\! f_v(\boldsymbol{S}_n)$ into $f_v(\{ i \} \!\cup\! \boldsymbol{S}_n \!\cup\! \boldsymbol{V}_n) \!-\! f_v(\boldsymbol{S}_n \!\cup\! \boldsymbol{V}_n)$ such that the contribution of feature $i$ is estimated as follows $\!\!\!\!$
\begin{equation}
\setlength\abovedisplayskip{1mm}
\setlength\belowdisplayskip{1mm}
\label{eq:eff_shap_feature_contribution}
    \! \hat{\phi}_i \!=\! \frac{1}{|\mathcal{S}_i|\!+\!1\!} \!\! \sum_{\boldsymbol{S}_n \subseteq \mathcal{S}_i} \!\!\!\! \binom{|\mathcal{S}_i|}{|\boldsymbol{S}|}^{-1} \!\!\!\!\!\!\!\! \big[ f_v \! ( \{i\} \!\cup\! \boldsymbol{S}_n \!\cup\! \boldsymbol{V}_n ) \!-\! f_v \! ( \boldsymbol{S}_n \!\cup\! \boldsymbol{V}_n ) \big]\!, \!\!\!\!
\end{equation}
where $\boldsymbol{V}_n$ satisfies $\boldsymbol{V}_n \cup \boldsymbol{V}_{n'} \subseteq \mathcal{U} \setminus \mathcal{S}_i \setminus \{ i \}$ for $1 \leq n,n' \leq \frac{N}{2}$ and $n + n' = \frac{N}{2}+1$.
% In such a manner, \Algnameabbr{} contributes to efficient estimation of Shapley value via the optimization of both contributive cooperators and non-contributive features. 

% ; and $\boldsymbol{V}_n$ and $\boldsymbol{V}_{N/2+1-n}$ are complementary to each other, i.e. for $1 \leq n \leq N/4$, \Algnameabbr{} takes $\boldsymbol{V}_n$ from standard i.i.d distribution, and takes the remaining $\boldsymbol{V}_n$ following
% \begin{equation}
% \setlength\abovedisplayskip{1mm}
% \setlength\belowdisplayskip{1mm}
% \label{eq:eff_shap_antithetical}
% \boldsymbol{V}_{N/2+1-n} = \mathcal{U} \setminus \mathcal{S}_i \setminus \{ i \} \setminus \boldsymbol{V}_n.
% \end{equation}

An example is given in Table~\ref{tab:antithetical_sample} for $M\!=\!6$, $i\!=\!1$ and $N\!=\!8$.
Assume the optimization following Euqation~(\ref{eq:S_selection2}) achieves $\mathcal{S}_1 \!=\! \{2,3\}$.
The enumeration of $\boldsymbol{S} \subseteq \mathcal{S}_1$ following Equation~(\ref{eq:eff_shap_feature_contribution}) involves $\boldsymbol{S}_1 \!=\! \varnothing$, $\boldsymbol{S}_2 \!=\! \{2\}$, $\boldsymbol{S}_3 \!=\! \{3\}$ and $\boldsymbol{S}_4 \!=\! \{2,3\}$.
\Algnameabbr{} randomly samples the subsets of non-contributive features $\boldsymbol{V}_1 \!=\! \{5\}$ and $\boldsymbol{V}_2 \!=\! \{5,6\}$ from $\mathcal{U} \setminus \mathcal{S}_1 \setminus \{ 1 \} \!=\! \{4,5,6\}$; then adopts the AS to have $\boldsymbol{V}_3 \!=\! \mathcal{U} \setminus \mathcal{S}_1 \setminus \{ 1 \} \setminus \boldsymbol{V}_2 \!=\! \{ 4 \}$ and $\boldsymbol{V}_4 \!=\! \mathcal{U} \setminus \mathcal{S}_1 \setminus \{ 1 \} \setminus \boldsymbol{V}_1 \!=\! \{4,6\}$; finally estimates $\hat{\phi}_1$ following Equation~(\ref{eq:eff_shap_feature_contribution}).

\subsection{Algorithm of \Algnameabbr{}}

The configuration and pseudo code of \Algnameabbr{} are given in Figure~\ref{fig:eff_shap} and Algorithm~\ref{alg:eff_shap}, respectively.
To be concrete, \Algnameabbr{} receives a DNN model $f$ and feature value $x_1, \cdots, x_M$, and outputs the contributions $\hat{\phi}_1, \cdots, \hat{\phi}_M$ of feature $1, \cdots, M$, respectively.
For each feature $i$, \Algnameabbr{} first calculates its cross-contribution $\eta_{i,j}$ with other features $j \neq i \in \mathcal{U}$ following Equation~(\ref{eq:cross_contribution})~(Line 2); then greedily selects the contributive cooperators $\mathcal{S}_i$ following Equation~(\ref{eq:S_selection2})~(Line 3) to maximize the cross-contribution $\sum_{j \in \mathcal{S}_i} \eta_{i,j}$; finally estimates the contribution of feature $i$~(Line 4) throughout the coalitions of contributive cooperators $\boldsymbol{S} \subseteq \mathcal{S}_i$ following Equation~(\ref{eq:eff_shap_feature_contribution}).

% as follows,
% \begin{equation}
% \setlength\abovedisplayskip{1mm}
% \setlength\belowdisplayskip{1mm}
% \label{eq:eff_shap_feature_contribution}
%     \! \hat{\phi}_i \!=\! \frac{1}{|\mathcal{S}_i|} \!\! \sum_{\boldsymbol{S}_n \subseteq \mathcal{S}_i} \!\!\!\! \binom{|\mathcal{S}_i|}{|\boldsymbol{S}|}^{-1} \!\!\!\!\!\!\!\! \big[ f_v \! ( \{i\} \!\cup\! \boldsymbol{S}_n \!\cup\! \boldsymbol{V}_n ) \!-\! f_v \! ( \boldsymbol{S}_n \!\cup\! \boldsymbol{V}_n ) \big]\!, \!\!\!
% \end{equation}
% where $\boldsymbol{V}_n \subseteq \mathcal{U} \setminus \mathcal{S}_i \setminus \{ i \}$ is selected according to the antithetical sampling following Equation~(\ref{eq:eff_shap_antithetical}).

% such that the enumeration of
\Algnameabbr{} enables the estimation of feature contribution to avoid the enumeration of all possible feature coalitions.
In this way, the estimation complexity can be significantly reduced from the brute-force complexity $T[ \phi_i (f_v, \mathcal{U}) ] = O(2^M)$ to $T(\hat{\phi}_i) = O(N)$, where $N \ll 2^M$.
Moreover, the estimation process for the $M$ features can execute independently without dependency on each other, which can be deployed on distributed systems for speeding up.

\begin{algorithm} % [H]
\caption{\Algname{} (\Algnameabbr{})}
\label{alg:eff_shap}
{\bfseries Input:} DNN model $f$, input values $\boldsymbol{x} = [x_1, \cdots, x_M]$.\\
{\bfseries Output:}\mbox{Estimation value of feature contribution $\!\hat{\phi}_1, \! ..., \! \hat{\phi}_M$.}

\begin{algorithmic}[1]

\FOR{$i = 1, 2, \cdots, M$}

\STATE Estimate $\eta_{i,j}$ following Equation~(\ref{eq:cross_contribution}) for $j \!\in\! \mathcal{U} \!\setminus\! \{ i \}$.
% \begin{equation}
%     \eta_{i,j} = |x_i - \bar{x}_i| \Big( \nabla^2_{i,j} f_v(\mathcal{U}) + \nabla^2_{j,i} f_v(\mathcal{U}) \Big) |x_j - \bar{x}_j|
% \nonumber
% \end{equation}

\STATE Select contributive cooperators $\mathcal{S}_i$ via Equation~(\ref{eq:S_selection2}).
% \begin{equation}
% \begin{aligned}
% \mathcal{S}_i = \arg \max\limits_{\mathcal{S} \subset \mathcal{U} \setminus \{ i \} \atop |\mathcal{S}| = \log_2 \! N \!-\! 1} \sum_{j \in \mathcal{S}} \eta_{i,j}.
%     \nonumber
% \end{aligned}
% \end{equation}

\STATE Estimate feature contribution $\hat{\phi}_i$ by Equation~(\ref{eq:eff_shap_feature_contribution}).
% \begin{equation}
% \label{eq:eff_shap_feature_contribution}
%     \hat{\phi}_i = \frac{1}{|\mathcal{S}_i|} \!\! \sum_{\boldsymbol{S}_n \subseteq \mathcal{S}_i} \!\! \binom{|\mathcal{S}_i|}{|\boldsymbol{S}|}^{-1} \!\!\!\! \big[ f_v( \{i\} \!\cup\! \boldsymbol{S}_n \!\cup\! \boldsymbol{V}_n ) \!-\! f_v( \boldsymbol{S}_n \!\cup\! \boldsymbol{V}_n ) \big],
%     \nonumber
% \end{equation}
% \quad where $\boldsymbol{V}_n \subseteq \mathcal{U} \setminus \mathcal{S}_i \setminus \{ i \}$ satisfies Equation~(\ref{eq:eff_shap_antithetical}).

% }

\ENDFOR

\end{algorithmic}
\end{algorithm}

% $\mathbf{V} = \big[ |x_i - \bar{x}_i| \big]$
% $\mathbf{E}$, $e_{i,j} = \epsilon_{i,j} |x_i - \bar{x}_i| |x_j - \bar{x}_j|$

\begin{table}\small % [h]
\setlength{\abovecaptionskip}{1mm}
    \centering
    \caption{An example of antithetical sampling.}
    \begin{tabular}{c|c|c|c|c|c}
    \hline
        $\mathcal{U}$ & $i$ & $\mathcal{S}_i$ & 
        $n$ & 
        $\boldsymbol{S}_n$ & $\boldsymbol{V}_n$ \\
    \hline
        \multirow{4}{*}{\{ 1,2,3,4,5,6 \}} & \multirow{4}{*}{ 1 } & \multirow{4}{*}{\{ 2,3 \}} & 1 & $\varnothing$ & \{ 5 \} \\
        & & & 2 & \{2\} & \{ 5,6 \} \\
        & & & 3 & \{3\} & \{4\} \\
        & & & 4 & \{2,3\} & \{ 4,6 \} \\
    \hline
    \end{tabular}
    \label{tab:antithetical_sample}
\vspace{-3mm}
\end{table}

\subsection{Time Consumption Analysis}

In this section we analyze the time consumption of \Algnameabbr{}.
Generally, the time consumption of backward and forward process of DNNs is much more than that of other operators.
Hence, we only focus on the time-cost of model forward or backward processes in \Algnameabbr{}, and ignore the arithmetic and comparison operators in our analysis.
According to Algorithm~\ref{alg:eff_shap}, \Algnameabbr{} has one backward process to calculate the gradient $\nabla^2_{i,j} f_v(\mathcal{U}) + \nabla^2_{j,i} f_v(\mathcal{U})$ for the estimation of cross-contribution, and $2^{|\mathcal{S}_i|} \!\times\! 2 = N$ forward processes for the estimation of feature contribution following Equation~(\ref{eq:eff_shap_feature_contribution}).
Hence, the estimation for a single feature $\hat{\phi}_i$ has the time consumption given by
\begin{equation}
\setlength\abovedisplayskip{1mm}
\setlength\belowdisplayskip{1mm}
\label{eq:time_complexity}
T_{\text{\Algnameabbr{}}} \approx t_{\text{backward}} + N t_{\text{forward}},
\end{equation}
where $t_{\text{backward}}$ and $t_{\text{forward}}$ denote the time consumption of model backward and forward process, respectively.

Considering the interpretation of a model $f$ that has $M$ input features, the overall time cost increases to $M$ times if the $M$ features are processed consecutively; 
or we can reduce the time cost by simultaneously processing the $M$ input features based on the parallel structure.
We consider the first case in our experiments because this work focuses on algorithmic acceleration instead of the engineering trick.
To indicate the running speed of \Algnameabbr{}, we analyze the algorithmic throughput in Section~\ref{sec:exp_throughput}.

\section{Experiment}

In this section, we conduct experiments to evaluate \Algnameabbr{} by answering the following research questions.
In comparison to state-of-the-art baseline methods, 
\textbf{RQ1}: Does \Algnameabbr{} provide more accurate explanation by comparing with GT-Shapley value?
\textbf{RQ2}: Does \Algnameabbr{} provide more faithful explanations?
\textbf{RQ3}: Does \Algnameabbr{} run faster than the baseline methods?
\textbf{RQ4}: Does the contributive cooperator selection following Equation~(\ref{eq:S_selection2}) contribute to \Algnameabbr{}?

% \begin{itemize}
% \item \textbf{RQ1}: Does \Algnameabbr{} give more accurate estimated Shapley value in terms of baseline comparison?

% \item \textbf{RQ2}: Does \Algnameabbr{} outperform baseline methods to indicate the feature importance ranking?

% \item \textbf{RQ3}: Does \Algnameabbr{} work faster than baseline methods?

% \item \textbf{RQ4}: Does the maximization of cross-contribution following Equation~(\ref{eq:S_selection}) contribute to \Algnameabbr{}?
% \end{itemize}

\subsection{Experiment Setup}

We provide the details about benchmark datasets, baseline methods and the experiment pipeline in this section.

\noindent
\textbf{Dataset}: The experiments involve Census Income, German Credit and Cretio datasets from the areas of social media, finance and recommender systems, respectively.
More details about the datasets are provided in Appendix~\ref{sec:appendix_dataset}.

% The statistics of the datasets are shown in Table~\ref{tab:dataset}. 

\noindent
\textbf{Baseline Methods}: \Algnameabbr{} is compared with five state-of-the-art baseline methods of Shapley value estimation, including Kernel-SHAP~(KS)~\cite{lundberg2017unified}, Kernel-SHAP with Welford algorithm~(KS-WF)~\cite{covert2021improving}, Kernel-SHAP with Pair Sampling~(KS-Pair)~\cite{covert2021improving}, Permutation Sampling~(PS) \cite{mitchell2021sampling} and Antithetical Permutation Sampling~(APS)~\cite{lomeli2019antithetic}.
More details about the baseline methods are provided in Appendix~\ref{sec:appendix_baseline}.

% To validate the effectiveness of \Algnameabbr{}, five state-of-the-art baseline methods are reproduced based on their open-sourced repositories, 

\noindent
\textbf{Implementation Details}:
The experiments on each dataset follow the pipeline of \emph{model training}: training the DNN model; \emph{interpretation benchmark}: adopting the brute-force algorithm to calculate the exact Shapley value as the ground truth explanation for the evaluation; and \emph{interpretation evaluation}: evaluating the interpretation methods.
We give the details about each step in Appendix~\ref{sec:appendix_implementation_details}.

% \subsubsection{Benchmark Datasets, Baseline Methods and Implementation Details}

\begin{figure*}
\setlength{\abovecaptionskip}{1mm}
\setlength{\belowcaptionskip}{-5mm}
\centering
\subfigure[Census Income.]{
\centering
	\begin{minipage}[t]{0.3\linewidth}
		\includegraphics[width=0.99\linewidth]{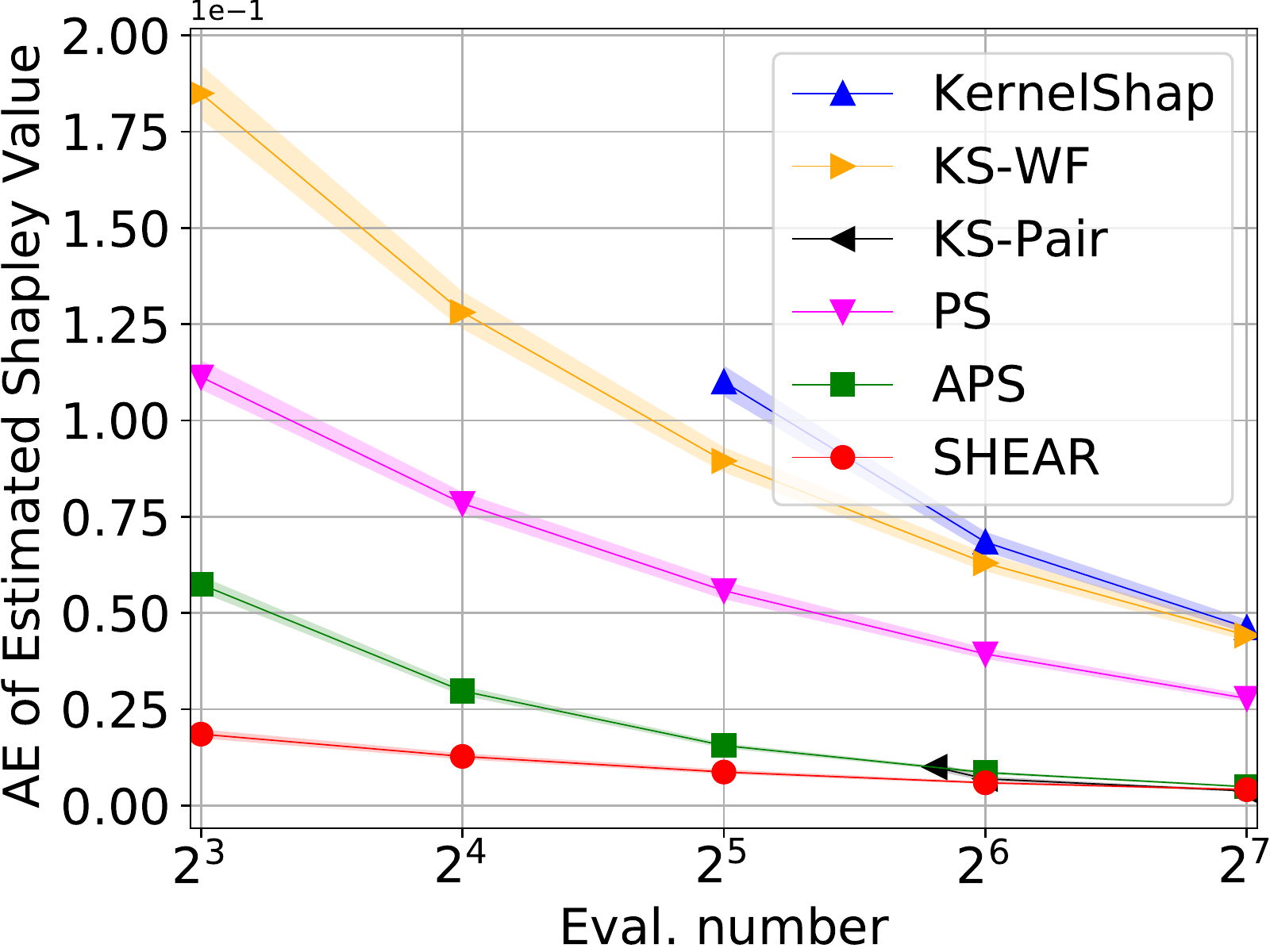}
	\end{minipage}%
}
$\quad$
\subfigure[German Credit.]{
\centering
	\begin{minipage}[t]{0.3\linewidth}
		\includegraphics[width=0.99\linewidth]{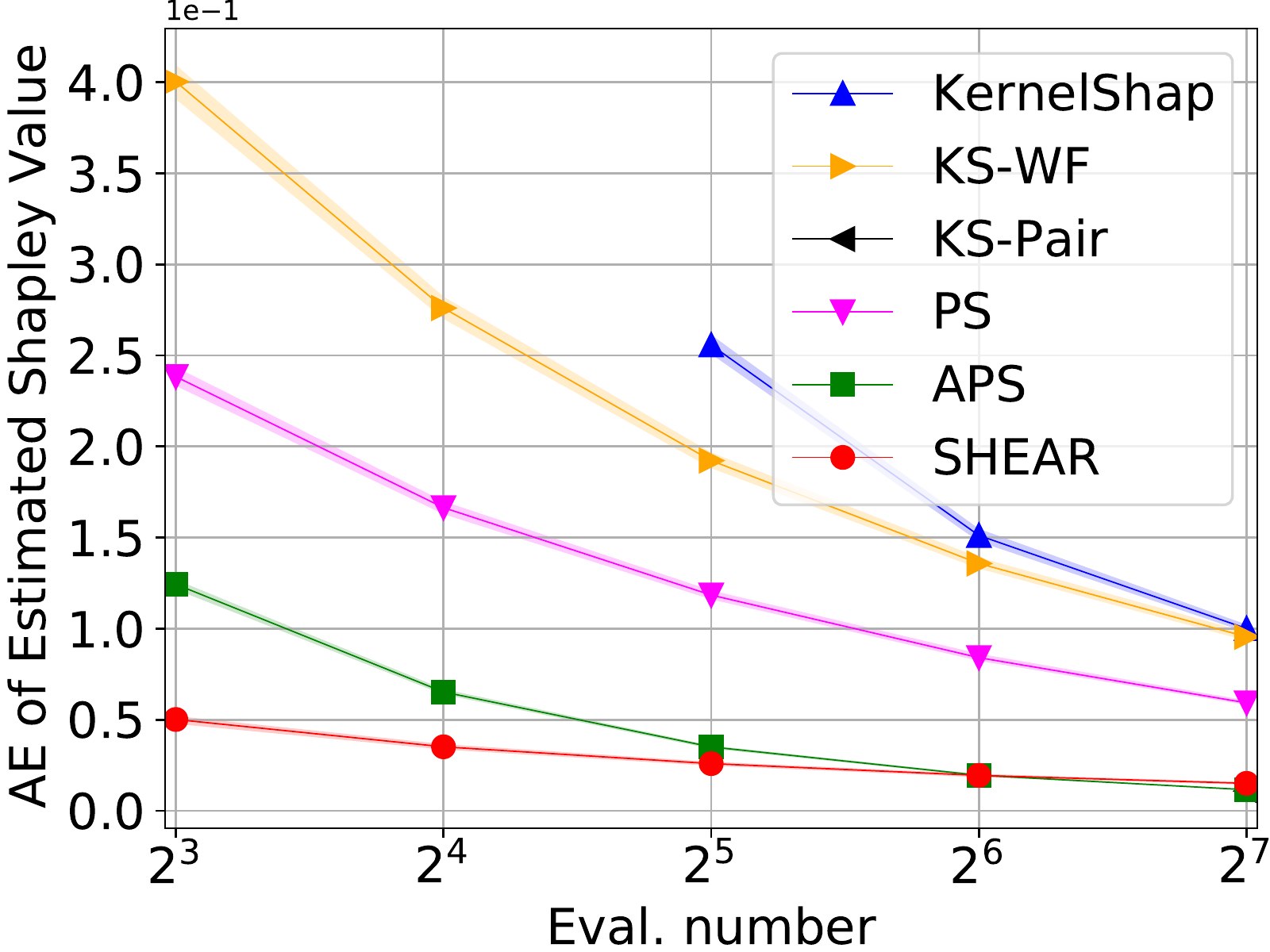}
	\end{minipage}%
}
$\quad$
\subfigure[Cretio.]{
\centering
	\begin{minipage}[t]{0.3\linewidth}
		\includegraphics[width=0.99\linewidth]{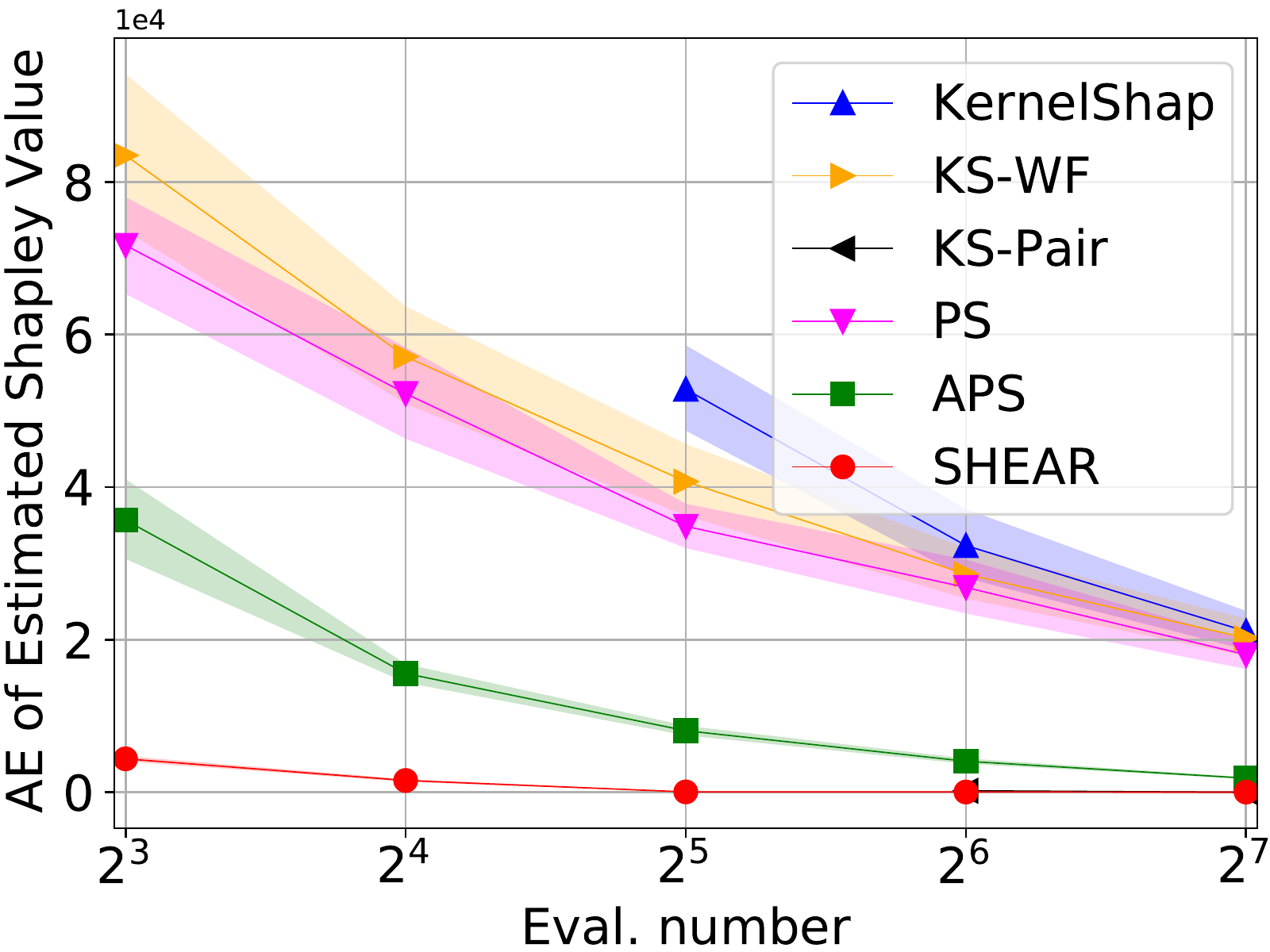}
	\end{minipage}
}
$\quad$
\subfigure[Census Income.]{
\centering
	\begin{minipage}[t]{0.3\linewidth}
		\includegraphics[width=0.99\linewidth]{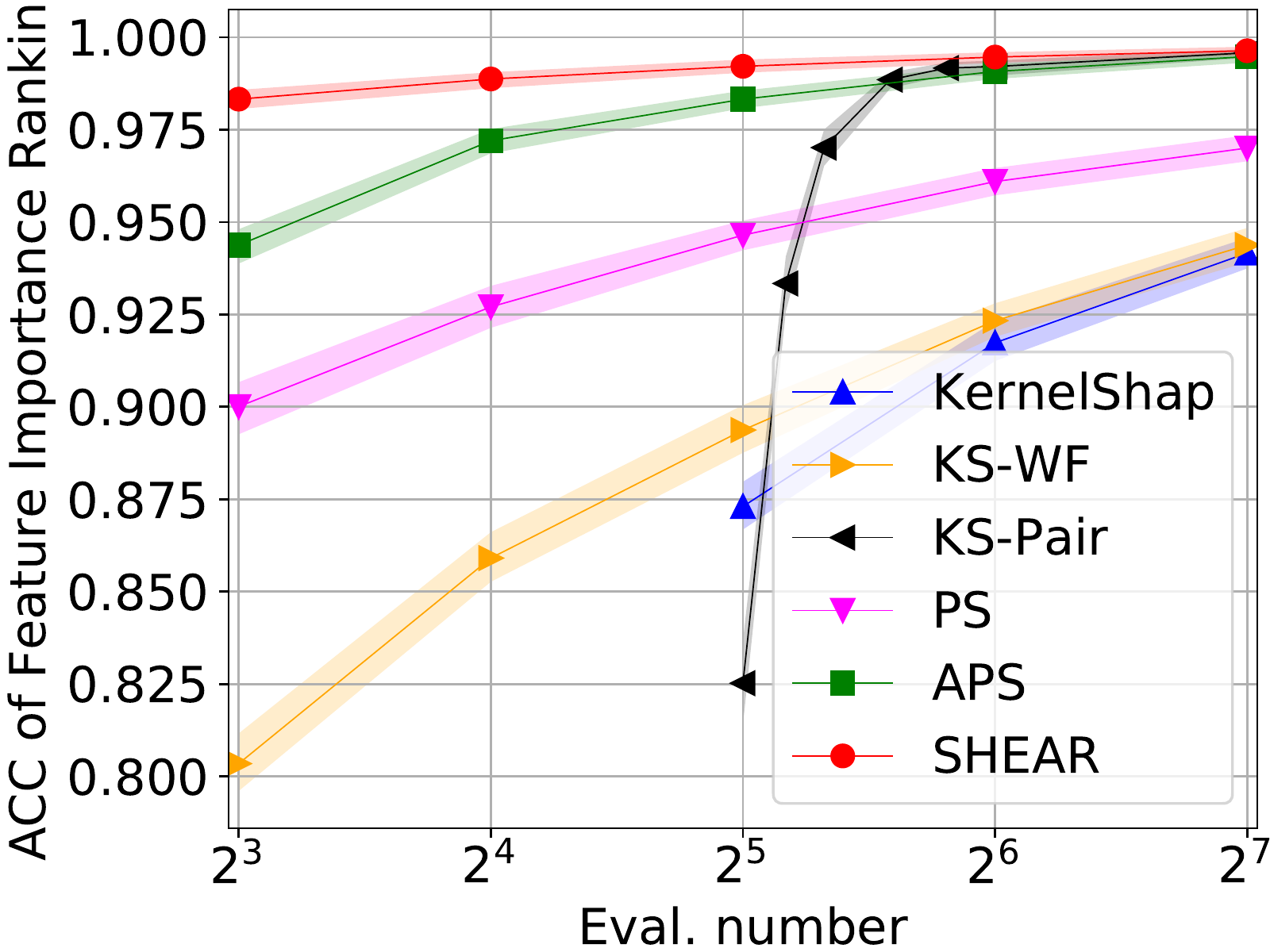}
	\end{minipage}%
}
$\quad$
\subfigure[German Credit.]{
\centering
	\begin{minipage}[t]{0.3\linewidth}
		\includegraphics[width=0.99\linewidth]{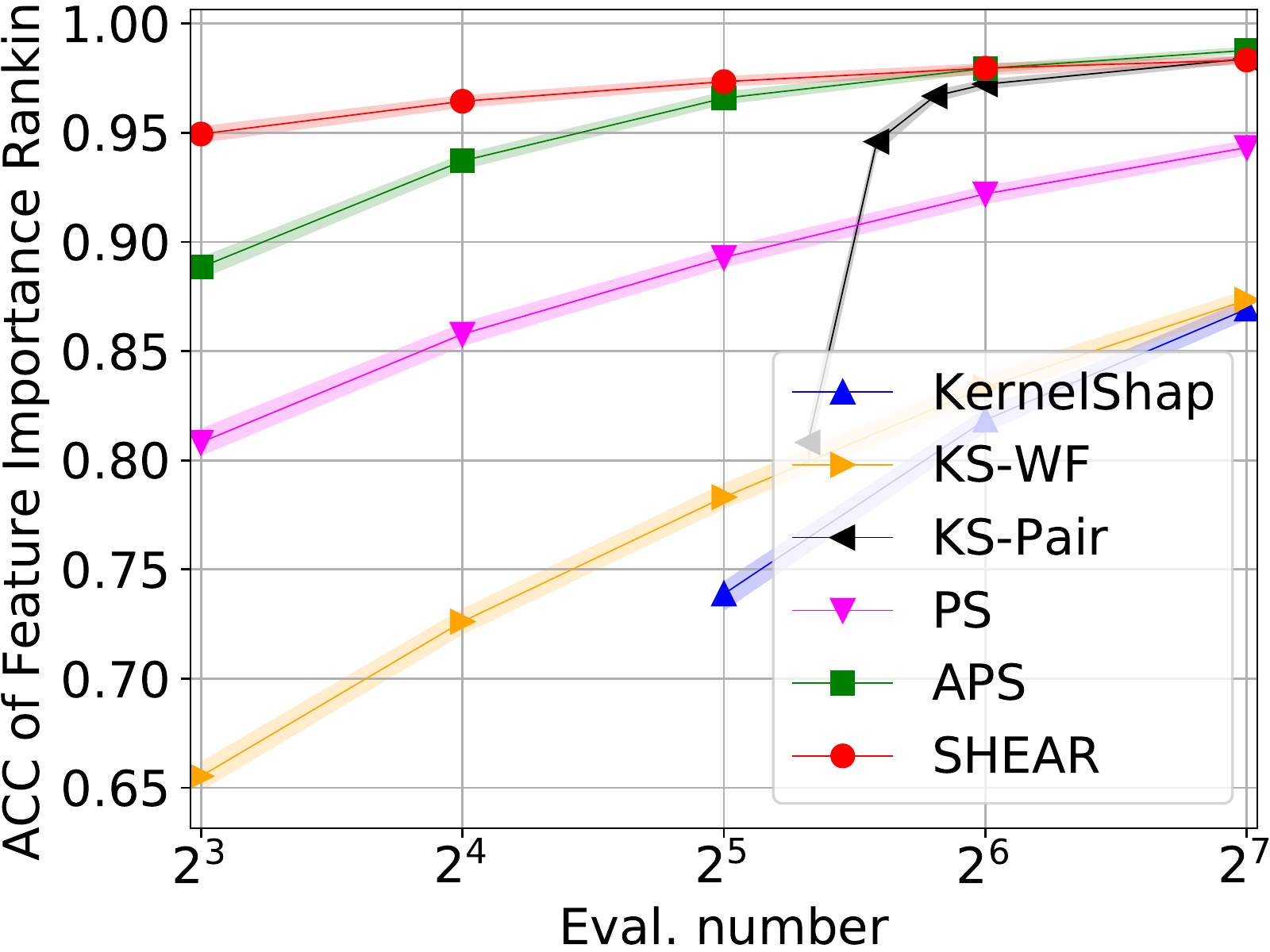}
	\end{minipage}%
}
$\quad$
\subfigure[Cretio.]{
\centering
	\begin{minipage}[t]{0.3\linewidth}
		\includegraphics[width=0.99\linewidth]{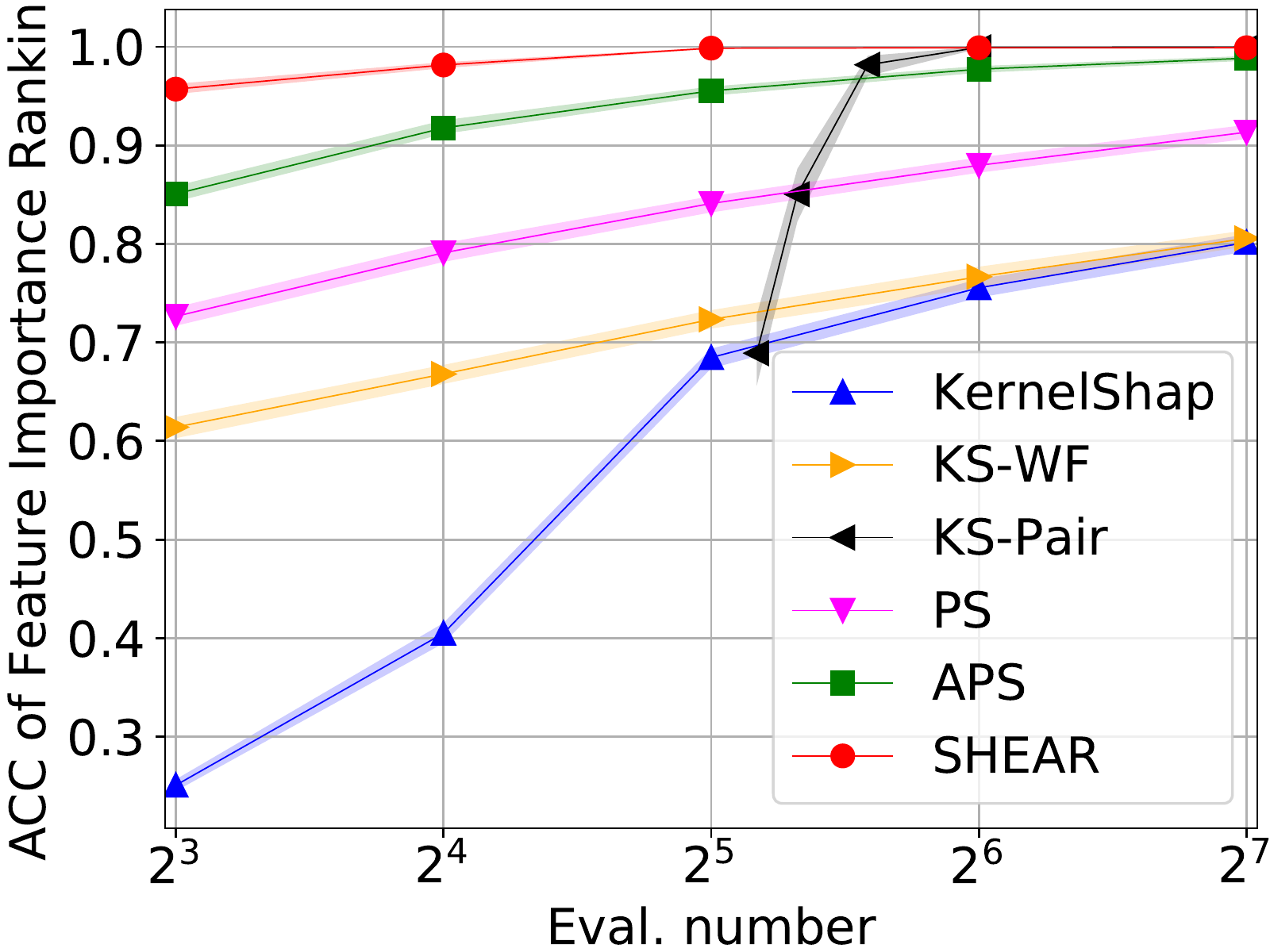}
	\end{minipage}
}
\caption{Absolute estimation error of the feature contribution on the (a) Census Income, (b) German Credit and (c) Cretio datasets; Accuracy of feature importance ranking on the (d) Census Income, (e) German Credit and (f) Cretio datasets.}
\label{fig:interpretation_performance_GT}
\end{figure*}

% \subsubsection{{Implementation Details}}

\subsection{Evaluation with GT-Shapley Value (RQ1)}
\label{sec:GT_eval_metric}

% \subsubsection{{Evaluation Metrics}}

In this section, we evaluate the interpretation methods via taking the GT-Shapley value as the ground truth explanation, and taking the GT-Shapley value ranking as the ground truth feature importance ranking.
Specifically, we consider two metrics to evaluate the interpretation performance: the absolute estimation error~(AE) and accuracy of feature importance ranking~(ACC).
Given the GT-Shapley value $\phi_1, \cdots, \phi_M$ from the brute-force algorithm, the metrics AE and ACC are formulated as follows:

\noindent
\textbf{Absolute estimation error}: For the estimation value of feature contribution $\hat{\phi}_1, \cdots, \hat{\phi}_M$, the absolute estimation error is given by
\begin{equation}
\setlength\abovedisplayskip{-1mm}
\setlength\belowdisplayskip{-1mm}
\mathrm{AE} = \sum_{i=1}^M \mid \phi_i - \hat{\phi}_i \mid.
\nonumber
\end{equation}

\noindent
\textbf{Accuracy of feature importance ranking}: Let $r_1, \cdots, r_M$ and $\hat{r}_1, \cdots, \hat{r}_M$ denote the descending ranking of $\phi_1, \cdots, \phi_M$ and $\hat{\phi}_1, \cdots, \hat{\phi}_M$, respectively.
The accuracy of feature importance ranking~\cite{wojtas2020feature} can be calculated as follows:
\begin{equation}
\setlength\abovedisplayskip{1mm}
\setlength\belowdisplayskip{1mm}
\begin{aligned}
\mathrm{ACC} = \frac{\sum_{m=1}^M \frac{\mathbf{1}_{\hat{r}_m = r_m}}{m}}{\sum_{m=1}^M \frac{1}{m}},
\nonumber
\end{aligned}
\end{equation}
where the factor $\frac{1}{m}$ enables the important features contribute more to the accuracy; and the factor $(\sum_{m=1}^M \frac{1}{m})^{-1}$ normalizes the accuracy such that $0 \leq \text{ACC} \leq 1$.

We give the absolute estimation error versus the times of model evaluation in Figures~\ref{fig:interpretation_performance_GT}~(a)-(c), and the accuracy of feature importance ranking  in Figures~\ref{fig:interpretation_performance_GT}~(d)-(f); we also plot the error bar to show the standard deviation of each method across multiple rounds.
According to the experimental results, we have the following observations: 
\begin{itemize}[leftmargin=10pt, topsep=0pt]
\setlength{\parskip}{1pt}
\setlength{\parsep}{0pt}
\setlength{\itemsep}{0pt}

    \item As the number of model evaluation grows, we observe less absolute estimation error and more accurate feature importance ranking of each method on the three datasets.
    The reason is that more model evaluations enable each method to converge to the GT-Shapley value.
    
    \item With unified model evaluation time, \Algnameabbr{} achieves the least absolute estimation error and the most accurate feature importance ranking.
    The experimental results indicate the efficient utilization of model evaluations enables \Algnameabbr{} to adapt to real-world application, where the available times of model evaluation is far from enough to enumerate all possible input feature coalitions.

    \item Compared with Kernel-SHAP, KS-WF and PS, \Algnameabbr{} gets a WIN-WIN situation, where \Algnameabbr{} shows MORE accurate explanation using FEWER model evaluations.
    
    \item \Algnameabbr{} shows the least standard deviation of the absolute error and accuracy compared with the baseline methods due to the fact that \Algnameabbr{} adopts greedy search to select the contributive cooperators without random initialization.
    The stable interpretation performance of \Algnameabbr{} indicates its robustness towards different models trained on different datasets.

\end{itemize}

% accuracy of estimated shapley value and feature importance ranking

% \subsection{Shapley Value Estimation}
% \subsection{Feature Importance Ranking}

\begin{figure*}
\setlength{\abovecaptionskip}{0mm}
\setlength{\belowcaptionskip}{-6mm}
\centering
\subfigure[Census Income.]{
\centering
	\begin{minipage}[t]{0.3\linewidth}
		\includegraphics[width=0.99\linewidth]{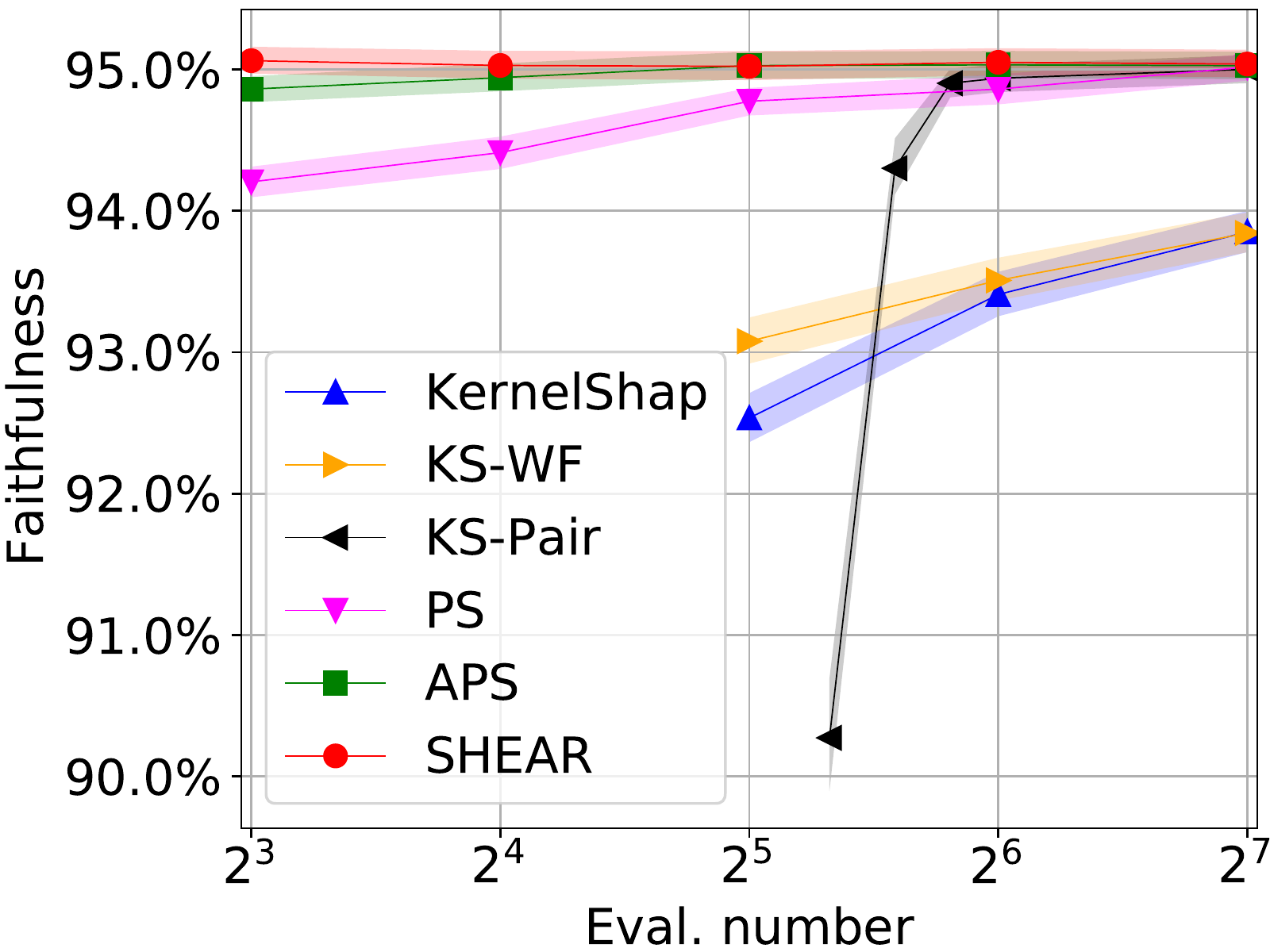}
	\end{minipage}%
}
$\quad$
\subfigure[German Credit.]{
\centering
	\begin{minipage}[t]{0.3\linewidth}
		\includegraphics[width=0.99\linewidth]{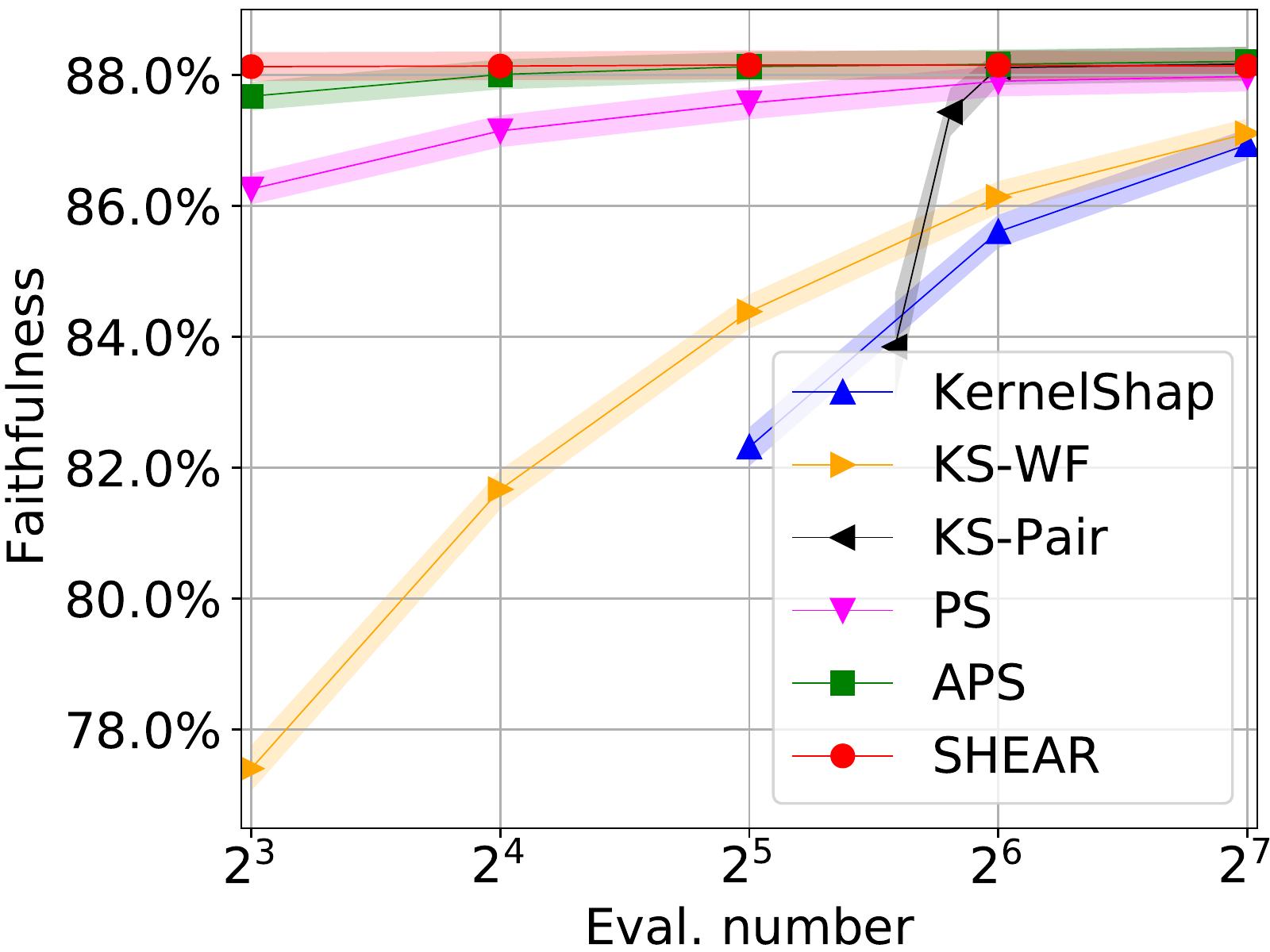}
	\end{minipage}
}
$\quad$
\subfigure[Cretio.]{
\centering
	\begin{minipage}[t]{0.3\linewidth}
		\includegraphics[width=0.99\linewidth]{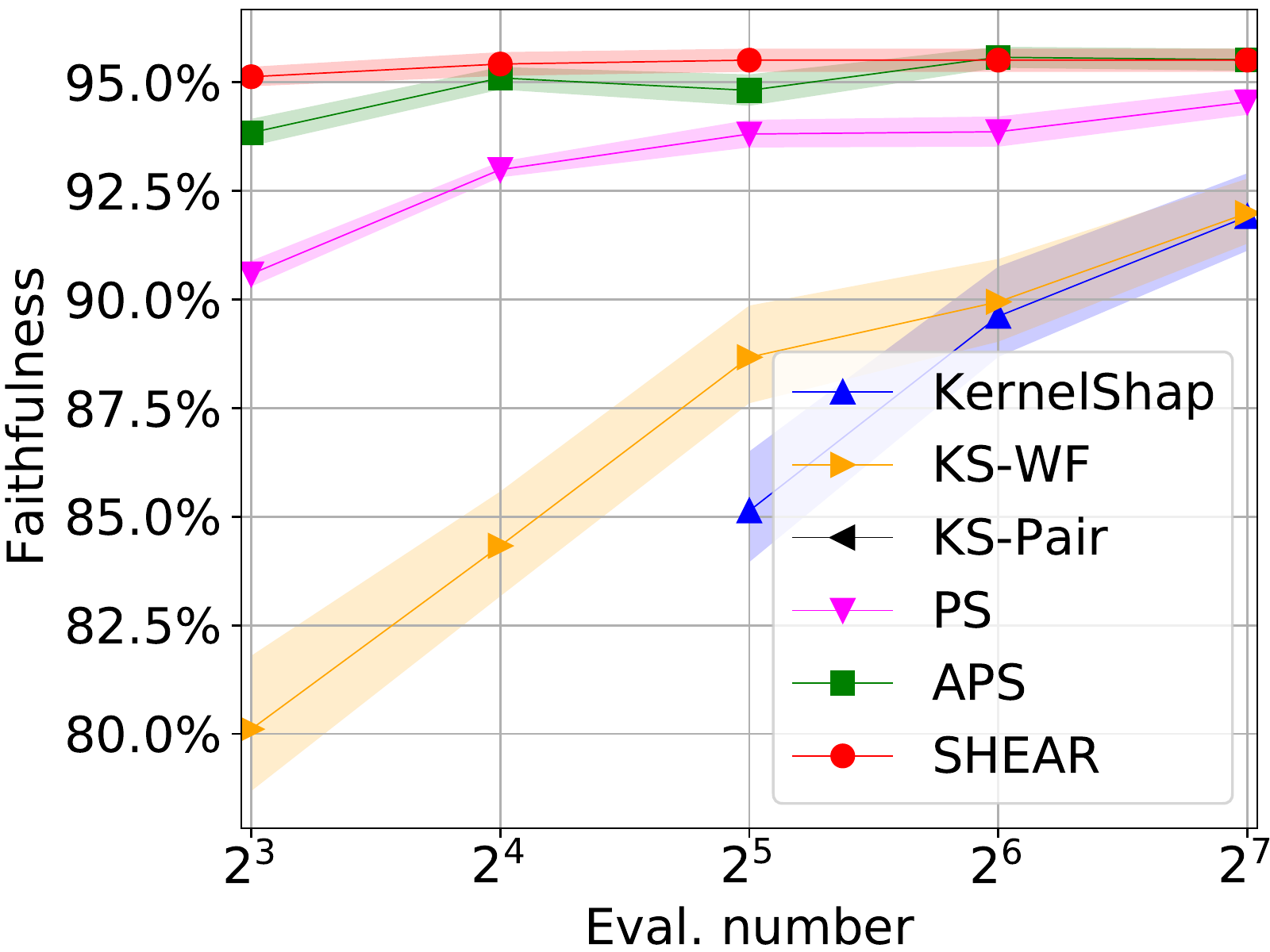}
	\end{minipage}%
}
\subfigure[Census Income.]{
\centering
	\begin{minipage}[t]{0.3\linewidth}
		\includegraphics[width=0.99\linewidth]{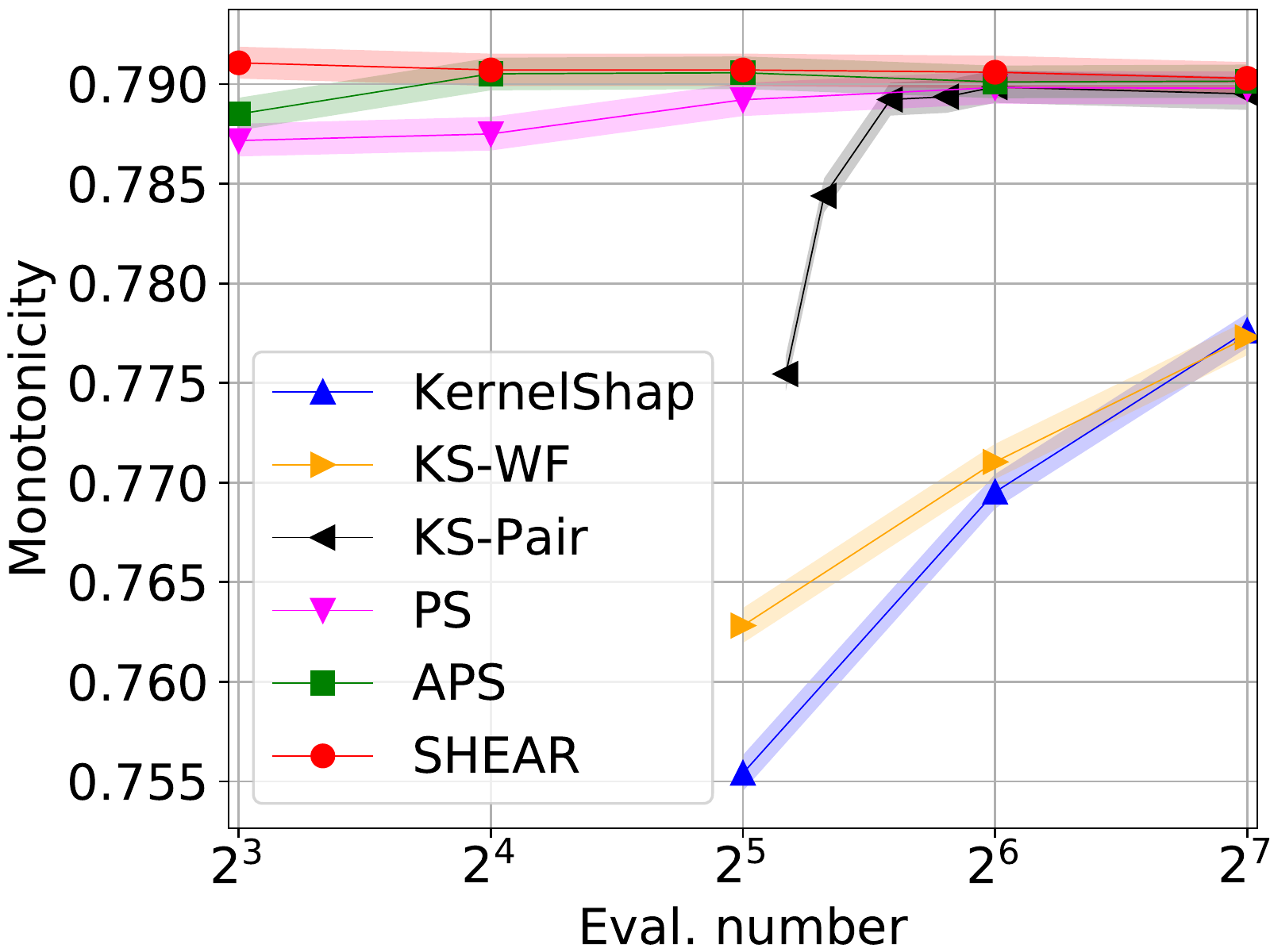}
	\end{minipage}%
}
$\quad$
\subfigure[German Credit.]{
\centering
	\begin{minipage}[t]{0.3\linewidth}
		\includegraphics[width=0.99\linewidth]{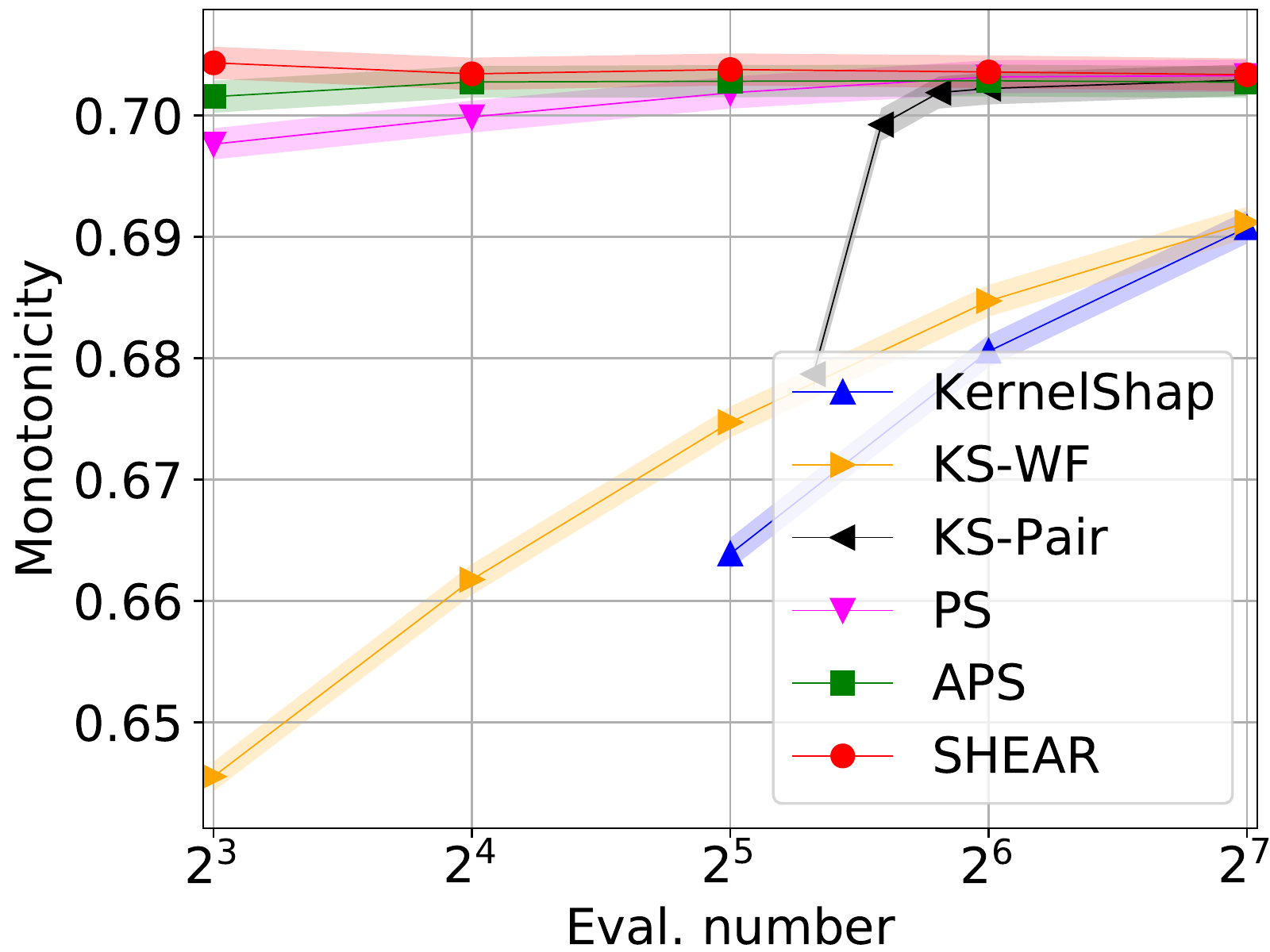}
	\end{minipage}
}
$\quad$
\subfigure[Cretio.]{
\centering
	\begin{minipage}[t]{0.3\linewidth}
		\includegraphics[width=0.99\linewidth]{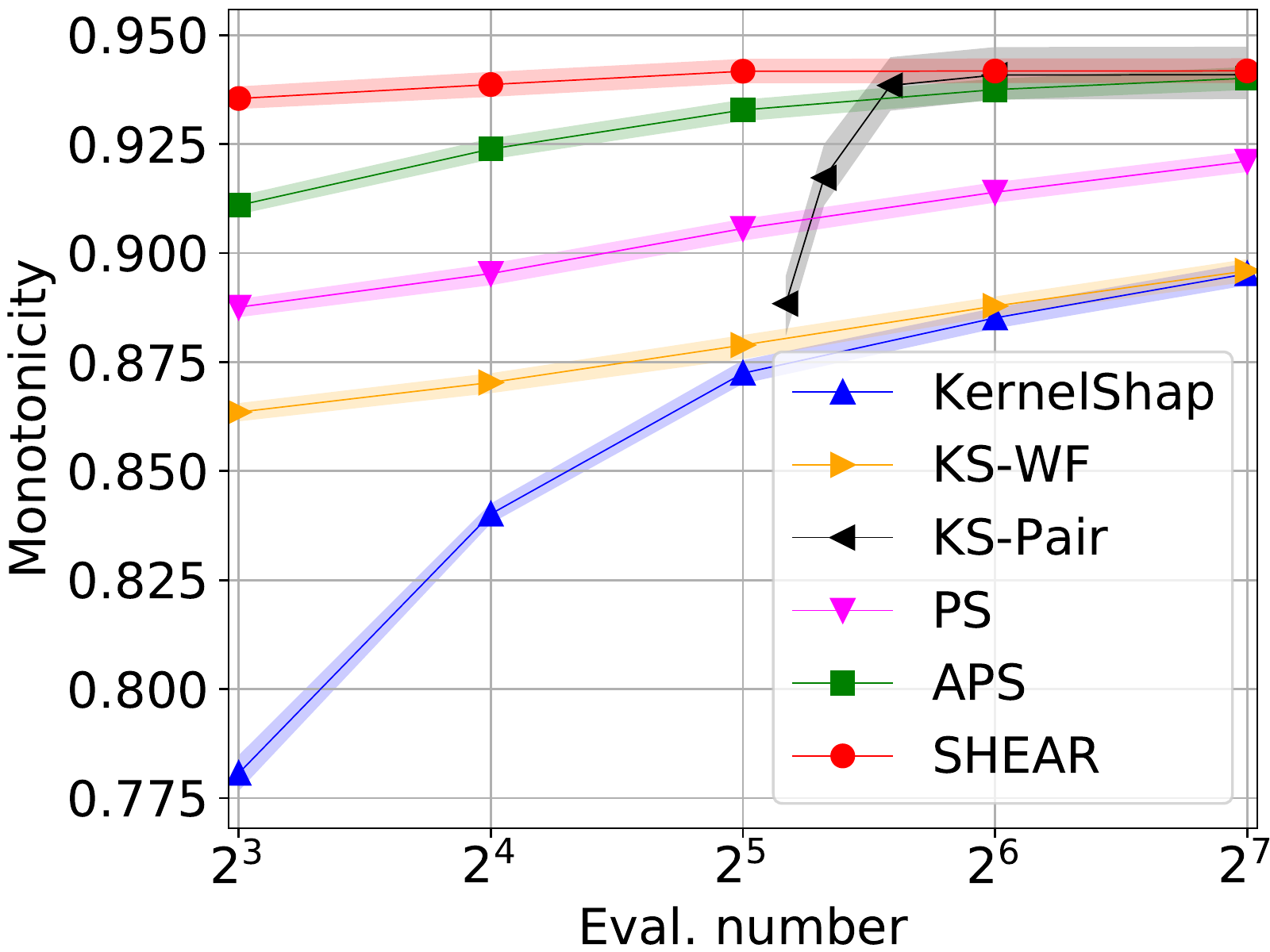}
	\end{minipage}%
}
\caption{Faithfulness of the explanation on the (a) Census Income, (b) German Credit and (c) Cretio datasets; Monotonicity of the explanation on the (d) Census Income, (e) German Credit and (f) Cretio datasets.}
\label{fig:interpretation_performance_NGT}
\end{figure*}

\subsection{Evaluation with Model Perturbation (RQ2)}
\label{sec:NGT_eval_metric}
% \subsubsection{{Evaluation Metrics}}

The evaluation methods with model perturbation are motivated by the common sense that important features have more impact on the prediction than trivial features.
Specifically, we follow the existing work~\cite{liu2021synthetic} to consider two metrics to evaluate the interpretation performance with model perturbation: \emph{Faithfulness} and \emph{Monotonicity}.

\noindent
\textbf{Faithfulness}: For the estimation value of feature contribution $\hat{\phi}_1, \cdots, \hat{\phi}_M$, Faithfulness computes the Pearson correlation coefficient~\cite{benesty2009pearson} between the feature contribution and preceding difference of the model,
\begin{equation}
\setlength\abovedisplayskip{1mm}
\setlength\belowdisplayskip{1mm}
\text{Faithful} \!=\! \text{Pearson} \Big( \! \big[ f_v \! (\mathcal{U}) -\! f_v \! (\mathcal{U} \setminus \{ i \}) \big]_{1\leq i\leq M}, \! [ \hat{\phi}_i ]_{1\leq i\leq M} \! \Big) \! ,
\nonumber
\end{equation}
where the Pearson correlation coefficient is given by: $\text{Pearson} (X, Y) = \frac{\mathrm{cov} (X,Y)}{\sigma_X \sigma_Y}$; $\mathrm{cov} (X,Y)$ denotes the co-variance of $X$ and $Y$; and $\sigma_X$ and $\sigma_Y$ denote the standard deviation of $X$ and $Y$, respectively. 
Larger Faithfulness indicates better interpretation.

\noindent
\textbf{Monotonicity}: Monotonicity evaluates the feature importance ranking without the ground-truth ranking.
It computes the marginal improvement of each feature ordered by the estimated feature contribution, and calculates the fraction of indices $i$ such that the marginal improvement of feature $i$ is greater than feature $i + 1$.
Monotonicity is given by
\begin{equation}
\setlength\abovedisplayskip{1mm}
\setlength\belowdisplayskip{1mm}
\text{Monotonicity} = \frac{1}{M-1} \sum_{i=0}^{M-2} \mathbf{1}_{\delta_i \geq \delta_{i+1}},
\nonumber
\end{equation}
where $\delta_i = f_v( \mathcal{W}_i \cup \{ i \}) - f_v( \mathcal{W}_i )$ indicates the marginal improvement of the top-$i$ important feature; and $\mathcal{W}_i = \{ j \in \mathcal{U} \setminus \{i\} \mid \phi_j \geq \phi_i \}$ denotes the features more important than the top-$i$ important feature.
Larger Monotonicity implies better feature importance ranking.

The Faithfulness of \Algnameabbr{} and baseline methods versus the times of model evaluation are shown in Figures~\ref{fig:interpretation_performance_NGT}~(a)-(c), and Monotonicity versus the times of model evaluation are give in Figures~\ref{fig:interpretation_performance_NGT}~(d)-(f).
The error bar is given in the figures to show the standard deviation of each method.
Overall, we have the following observations:
\begin{itemize}[leftmargin=10pt, topsep=0pt]
\setlength{\parskip}{1pt}
\setlength{\parsep}{0pt}
\setlength{\itemsep}{0pt}

    \item All methods achieve larger Faithfulness and Monotonicity as the times of model evaluation  grows due to the fact that more evaluations provide more information for interpreting the prediction.
    
    \item \Algnameabbr{} achieves larger Faithfulness and Monotonicity than baseline methods under the same times of model evaluation, which indicates the effectiveness of \Algnameabbr{}.
    
    \item \Algnameabbr{} shows consistently better performance than baseline methods as the times of model decreases.
    
    \item According to Figures~\ref{fig:interpretation_performance_GT} and~\ref{fig:interpretation_performance_NGT}, even though the curves of absolute error, accuracy of feature importance ranking, Faithfulness and Monotonicity are different, the interpretation performance ranking of \Algnameabbr{} and baseline methods indicated by the four metrics are consistent with each other.
    Hence, the effectiveness of \Algnameabbr{} can be demonstrated by multiple evaluation metrics.
    
\end{itemize}

\begin{figure*}
\setlength{\abovecaptionskip}{0mm}
\setlength{\belowcaptionskip}{-6mm}
\centering
\subfigure[Census Income.]{
\centering
	\begin{minipage}[t]{0.3\linewidth}
		\includegraphics[width=0.99\linewidth]{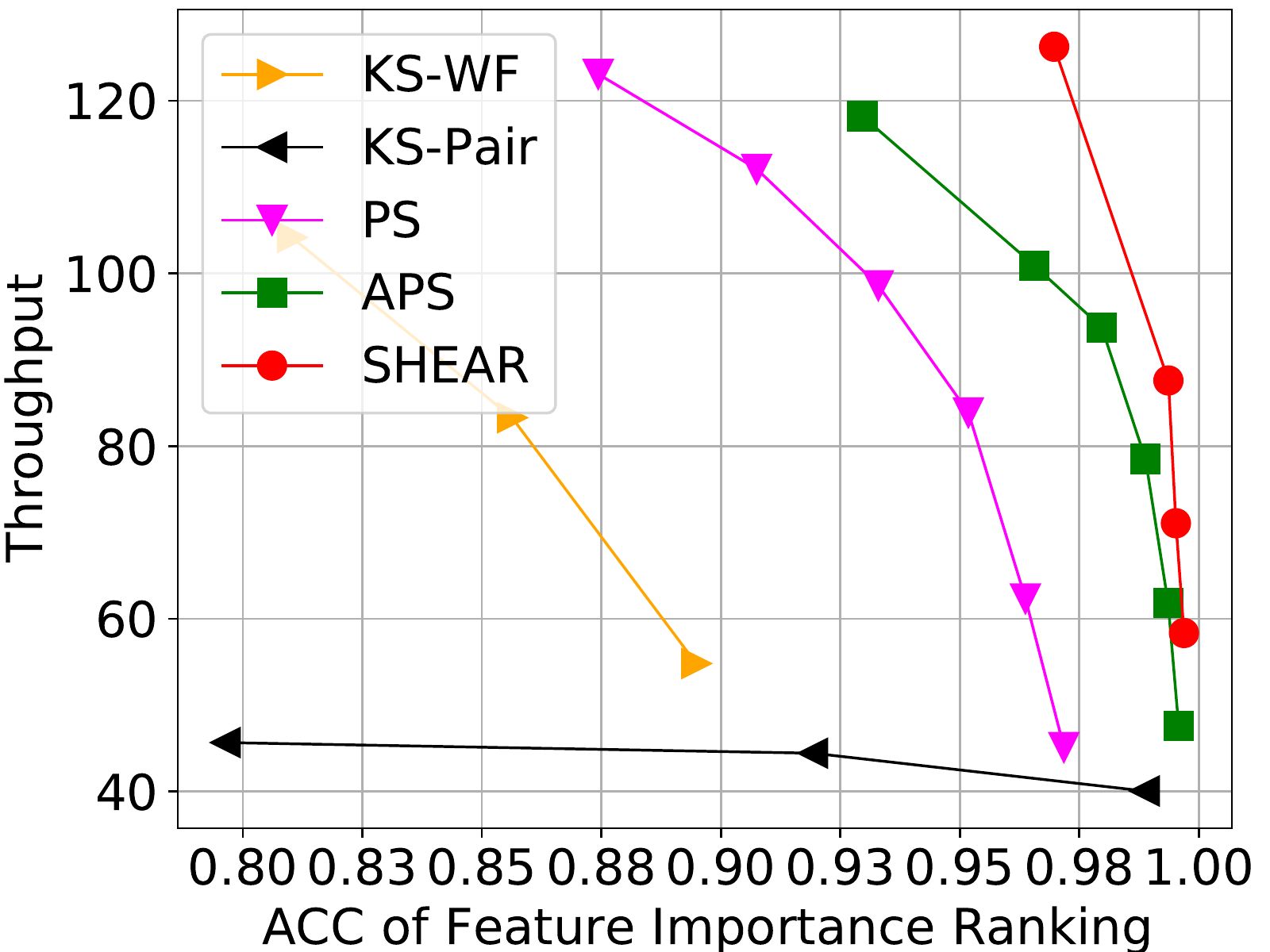}
	\end{minipage}%
% 	\vspace{-20mm}
}
$\quad$
\subfigure[German Credit.]{
\centering
	\begin{minipage}[t]{0.3\linewidth}
		\includegraphics[width=0.99\linewidth]{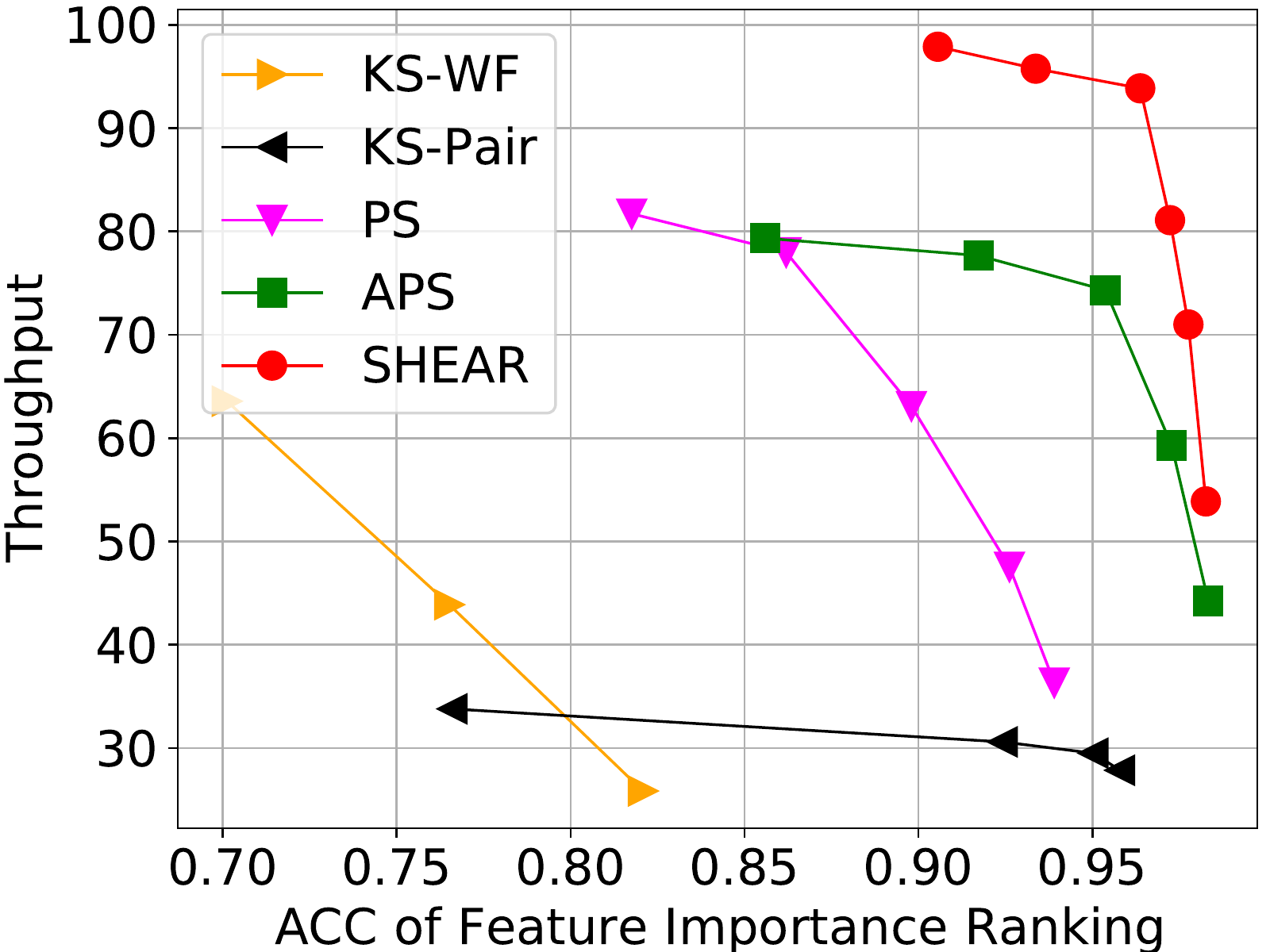}
	\end{minipage}%
% 	\vspace{-20mm}
}
$\quad$
\subfigure[Cretio.]{
\centering
	\begin{minipage}[t]{0.3\linewidth}
		\includegraphics[width=0.99\linewidth]{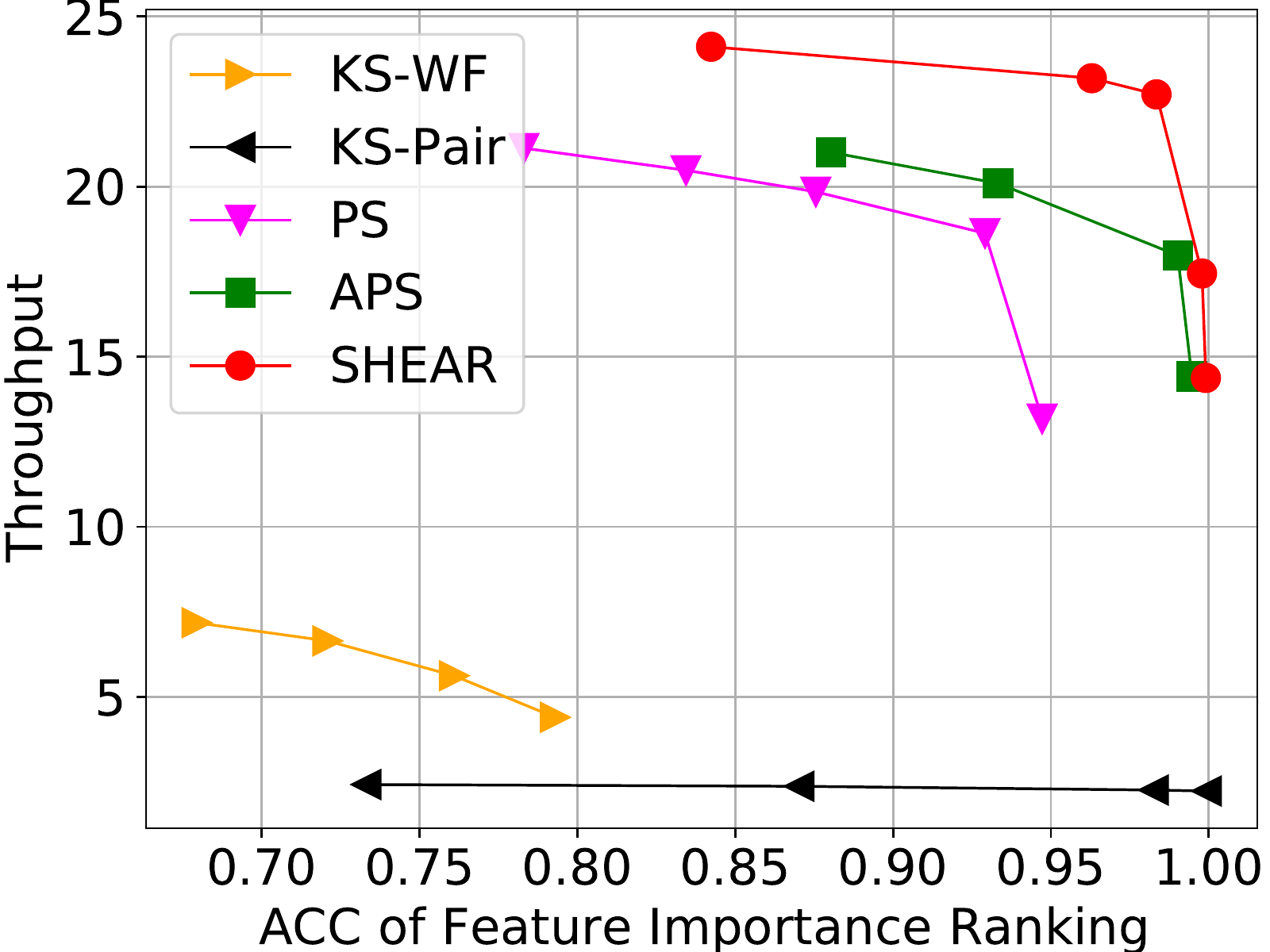}
	\end{minipage}
% 	\vspace{-20mm}
}
\subfigure[Ablation study.]{
\centering
	\begin{minipage}[t]{0.3\linewidth}
		\includegraphics[width=0.99\linewidth]{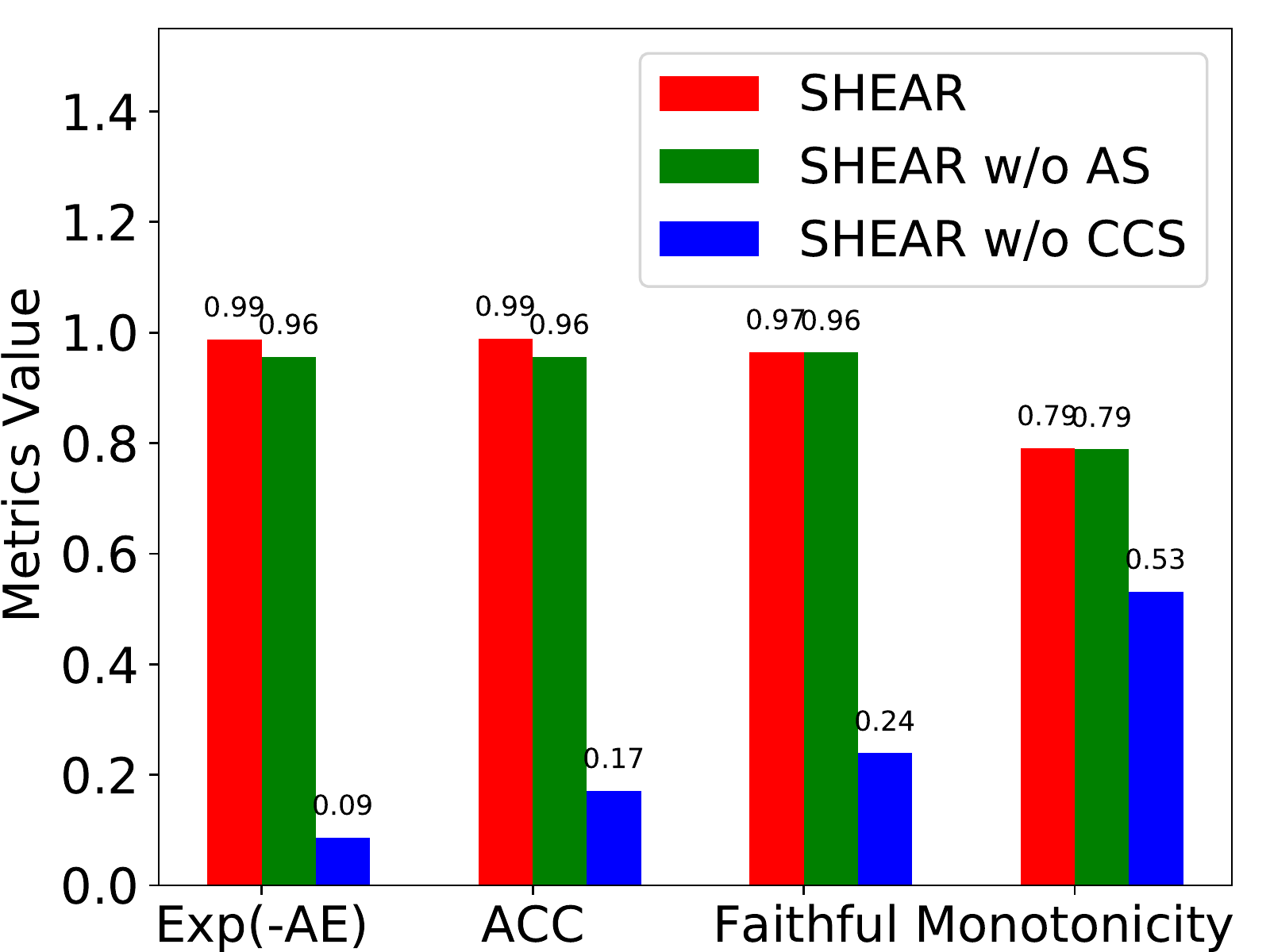}
	\end{minipage}%
}
\subfigure[Estimation error vs. Cross-contribution.]{
\centering
	\begin{minipage}[t]{0.335\linewidth}
		\includegraphics[width=0.88\linewidth]{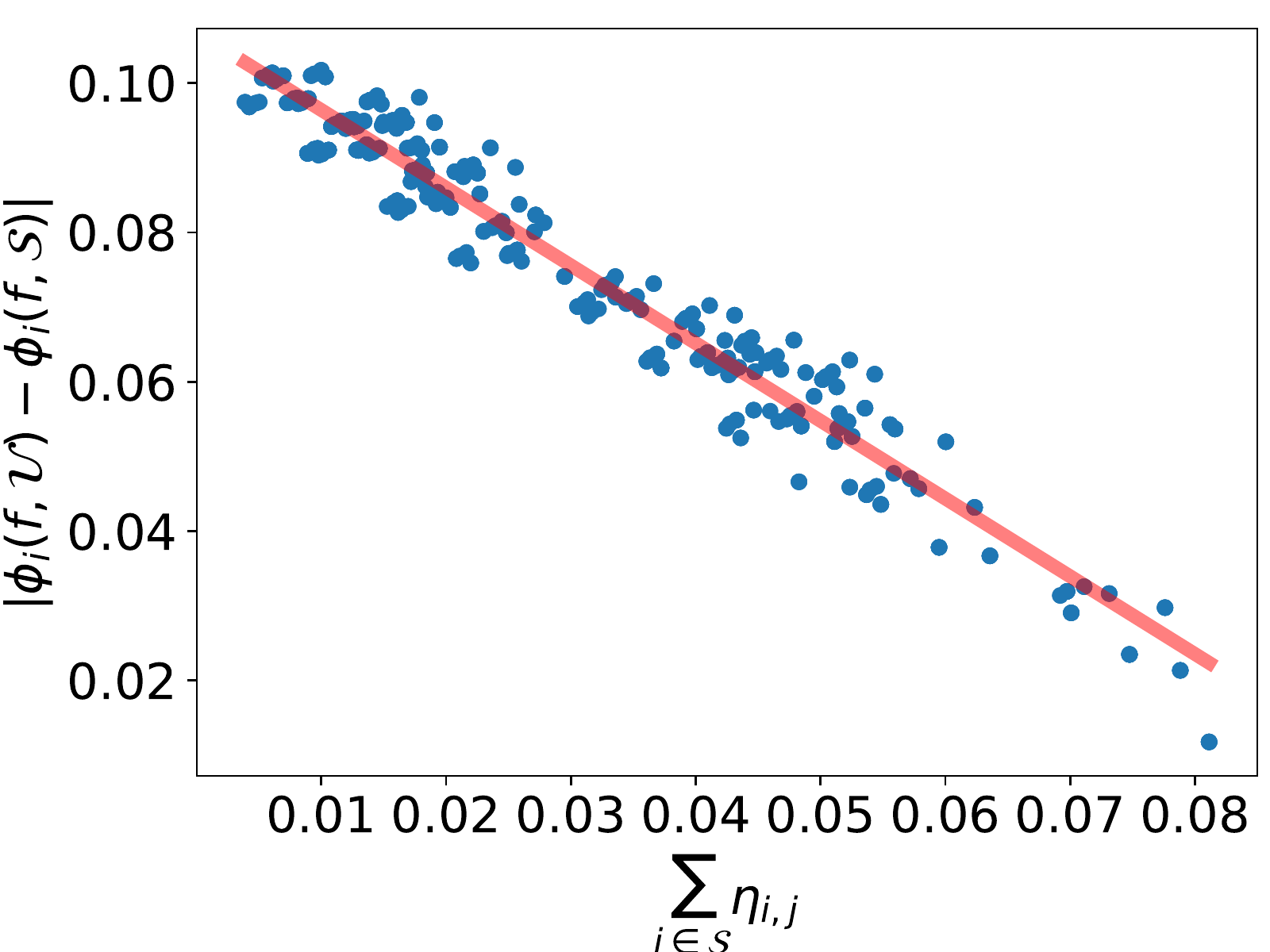}
	\end{minipage}%
}
\subfigure[Estimation error vs. Evaluation times.]{
\centering
	\begin{minipage}[t]{0.335\linewidth}
		\includegraphics[width=0.88\linewidth]{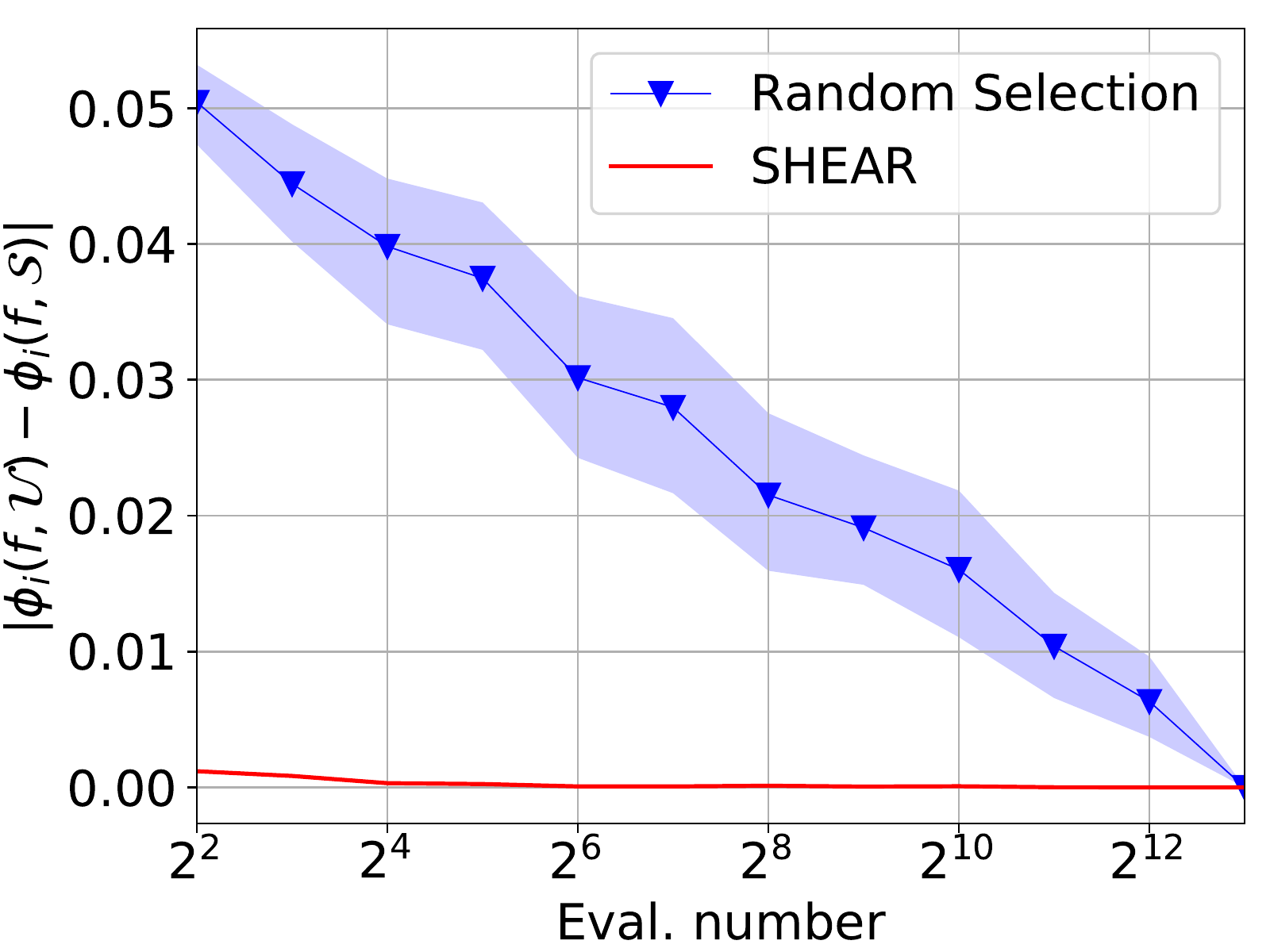}
	\end{minipage}
}
\caption{Algorithmic throughput vs. ACC of feature importance ranking on the (a) Census Income, (b) German Credit and (c) Cretio datasets; (d) \Algnameabbr{} vs. \Algnameabbr{} w/o AS vs. \Algnameabbr{} w/o CCS; (e) Absolute estimation error vs. Cross-contribution; (f) Absolute estimation error vs. Times of model evaluation.}
\label{fig:run_time}
\end{figure*}

\subsection{Throughput Evaluation~(RQ3)}
\label{sec:exp_throughput}
% We have not plotted the speed of SAGE because it is far slower than other methods 

Algorithmic throughput is estimated by $\frac{N_{\text{test}}}{t_{\text{total}}}$, where $N_{\text{test}}$ and $t_{\text{total}}$ denote the testing instance number and the total time consumption of the interpreting process, respectively.
$N_{\text{test}}$ of the three datasets is given in Appendix~\ref{sec:appendix_dataset}, and $t_{\text{total}}$ is tested based on the physical computing infrastructure given in Appendix~\ref{sec:appendix_infrastructure}.
We plot the throughput versus the accuracy of feature importance ranking on the three datasets in Figures~\ref{fig:run_time}~(a)-(c), respectively, where we omit the curve of Kernel-SHAP due to its low accuracy observed from previous experiments. 
According to the experimental results, we have the following observations:
\begin{itemize}[leftmargin=10pt, topsep=0pt]
\setlength{\parskip}{1pt}
\setlength{\parsep}{0pt}
\setlength{\itemsep}{0pt}
    
    \item For all methods, the throughput reduces as the interpretation performance grows due to the fact that more accurate explanation depends on more model evaluations which spend more time in the model forward process.
    
    \item \Algnameabbr{} shows the most efficient interpretation, which has the highest accuracy when controlling the throughput and the most throughput when controlling the accuracy.
    
    \item Even though \Algnameabbr{} has additional time cost on the gradient estimation according to Equation~(\ref{eq:time_complexity}), it provides more accurate explanations than baseline methods, thus having better throughput and accuracy trade-off. 
    % according to Figures~\ref{fig:run_time}~(a)-(c).
    
\end{itemize}

\subsection{Ablation Study (RQ4)}

The effectiveness of \Algnameabbr{} mainly derives from the contributive cooperator selection following Equation~(\ref{eq:S_selection2}).
To prove this, we compare the interpretation performance of \Algnameabbr{}, \Algnameabbr{} without antithetical sampling (AS) and \Algnameabbr{} without cooperator selection (CCS) in Figure~\ref{fig:run_time}~(d), where $N \!=\! 16$.
We have the following observations:
\begin{itemize}[leftmargin=10pt, topsep=0pt]
\setlength{\parskip}{1pt}
\setlength{\parsep}{0pt}
\setlength{\itemsep}{0pt}

\item \Algnameabbr{} outperforms \Algnameabbr{} w/o AS, which demonstrates the effectiveness of antithetical sampling.

\item \Algnameabbr{} w/o CCS shows considerable interpretation degradation across the four evaluation metrics, which indicates the dominant contribution of CCS to \Algnameabbr{}.

\end{itemize}

% It is observed that \Algnameabbr{} wo CCS consistently shows considerable interpretation degradation across the four evaluation metrics, which demonstrates the effectiveness of the contributive cooperator selection.

We conduct the follow-up experiments to trace the contributive cooperator selection.
Specifically, for each instance in the testing dataset, we select a target feature $i \in \mathcal{U}$ to calculate the absolute error $|\phi_i (f, \mathcal{U}) - \phi_i (f, \mathcal{S} \cup \{ i \})|$ via the brute-force algorithm, and calculate the cross-contribution $\sum_{j \in \mathcal{S}} \eta_{i,j}$ following Equation~(\ref{eq:cross_contribution}), where $\mathcal{S} \!\subseteq\! \mathcal{U} \setminus \{ i \}$ is randomly selected satisfying $|\mathcal{S}| \!=\! \log \frac{N}{2}$.
For $N \!=\! 16$, we illustrate the numerical relationship between $|\phi_i (f, \mathcal{U}) - \phi_i (f, \mathcal{S} \cup \{ i \})|$ and $\sum_{j \in \mathcal{S}} \eta_{i,j}$ in Figure~\ref{fig:run_time}~(e); and for $2^2 \!\leq\! N \!\leq\! 2^{M}$, we plot the $|\phi_i (f, \mathcal{U}) - \phi_i (f, \mathcal{S})|$ versus the evaluation times $N$ in Figure~\ref{fig:run_time}~(f).
According to the experimental results, we have the following observations:
\begin{itemize}[leftmargin=10pt, topsep=0pt]
\setlength{\parskip}{1pt}
\setlength{\parsep}{0pt}
\setlength{\itemsep}{0pt}
    % \item 
    
    \item Approximately negative relationship between the absolute error $|\phi_i (f, \mathcal{U}) - \phi_i (f, \mathcal{S})|$ and the feature cross-contribution $\sum_{j \in \mathcal{S}} \eta_{i,j}$ can be observed in Figure~\ref{fig:run_time}~(e).
    
    % $\mathcal{S}_i = \arg \min |\phi_i (f, \mathcal{U}) - \phi_i (f, \mathcal{S})| = \arg \max \sum_{j \in \mathcal{S}} \eta_{i,j}$
    % \begin{equation}
    %     \mathcal{S}_i = \arg \max_{\boldsymbol{S} \subset \mathcal{U} \setminus \{ i \} \atop |\mathcal{S}| = \log_2 B} \sum_{j \in \mathcal{S}} \eta_{i,j}
    % \end{equation}
    
    \item According to Figure~\ref{fig:run_time}~(f), we have $|\phi_i (f, \mathcal{U})\! -\! \phi_i (f, \mathcal{S})|$ drops from maximum value to $0$ as $N$ grows, where more features are involved to be the cooperators.
    Meanwhile, \Algnameabbr{} achieves the lower bound of absolute error.
    % via cross-contribution maximization following Equation~(\ref{eq:S_selection2}).
    
\end{itemize}
The above observations indicate the contributive cooperator selection of \Algnameabbr{} following Equation~(\ref{eq:S_selection2}) contributes to the minimization of the absolute estimation error, thus leading to accurate estimation of the feature contribution.

% the absolute error and the cross-contribution between feature $i$ and each subset $\sum_{j \in \mathcal{S}} \eta_{i,j}$, where $\eta_{i,j}$ is given by Equation~(\ref{eq:cross_contribution}).
% , where the example sample is randomly chosen from the testing dataset.

\subsection{Illustration of Explanation}
\label{sec:illusration_explanation}

We illustrate the Shapley explanation generated by \Algnameabbr{} for the DNN model prediction on the Census Income dataset to demonstrate the application of \Algnameabbr{}.
Specifically, we follow the settings in Appendix~\ref{sec:appendix_implementation_details} to train the DNN model $f$.
After the training, we select an instance $\tilde{\boldsymbol{x}} = [\tilde{x}_1, \cdots, \tilde{x}_M]$ from the testing dataset, and have $f_v(\mathcal{U}) = f^1(\tilde{\boldsymbol{x}}) - f^0(\tilde{\boldsymbol{x}}) = 1.73$, where $\text{softmax}[f(\boldsymbol{x})][i]$ denotes the predicted probability of the instance $\boldsymbol{x}$ belonging to class $i$. 
We have $i \in \{ 0,1 \}$ for the income prediction task on the Census Income dataset, where $i = 0$ or $1$ indicates an instance has income \emph{more than} or \emph{less than} 50K/yr, respectively.
The model outputs $\hat{y} = \text{sgn} [ f^1(\tilde{\boldsymbol{x}}) - f^0(\tilde{\boldsymbol{x}}) ] = 1$ which indicates $\tilde{\boldsymbol{x}}$ has the income \emph{more than} 50K/yr; and the base value of the model on the testing dataset satisfies $f_v(\varnothing) = f ( \bar{{\boldsymbol{x}}}_{\mathcal{U}} ) = -1.88$.
We illustrate the Shapley explanation generated by \Algnameabbr{} ($N=16$) in  Figure~\ref{fig:feature_contribution} of Appendix~\ref{sec:appendix_ex_illustration}, where the GT-Shapley value is also shown for comparison.
Overall, we have the following observations:
\begin{itemize}[leftmargin=10pt, topsep=0pt]
\setlength{\parskip}{1pt}
\setlength{\parsep}{0pt}
\setlength{\itemsep}{0pt}

    \item The summation of the $M$ feature contributions equals the distance between base value and model output, i.e. $\sum_{i=1}^M \hat{\phi}_i (f_v, \mathcal{U} ) = f_v(\mathcal{U}) - f_v(\varnothing) = 3.61$, where $M = 13$ for the Census Income dataset.
    
    \item The base value $f_v(\varnothing) < 0$ due to the fact that the classifier $f$ is learned based on unbalanced training dataset with more negative samples than positive samples.
    
    \item The top-three key features for the income prediction are \emph{education}, \emph{martial-status} and \emph{occupation}.
    % and the three trivial features are \emph{capital loss}, \emph{work-class} and \emph{race}.
    
    \item The prediction of income has gender bias where \emph{gender=male} has positive contribution to the prediction.
    %Existing work on bias mitigation~\cite{du2021fairness} can be adopted to remove gender bias.
    
\end{itemize}

\section{Conclusion}

In this work, we propose the Shapley chain rule and \Algnameabbr{} for the acceleration of Shapley explanation.
The Shapley chain rule provides the theoretical instructions to minimize the absolute estimation error, and \Algnameabbr{} follows the chain rule to accelerate the Shapley explanation via contributive cooperator selection.
The experimental results on three datasets using five evaluation metrics demonstrate that our proposed \Algnameabbr{} works more efficiently than state-of-the-art methods without degradation of interpretation performance, and indicate the potential application of \Algnameabbr{} to the real-world scenarios where the computational resource is limited.

% The Shapley chain rule indicates the estimation error depends on the feature cross-contribution; and \Algnameabbr{} follows the Shapley chain rule to select the optimal contributive cooperators to minimize the estimation error via cross-contribution maximization.
% In such a manner, the feature contribution can be estimated accurately without the enumeration of all possible feature coalitions so that the computational complexity can be significantly reduced.

% develop the algorithm for distributed systems large-scale dataset.

% \appendix % Allen

% \section{Proof}

\bibliography{reference-base} 
\bibliographystyle{icml2022}

% \vfill
\clearpage

\section*{Appendix}
\appendix
\setcounter{theorem}{0}

\section{Illustration of Explanation}
\label{sec:appendix_ex_illustration}

We illustrate the Shapley explanation generated by \Algnameabbr{} for the DNN model prediction on the Census Income dataset in this section.
Specifically, the details about training the DNN model can be referred to Appendix~\ref{sec:appendix_implementation_details}.
After training the DNN model $f$, we select an instance $\tilde{\boldsymbol{x}} = [\tilde{x}_1, \cdots, \tilde{x}_M]$ from the testing dataset, and have the value function given by $f_v(\mathcal{U}) = f^1(\tilde{\boldsymbol{x}}) - f^0(\tilde{\boldsymbol{x}}) = 1.73$, where $f^i(\boldsymbol{x})$ denotes the prediction of the instance $\boldsymbol{x}$ belonging to class~$i$.
For the selected instance $\tilde{\boldsymbol{x}}$, the DNN model outputs $\hat{y} = \text{sgn} [ f^1(\tilde{\boldsymbol{x}}) - f^0(\tilde{\boldsymbol{x}}) ] = 1$; and the base value of the model on the testing dataset satisfies $f_v(\varnothing) = f ( \bar{{\boldsymbol{x}}}_{\mathcal{U}} ) = -1.88$.
We illustrate the Shapley explanation generated by \Algnameabbr{} ($N=16$) in  Figure~\ref{fig:feature_contribution}, and give the GT-Shapley value for comparison.
We have several observations on the results which can be referred to Section~\ref{sec:illusration_explanation}.

\begin{figure}[h]
    \centering
    \setlength{\abovecaptionskip}{1mm}
    \setlength{\belowcaptionskip}{-3mm}
    \includegraphics[width=0.45\textwidth]{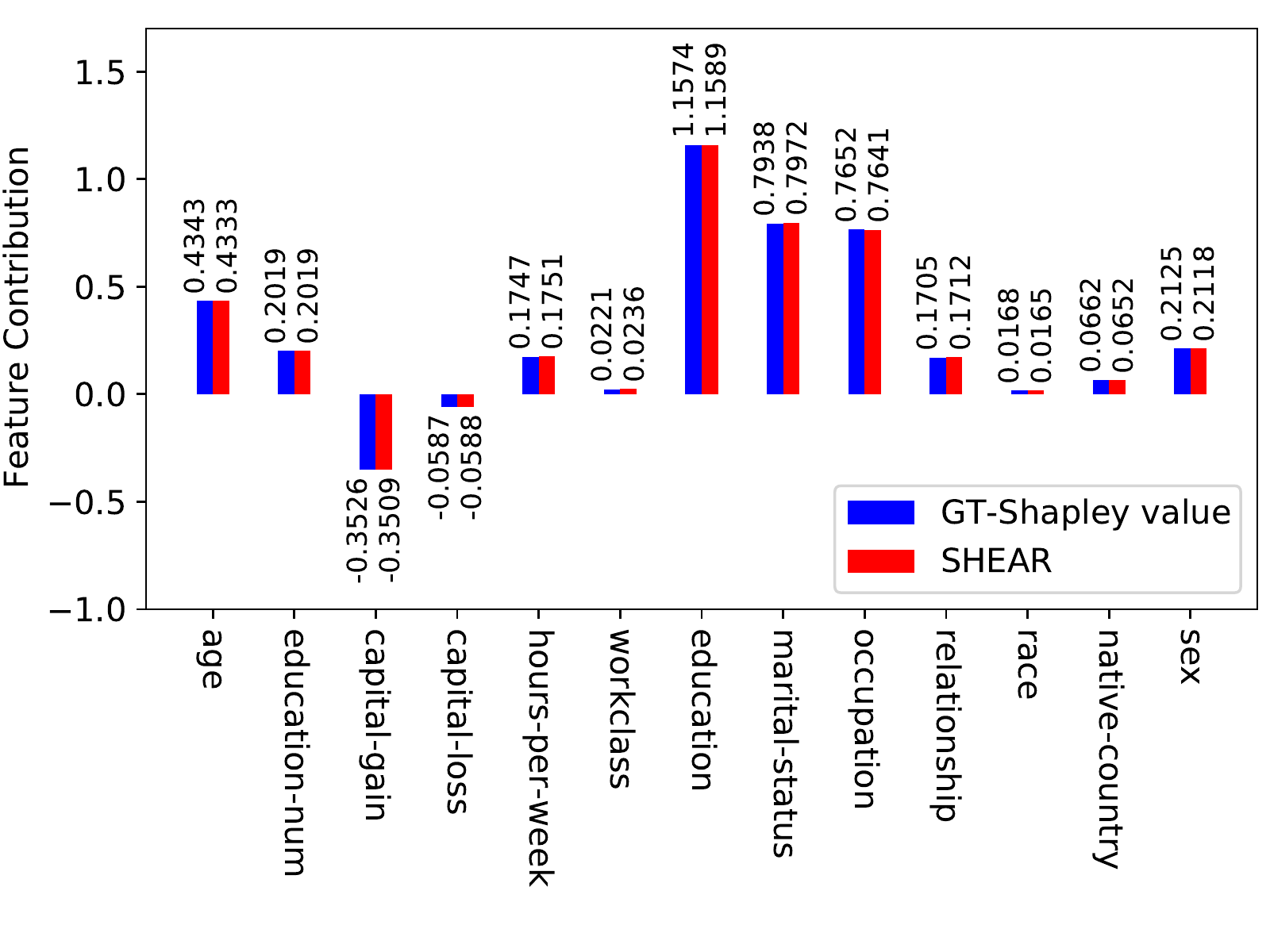}
    \caption{Explanation of model prediction for an instance from the Census Income dataset.}
    \label{fig:feature_contribution}
\end{figure}

% $\tilde{\boldsymbol{x}} = $['age': 1.06, 'education-num': 1.52, 'capital-gain': -0.15, 'capital-loss': -0.22, 'hours-per-week': 0.37, 'workclass': 3, 'education': 12, 'marital-status': 2, 'occupation': 3, 'relationship': 0, 'race': 4, 'native-country': 38, 'sex': 1]\footnote{The continuous features have been standardized, and the categorical features are shown the hash key.}

\section{Proof of Theorem 1}
\label{sec:proof_theorem_1}

In this section, we first propose Corollaries 1 and 2, then utilize the corollaries to prove Theorem 1.

% \noindent
% \subsection{Corollaries for Proving Theorem 1}

Before proving the theorem, we propose 
Corollaries~\ref{corol:expansion1} and~\ref{corol:expansion2}.

\begin{corollary}
\label{corol:expansion1}
Given function $f(x_i, \cdots): \mathcal{X} \to \mathbb{R}$ and $\bar{x}_i = \mathbb{E}_{x_i \sim p(x_i)}(x_i)$, $\forall (x_i, \cdots) \in \mathcal{X}$, $\exists$~function $g$ such that 
\begin{align}
f(x_i, \cdots) &= f(\bar{x}_i, \cdots) + g(x_i, \cdots) + o(x_i - \bar{x}_i),
\end{align}
\end{corollary}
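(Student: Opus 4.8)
The plan is to read this statement as nothing more than the first-order Taylor expansion of $f$ along the single coordinate $x_i$, with all remaining arguments held fixed, and to exhibit $g$ explicitly as the linear term. Concretely, I would set
\[
g(x_i, \cdots) = \frac{\partial f}{\partial x_i}(\bar{x}_i, \cdots)\,(x_i - \bar{x}_i),
\]
so that $g$ collects exactly the contribution of the directional derivative evaluated at the reference point $\bar{x}_i$. Choosing the linear term (rather than the trivial choice $g = f(x_i,\cdots) - f(\bar{x}_i,\cdots)$ that would force the remainder to vanish) is what makes the corollary useful downstream, since it leaves a genuine $o(x_i - \bar{x}_i)$ remainder that can later be discarded when the expansion is fed into the proof of the Shapley chain rule.

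First I would invoke the differentiability of $f$ in the $x_i$ direction, which is available from the hypothesis that the value function $f_v$ is differentiable. Then, writing $h = x_i - \bar{x}_i$ and freezing the other coordinates, the one-dimensional map $t \mapsto f(\bar{x}_i + t, \cdots)$ is differentiable at $t = 0$, so by the very definition of the derivative
\[
f(\bar{x}_i + h, \cdots) - f(\bar{x}_i, \cdots) = \frac{\partial f}{\partial x_i}(\bar{x}_i, \cdots)\,h + o(h) \quad \text{as } h \to 0.
\]
Substituting $h = x_i - \bar{x}_i$ together with the chosen $g$ then yields the claimed identity, where the remainder $o(x_i - \bar{x}_i)$ is precisely the Peano remainder of this expansion.

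The step I expect to require the most care is not the computation but the bookkeeping of what ``$o(x_i - \bar{x}_i)$'' means here: it is a scalar quantity, depending on $x_i$ and the frozen coordinates, that vanishes faster than $|x_i - \bar{x}_i|$ as $x_i \to \bar{x}_i$, and I would state this limit explicitly so that the remainder can later be absorbed cleanly into the $o_{i,j}$ term of Theorem~\ref{theorem:shapley_chain_rule}. A secondary point worth flagging is that the expansion is performed coordinate-wise and at the \emph{reference} value $\bar{x}_i$ rather than at an arbitrary base point; this is essential for compatibility with the approximation in Equation~(\ref{eq:value_func_approx}) and with the companion Corollary~\ref{corol:expansion2}, which I anticipate plays the symmetric role for a second coordinate $x_j$ so that the two expansions can be composed to produce the cross-gradient term appearing in the chain rule.
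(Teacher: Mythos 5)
Your proof is correct and follows essentially the same route as the paper: a Taylor (Peano) expansion of $f$ in the $x_i$ coordinate about the reference point $\bar{x}_i$, with $g$ chosen to absorb the non-remainder terms. The only cosmetic difference is that the paper's $g$ also includes the second-order term $\tfrac{1}{2}\tfrac{\partial^2 f}{\partial x_i^2}(x_i-\bar{x}_i)^2$ while yours keeps only the linear term; since $(x_i-\bar{x}_i)^2 = o(x_i-\bar{x}_i)$, both choices satisfy the statement and remain consistent with the companion expansion in Corollary~2.
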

where `$\cdots$' is the abbreviation of non-active variables.

\begin{proof}

Without loss of generality, we adopt Taylor's theorem to expand $f(x_i, \cdots)$ at point $(\bar{x}_i, \cdots)$ as follows,
% \begin{equation}
\begin{align}
\label{eq:taylor_expansion1}
f(x_i, \cdots) &= f(\bar{x}_i, \cdots) \!+\! \frac{\partial f}{\partial X_i} (x_i \!-\! \bar{x}_i) \!+\! \frac{1}{2} \frac{\partial^2 f}{\partial X_i^2} (x_i \!-\! \bar{x}_i)^2 
\nonumber
\\
&+ o(x_i - \bar{x}_i).
\end{align}
% \end{equation}
Let $g(x_i, \cdots) = \frac{\partial f}{\partial X_i} (x_i - \bar{x}_i) + \frac{1}{2} \frac{\partial^2 f}{\partial X_i^2} (x_i \!-\! \bar{x}_i)^2$. 
Take the $g(x_i, \cdots)$ into Equation~(\ref{eq:taylor_expansion1}), we achieve $f(x_i, \cdots) = f(\bar{x}_i, \cdots) + g(x_i, \cdots) + o(x_i - \bar{x}_i)$.

\end{proof}

\begin{corollary}
\label{corol:expansion2}
Given function $f(x_i, x_j, \cdots): \mathcal{X} \to \mathbb{R}$ and $\bar{x}_i = \mathbb{E}_{x_i\sim p(x_i)}(x_i)$, $\forall \! (x_i, \! x_j, \! \cdots ) \!\in\! \mathcal{X}$, $\exists$~functions $g$ and $h \! : \! \mathcal{X} \!\!\to\! \mathbb{R}$ such that 
% \begin{equation}
\begin{align}
% f_v(\boldsymbol{S} \cup \{ i \} \cup \{ j \}) &= f_v(\boldsymbol{S}) + g(\boldsymbol{S} \cup \{ i \}) + h(\boldsymbol{S} \cup \{ j \}) 
% \\
% &+ \lambda (x_i - \bar{x}_i) (x_j - \bar{x}_j) + o \big( (x_i - \bar{x}_i)(x_j - \bar{x}_j) \big),
&f(x_i, x_j, \cdots) 
\nonumber
\\
&= f(\bar{x}_i, \bar{x}_j, \cdots) + g(x_i, \bar{x}_j, \cdots) + h(\bar{x}_i, x_j, \cdots)
\\
&+ \lambda (x_i - \bar{x}_i) (x_j - \bar{x}_j) + o(x_i - \bar{x}_i) + o(x_j - \bar{x}_j),
\nonumber
\end{align}
% \end{equation}
where $\lambda = \frac{1}{2} \Big( \frac{\partial f}{\partial X_i \partial X_j} + \frac{\partial f}{\partial X_j \partial X_i} \Big) \Big|_{X_i=x_i, X_j=x_j, \cdots}$.
% where $g(x^*, y_i) = 0$ and $h(y_i, z^*) = 0$,
\end{corollary}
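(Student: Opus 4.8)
The plan is to mirror the proof of Corollary~\ref{corol:expansion1}, but now expanding $f$ simultaneously in the two active variables $x_i$ and $x_j$ about the reference point $(\bar{x}_i, \bar{x}_j, \cdots)$. First I would invoke Taylor's theorem to second order in the pair $(x_i, x_j)$, treating all other coordinates as fixed, which yields
\begin{align*}
f(x_i, x_j, \cdots) &= f(\bar{x}_i, \bar{x}_j, \cdots) + \frac{\partial f}{\partial X_i}(x_i - \bar{x}_i) + \frac{\partial f}{\partial X_j}(x_j - \bar{x}_j) \\
&\quad + \tfrac{1}{2}\frac{\partial^2 f}{\partial X_i^2}(x_i - \bar{x}_i)^2 + \tfrac{1}{2}\frac{\partial^2 f}{\partial X_j^2}(x_j - \bar{x}_j)^2 \\
&\quad + \tfrac{1}{2}\Big(\frac{\partial^2 f}{\partial X_i \partial X_j} + \frac{\partial^2 f}{\partial X_j \partial X_i}\Big)(x_i - \bar{x}_i)(x_j - \bar{x}_j) \\
&\quad + o(x_i - \bar{x}_i) + o(x_j - \bar{x}_j).
\end{align*}
The symmetrized mixed-partial coefficient arises automatically here, since it is exactly the off-diagonal contribution of the Hessian quadratic form, and it coincides with the claimed $\lambda$.

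Next I would regroup the terms according to which active variable each depends on. Collecting the pure $x_i$ terms defines $g(x_i, \bar{x}_j, \cdots) = \frac{\partial f}{\partial X_i}(x_i - \bar{x}_i) + \tfrac{1}{2}\frac{\partial^2 f}{\partial X_i^2}(x_i - \bar{x}_i)^2$, which is precisely the function $g$ supplied by Corollary~\ref{corol:expansion1} with $x_j$ held at $\bar{x}_j$; symmetrically, the pure $x_j$ terms define $h(\bar{x}_i, x_j, \cdots)$. Because the derivative coefficients are evaluated at the reference point, $g$ depends only on $x_i$ (and the fixed reference values) while $h$ depends only on $x_j$, as the statement requires. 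The leftover mixed term is $\lambda(x_i - \bar{x}_i)(x_j - \bar{x}_j)$ and the constant is $f(\bar{x}_i, \bar{x}_j, \cdots)$, which assembles the asserted identity. As a consistency check, setting $x_j = \bar{x}_j$ collapses $h$ and the cross term to zero and recovers exactly Corollary~\ref{corol:expansion1}.

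The main obstacle I anticipate is bookkeeping the remainder rather than the leading terms. A genuine second-order expansion carries a Peano remainder of size $o(\|(x_i - \bar{x}_i,\, x_j - \bar{x}_j)\|^2)$, so I must argue that, under the paper's looser convention, it can be absorbed into the additive form $o(x_i - \bar{x}_i) + o(x_j - \bar{x}_j)$, checking that each higher-order monomial vanishes faster than one of these two factors. This step needs $f$ to be twice differentiable in the two active coordinates, which is consistent with the differentiability assumed for $f_v$. A related subtlety is that the statement evaluates $\lambda$ at $(x_i, x_j)$ rather than at the base point; I would note that the difference between these two evaluation points is itself higher order and hence folded into the same remainder, so the cross coefficient is the symmetrized mixed partial either way. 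As a cross-check of the decomposition, applying Corollary~\ref{corol:expansion1} first in $x_i$ and then in $x_j$ to both resulting pieces produces the same $g$, $h$, and cross term, confirming that the little-$o$ contributions accumulate additively.
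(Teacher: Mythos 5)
Your proposal matches the paper's own proof: both expand $f$ to second order via Taylor's theorem about $(\bar{x}_i,\bar{x}_j,\cdots)$ in the two active variables, collect the pure-$x_i$ terms into $g$, the pure-$x_j$ terms into $h$, and identify the symmetrized mixed partial with $\lambda$. Your extra remarks on absorbing the Peano remainder and on the evaluation point of $\lambda$ are careful additions the paper leaves implicit, but the route is the same.
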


\begin{proof}

Without loss of generality, we employ Taylor's theorem to expand $f(x_i, x_j, \cdots)$ at point $(\bar{x}_i, \bar{x}_j, \cdots)$ towards variables $x_i$ and $x_j$ as follows,
% \begin{equation}
\begin{align}
&f(x_i, x_j, \cdots) 
\nonumber
\\
&= f(\bar{x}_i, \bar{x}_j, \cdots) \!+\! \frac{\partial f}{\partial X_i} (x_i \!-\! \bar{x}_i) \!+\! \frac{1}{2} \frac{\partial^2 f}{\partial X_i^2} (x_i \!-\! \bar{x}_i)^2 
\nonumber
\\
\label{eq:taylor_expansion2}
&+ \frac{\partial f}{\partial X_j} (x_j - \bar{x}_j)
+ \frac{1}{2} \frac{\partial^2 f}{\partial X_j^2} (x_j - \bar{x}_j)^2 
\nonumber
\\
&+ \frac{1}{2} \Big( \frac{\partial^2 f}{\partial X_i \partial X_j} + \frac{\partial^2 f}{\partial X_j \partial X_i} \Big) (x_i - \bar{x}_i)(x_j - \bar{x}_j) 
\nonumber
\\
&+ o(x_i - \bar{x}_i) + o(x_j - \bar{x}_j).
\end{align}
% \end{equation}

Equation~(\ref{eq:taylor_expansion2}) can be reformulated into $f(x_i, x_j, \cdots) = f(x_i, x_j, \cdots) + g(x_i, \bar{x}_j, \cdots) + h(\bar{x}_i, x_j, \cdots) + o(x_i - \bar{x}_i) + o(x_j - \bar{x}_j)$, where
\begin{equation}
\begin{aligned}
g(x_i, \bar{x}_j, \cdots) &= \frac{\partial f}{\partial X_i} (x_i \!-\! \bar{x}_i) \!+\! \frac{1}{2} \frac{\partial^2 f}{\partial X_i^2} (x_i \!-\! \bar{x}_i)^2,
\\
h(\bar{x}_i, x_j, \cdots) &= \frac{\partial f}{\partial X_j} (x_j \!-\! \bar{x}_j) 
\!+\! \frac{1}{2} \frac{\partial^2 f}{\partial X_j^2} (x_j \!-\! \bar{x}_j)^2.
\end{aligned}
\end{equation}

\end{proof}

% \noindent
% \subsection{Proof of Theorem 1}

After
Corollaries~\ref{corol:expansion1} and~\ref{corol:expansion2}, we return to prove the theorem.

\begin{theorem}[Shapley Chain Rule]

\label{theorem:shapley_chain_rule}
For any differentiable value function $f_v: 2^M \to \mathbb{R}$, the contribution of feature $i$ to $f_v(\mathcal{U})$ satisfies 
\begin{equation}
\phi_i( f_v, \mathcal{U} ) = \phi_i( f_v, \mathcal{U} \setminus \{j\} ) + \Delta_{i,j} + o_{i,j}, 
\nonumber
\end{equation}
where $j \in \mathcal{U} \setminus \{i\}$ denotes another feature; $\phi_i( f_v, \mathcal{U} \setminus \{j\} )$ denotes the contribution of feature $i$ to $f_v(\mathcal{U} \setminus \{j\})$; $o_{i,j} \!=\! o(x_i \!-\! \bar{x}_i) \!+\! o(x_j \!-\! \bar{x}_j)$; and the error term $\Delta_{i,j}$ is given by
\begin{align} % \\[-8mm]
\Delta_{i,j} = &(x_i - \bar{x}_i) (x_j - \bar{x}_j) 
\nonumber
\\
\label{eq:appendx_errorterm2}
&\!\!\!\!\! \sum_{\boldsymbol{S} \subseteq \mathcal{U} \setminus \{ i,j \}} \!\!\!\!\!\!\!\!\! \frac{ \nabla_{i,j}^2 f_v( \boldsymbol{S} \!\cup\! \{ i,j \}) \!+\! \nabla_{j,i}^2 f_v( \boldsymbol{S} \!\cup\! \{ i,j \} ) }{2 (M - |\boldsymbol{S}| - 1) \binom{M}{|\boldsymbol{S}|+1}},
% \nonumber
\end{align} % \\[-4mm]
where $x_i$ denotes the value of feature $i$; $\bar{x}_i$ denotes the reference value of feature $i$; and $\nabla_{i,j}^2 f_v ( \boldsymbol{S} ) = \frac{\partial^2 f ( \boldsymbol{x}_{\boldsymbol{S}},  \bar{\boldsymbol{x}}_{\mathcal{U} \setminus \boldsymbol{S}} )}{\partial x_i \partial x_j}$ denotes the cross-gradient towards $x_i$ and $x_j$. 
\end{theorem}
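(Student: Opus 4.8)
The plan is to compute the difference $\phi_i(f_v,\mathcal{U}) - \phi_i(f_v, \mathcal{U}\setminus\{j\})$ directly from the brute-force definition in Equation~(\ref{eq:shapley_value}), reorganize the coalition sums so that the value-function evaluations group into mixed second differences in features $i$ and $j$, and then invoke Corollary~\ref{corol:expansion2} to replace each such mixed difference by its bilinear cross-gradient term.

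First I would split the sum defining $\phi_i(f_v,\mathcal{U})$ over $\boldsymbol{S}\subseteq\mathcal{U}\setminus\{i\}$ into coalitions that exclude $j$ and coalitions that contain $j$; writing $\boldsymbol{T}=\boldsymbol{S}\setminus\{j\}$ in the latter case, both groups become indexed by $\boldsymbol{T}\subseteq\mathcal{U}\setminus\{i,j\}$. Setting $A_{\boldsymbol{T}}=f_v(\{i\}\cup\boldsymbol{T})-f_v(\boldsymbol{T})$ and $B_{\boldsymbol{T}}=f_v(\{i,j\}\cup\boldsymbol{T})-f_v(\{j\}\cup\boldsymbol{T})$, this expresses $\phi_i(f_v,\mathcal{U})$ as a weighted sum of $A_{\boldsymbol{T}}$ and $B_{\boldsymbol{T}}$ with weights $\tfrac1M\binom{M-1}{s}^{-1}$ and $\tfrac1M\binom{M-1}{s+1}^{-1}$ respectively, where $s=|\boldsymbol{T}|$; meanwhile $\phi_i(f_v,\mathcal{U}\setminus\{j\})$ is the sum of $\tfrac{1}{M-1}\binom{M-2}{s}^{-1}A_{\boldsymbol{T}}$, since no coalition contains $j$ in the reduced game.

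The crux is a binomial-coefficient identity, and this is the step I expect to be the main obstacle. Expanding everything in factorials, the coefficient of $A_{\boldsymbol{T}}$ in the difference, namely $\tfrac1M\binom{M-1}{s}^{-1}-\tfrac{1}{M-1}\binom{M-2}{s}^{-1}$, simplifies to $-\tfrac{(s+1)!(M-2-s)!}{M!}$, which is exactly the negative of the coefficient $\tfrac1M\binom{M-1}{s+1}^{-1}=\tfrac{(s+1)!(M-2-s)!}{M!}$ of $B_{\boldsymbol{T}}$. Hence the whole difference collapses to $\sum_{\boldsymbol{T}}\tfrac{(s+1)!(M-2-s)!}{M!}\,(B_{\boldsymbol{T}}-A_{\boldsymbol{T}})$. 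I would then verify $\tfrac{(s+1)!(M-2-s)!}{M!}=\bigl[(M-s-1)\binom{M}{s+1}\bigr]^{-1}$ so that this prefactor matches the one in Equation~(\ref{eq:appendx_errorterm2}). It is precisely this exact-negative cancellation that makes the additively separable parts of $f_v$ drop out and leaves only the interaction between $i$ and $j$.

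Finally, I would recognize $B_{\boldsymbol{T}}-A_{\boldsymbol{T}}$ as the mixed second difference $f_v(\{i,j\}\cup\boldsymbol{T})-f_v(\{i\}\cup\boldsymbol{T})-f_v(\{j\}\cup\boldsymbol{T})+f_v(\boldsymbol{T})$. Applying Corollary~\ref{corol:expansion2} with $x_i,x_j$ as the active variables (the features in $\boldsymbol{T}$ held at their values, the remaining features at their reference values), the constant term, the $g(x_i,\bar{x}_j,\cdots)$ term, and the $h(\bar{x}_i,x_j,\cdots)$ term each cancel across the four evaluations, leaving exactly $\tfrac12\bigl(\nabla^2_{i,j}f_v(\{i,j\}\cup\boldsymbol{T})+\nabla^2_{j,i}f_v(\{i,j\}\cup\boldsymbol{T})\bigr)(x_i-\bar{x}_i)(x_j-\bar{x}_j)$ together with $o(x_i-\bar{x}_i)+o(x_j-\bar{x}_j)$. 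Substituting this into the collapsed sum, factoring out $(x_i-\bar{x}_i)(x_j-\bar{x}_j)$, and collecting the higher-order remainders into $o_{i,j}$ yields precisely $\Delta_{i,j}+o_{i,j}$, completing the proof.
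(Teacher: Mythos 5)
Your proposal is correct and is essentially the paper's own proof: both split the coalition sum over $\boldsymbol{S}\subseteq\mathcal{U}\setminus\{i\}$ by whether $j\in\boldsymbol{S}$, both rest on the same binomial identity (your $\tfrac1M\binom{M-1}{s}^{-1}-\tfrac{1}{M-1}\binom{M-2}{s}^{-1}=-\tfrac1M\binom{M-1}{s+1}^{-1}$ is just a rearrangement of the paper's $\tfrac1M[\binom{M-1}{s}^{-1}+\binom{M-1}{s+1}^{-1}]=\tfrac{1}{M-1}\binom{M-2}{s}^{-1}$), and both extract the bilinear term via the second-order Taylor expansion of Corollary~\ref{corol:expansion2}. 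The only difference is cosmetic ordering --- you collapse the combinatorics into discrete mixed second differences $B_{\boldsymbol{T}}-A_{\boldsymbol{T}}$ before Taylor-expanding, whereas the paper expands each preceding difference first and collects afterwards --- which makes the cancellation of the non-interaction terms somewhat more transparent but changes nothing substantive.
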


\begin{proof}

For the simplicity of deduction but without loss of generality, we regard features $i$ and $j$ as the active variables and adopt the abbreviation $f_v(\{ i \} \cup \boldsymbol{S}) = f(x_i, \bar{x}_j, \cdots)$ and $f_v(\boldsymbol{S}) = f(\bar{x}_i, \bar{x}_j, \cdots)$ for $\boldsymbol{S} \subseteq \mathcal{U} \setminus \{ i,j \}$, where the non-active features are abbreviated into~`$\cdots$'.
Taking Corollary~\ref{corol:expansion1} into $f_v$, we have
\begin{align}
&f_v(\{ i \} \cup \boldsymbol{S}) - f_v(\boldsymbol{S}) 
\nonumber
\\
&= f_v(x_i, \bar{x}_j, \cdots) - f_v(\bar{x}_i, \bar{x}_j, \cdots)
\nonumber
\\
\label{eq:taylor_expansion1_2}
&= g(x_i, \bar{x}_j, \cdots) + o(x_i - \bar{x}_i).
\end{align}
Taking Equation~(\ref{eq:taylor_expansion1_2}) into Equation~(\ref{eq:shapley_value}), we have the feature contribution $\phi_i (f, \mathcal{U} \setminus \{ j \})$ given by
% \begin{equation}
\begin{align}
&\phi_i (f, \mathcal{U} \setminus \{ j \}) 
\nonumber
\\
&= \frac{1}{M-1} \!\!\!\! \sum_{\boldsymbol{S} \subseteq \mathcal{U} \setminus \{ i,j \}} \!\!\!\! \binom{M-2}{|\boldsymbol{S}|}^{-1} \big[ f_v(\{ i \} \cup \boldsymbol{S}) \!-\! f_v(\boldsymbol{S}) \big]
\nonumber
\\
\label{eq:phi_f_U__j}
&= \frac{1}{M-1} \!\!\!\!\!\!\! \sum_{\boldsymbol{S} \subseteq \mathcal{U} \setminus \{ i,j \}} \!\!\!\!\!\!\!\! \binom{M-2}{|\boldsymbol{S}|}^{-1} \!\!\!\!\!\! g(x_i, \bar{x}_j, \cdots) \!+\! o(x_i \!-\! \bar{x}_i). \!\!\!\!
\end{align}
% \end{equation}

% \begin{equation}
% \label{eq:shapley_value}
%     \phi_i (f, \mathcal{U}) = \frac{1}{M} \sum_{\boldsymbol{S} \subseteq \mathcal{U} \setminus \{ i \}} \!\! \binom{M-1}{|\boldsymbol{S}|}^{-1} \big[ f_v(\{ i \} \cup \boldsymbol{S}) - f_v(\boldsymbol{S}) \big]
%     % \nonumber
%     % \frac{|\boldsymbol{S}|! (M - |\boldsymbol{S}| - 1)!}{M!}
% \end{equation}

% \begin{equation}
%     \label{eq:value_func}
%     f_v(\boldsymbol{S}) \approx f \big( \boldsymbol{x}_{\boldsymbol{S}}, \mathbb{E}[ \boldsymbol{x}_{\mathcal{U} \setminus \boldsymbol{S}} ] \big),
% \end{equation}

Similarly, we have the abbreviation $f_v(\{ i,j \} \cup \boldsymbol{S}) = f(x_i, x_j, \cdots)$ and $f_v(\{ j \} \cup \boldsymbol{S}) = f(\bar{x}_i, x_j, \cdots)$ for $\boldsymbol{S} \subseteq \mathcal{U} \setminus \{ i \}$. 
Taking Corollary~\ref{corol:expansion2} into $f_v$, we have
\begin{align}
\label{eq:taylor_expansion2_2}
&f_v(\{ i,j \} \cup \boldsymbol{S}) - f_v(\{ j \} \cup \boldsymbol{S}) 
\nonumber
\\
&= f(x_i, x_j, \cdots) - f(\bar{x}_i, x_j, \cdots)
\\
&= g(x_i, \bar{x}_j, \cdots) + \lambda_{i,j} (x_i - \bar{x}_i) (x_j - \bar{x}_j) 
\nonumber
\\
&+  o(x_i - \bar{x}_i) + o(x_j - \bar{x}_j),
\nonumber
\end{align}
where $\lambda_{i,j} = \frac{1}{2} \Big( \frac{\partial f}{\partial X_i \partial X_j} + \frac{\partial f}{\partial X_j \partial X_i} \Big) \Big|_{X_i=x_i, X_j=x_j, \cdots}$.

% In order to adopt Equations~(\ref{eq:taylor_expansion1_2}) and (\ref{eq:taylor_expansion2_2}) to simplify $\phi_i (f, \mathcal{U})$, we reformulate 
According to Equation~(\ref{eq:shapley_value}), the feature contribution $\phi_i (f, \mathcal{U})$ can be reformulated as follows,
% \begin{equation}
\begin{align}
&\phi_i (f, \mathcal{U}) = \frac{1}{M} \!\!\!\! \sum_{\boldsymbol{S} \subseteq \mathcal{U} \setminus \{ i \}} \!\!\!\! \binom{M-1}{|\boldsymbol{S}|}^{-1} \!\!\!\!\!\! \big[ f_v(\{ i \} \cup \boldsymbol{S}) \!-\! f_v(\boldsymbol{S}) \big]
\nonumber
\\
\label{eq:phi_f_U4}
&= \frac{1}{M} \sum_{\boldsymbol{S} \subseteq \mathcal{U} \setminus \{ i,j \}} \!\! \binom{M-1}{|\boldsymbol{S}|}^{-1} \!\!\!\!\!\! \big[ f_v(\{ i \} \cup \boldsymbol{S}) - f_v(\boldsymbol{S}) \big] 
\\
\label{eq:phi_f_U5}
&+\! \frac{1}{M}  \!\!\!\!\! \sum_{\boldsymbol{S} \subseteq \mathcal{U} \setminus \{ i,j \}}  \!\!\!\!\!  \binom{M \!-\! 1}{|\boldsymbol{S}| \!+\! 1}^{-1} \!\!\!\!\!\!\! \big[ f_v(\{ i,j \} \!\cup\! \boldsymbol{S}) \!-\! f_v(\{ j \} \!\cup\! \boldsymbol{S}) \big] \!. \!\!\!
\end{align}
% \end{equation}
Taking Equation~(\ref{eq:taylor_expansion1_2}) into Equation~(\ref{eq:phi_f_U4}) and taking Equation~(\ref{eq:taylor_expansion2_2}) into Equation~(\ref{eq:phi_f_U5}), we have
\begin{align}
\label{eq:phi_f_U1}
&\phi_i (f, \mathcal{U}) 
\nonumber
\\
&= \! \frac{1}{M} \!\!\!\!\!\!\! \sum_{\boldsymbol{S} \subseteq \mathcal{U} \setminus \{ i,j \}} \!\!\! \Bigg[ \!\! \binom{M\!-\!1}{|\boldsymbol{S}|}^{-1} \!\!\!\!\!\!\! + \! \binom{M\!-\!1}{|\boldsymbol{S}|\!+\!1}^{-1} \! \Bigg] \! g(x_i, \! \bar{x}_j, \! \cdots) \!\!\!\!\!\!\!\!\!\!\!\!\!\!\!\!\!\!
\\
\label{eq:phi_f_U2}
&+ \frac{1}{M} \!\!\! \sum_{\boldsymbol{S} \subseteq \mathcal{U} \setminus \{ i,j \}} \!\!\! \binom{M-1}{|\boldsymbol{S}|+1}^{-1} \!\!\!\! \lambda_{i,j} (x_i - \bar{x}_i) (x_j - \bar{x}_j) 
\\
\label{eq:phi_f_U3}
&+ o(x_i - \bar{x}_i) + o(x_j - \bar{x}_j).
% \\
% \nonumber
% &= \frac{1}{M-1} \sum_{\boldsymbol{S} \subseteq \mathcal{U} \setminus \{ i,j \}} \!\! \binom{M-2}{|\boldsymbol{S}|}^{-1} \Big( \cdots \Big) 
\end{align}
Theorem 1 can be proved via taking Equation~(\ref{eq:phi_f_U__j}) into Equation~(\ref{eq:phi_f_U1}); transforming Equation~(\ref{eq:phi_f_U2}) into Equation~(\ref{eq:appendx_errorterm2}); and having $o_{i,j} = o(x_i - \bar{x}_i) + o(x_j - \bar{x}_j)$.

\end{proof}

% % \begin{equation}
% \begin{align}
% \phi_i (f, \mathcal{U}) &= \frac{1}{M} \sum_{\boldsymbol{S} \subseteq \mathcal{U} \setminus \{ i,j \}} \!\! \Bigg[ \binom{M-1}{|\boldsymbol{S}|}^{-1} + \binom{M-1}{|\boldsymbol{S}|+1}^{-1} \Bigg] \Big( \cdots \Big)
% \nonumber
% % \\
% % &= \sum_{\boldsymbol{S} \subseteq \mathcal{U} \setminus \{ i,j \}} \!\! \frac{(M - |\boldsymbol{S}| - 2)! |\boldsymbol{S}|!}{(M - 1)!} \Big( \cdots \Big)
% \\
% \label{eq:phi_U}
% &= \frac{1}{M-1} \sum_{\boldsymbol{S} \subseteq \mathcal{U} \setminus \{ i,j \}} \!\! \binom{M-2}{|\boldsymbol{S}|}^{-1} \Big( \cdots \Big) \\
% % \\
% % &= \frac{1}{M-1} \sum_{\boldsymbol{S} \subseteq \mathcal{U} \setminus \{ i,j \}} \!\! \binom{M-2}{|\boldsymbol{S}|}^{-1} g(x_i, \bar{x}_j, \cdots) \\
% % &+ (x_i - \bar{x}_i) (x_j - \bar{x}_j) \sum_{\boldsymbol{S} \subseteq \mathcal{U} \setminus \{ i,j \}} \!\! \frac{(M - |\boldsymbol{S}| - 2)! |\boldsymbol{S}|!}{(M - 1)!} \lambda_{i,j}
% % \\
% % &+ o (x_i - \bar{x}_i) + o (x_j - \bar{x}_j)
% % \\
% % &= \phi_i (f, \mathcal{U} \setminus \{ j \}) + \Delta_{i,j} + o(\Delta_{i,j})
% \end{align}
% % \end{equation}

% we have $\phi_i (f, \mathcal{U}) = \phi_i (f, \mathcal{U} \setminus \{ j \}) + \Delta_{i,j} + o_{i,j}$

% \begin{equation}
% \begin{aligned}
% \phi_i (f, \mathcal{U}) = \phi_i (f, \mathcal{U} \setminus \{ j \}) + \Delta_{i,j} + o_{i,j}
% \end{aligned}
% \end{equation}

% \begin{equation}
% \begin{aligned}
% \phi_i(f, \mathcal{U} ) = \phi_i( f, \mathcal{U} \setminus \{j\} ) - \Delta_{i,j}  + o(\Delta_{i,j}),
% \end{aligned}
% \end{equation}

\section{Proof of Theorem 2}
\label{sec:proof_theorem_2}

We prove Theorem 2 in this section.

\begin{theorem}[Upper Bound of Error Term]
For any features $i \neq j \in \mathcal{U}$, the upper bound of the absolute gap between $\phi_i( f_v, \mathcal{U} )$ and $\phi_i( f_v, \mathcal{U} \setminus \{ j \} )$ is given by
\begin{equation}
\setlength\abovedisplayskip{2mm}
\setlength\belowdisplayskip{2mm}
\big| \phi_i( f_v, \mathcal{U} ) \!-\! \phi_i( f_v, \mathcal{U} \setminus \{ j \} )  \big| \!\leq\! \epsilon_{i,j} |x_i \!-\! \bar{x}_i| |x_j \!-\! \bar{x}_j|,
\end{equation}
where $\epsilon_{i,j}$ relies on the gradient towards $x_i$ and $x_j$ by
\begin{equation}
\setlength\abovedisplayskip{1mm}
\setlength\belowdisplayskip{2mm}
\label{eq:appendix_epsilon2}
\epsilon_{i,j} \!=\! \max\limits_{\mathcal{V} \subseteq \mathcal{U} \setminus \{ i,j \}} \frac{1}{4} \big| \nabla_{i,j}^2 f_v (\mathcal{U} \setminus \mathcal{V}) \!+\! \nabla_{j,i}^2 f_v (\mathcal{U} \setminus \mathcal{V}) \big|.
\end{equation}
\end{theorem}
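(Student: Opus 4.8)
The plan is to read the gap $\phi_i(f_v,\mathcal{U}) - \phi_i(f_v,\mathcal{U}\setminus\{j\})$ straight off the Shapley chain rule (Theorem~\ref{theorem:shapley_chain_rule}), which asserts $\phi_i(f_v,\mathcal{U}) - \phi_i(f_v,\mathcal{U}\setminus\{j\}) = \Delta_{i,j} + o_{i,j}$. Discarding the higher-order remainder $o_{i,j}$, it suffices to bound $|\Delta_{i,j}|$; this is the one soft spot, since the stated inequality carries no explicit $o$ term, so the bound is really a bound on the leading error term $\Delta_{i,j}$. Because $\Delta_{i,j}$ already carries the factor $(x_i-\bar{x}_i)(x_j-\bar{x}_j)$ out front, the matching factor $|x_i-\bar{x}_i|\,|x_j-\bar{x}_j|$ on the right-hand side of the claim peels off at once, and the whole problem reduces to bounding the weighted sum of cross-gradients that remains.

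Write $g(\boldsymbol{S}) = \nabla^2_{i,j}f_v(\boldsymbol{S}\cup\{i,j\}) + \nabla^2_{j,i}f_v(\boldsymbol{S}\cup\{i,j\})$ for $\boldsymbol{S}\subseteq\mathcal{U}\setminus\{i,j\}$. First I would apply the triangle inequality to move the absolute value inside the sum and then factor out $\max_{\boldsymbol{S}}|g(\boldsymbol{S})|$. The observation linking this to $\epsilon_{i,j}$ is that as $\mathcal{V}$ ranges over all subsets of $\mathcal{U}\setminus\{i,j\}$, the set $\mathcal{U}\setminus\mathcal{V}$ ranges over exactly the supersets of $\{i,j\}$, i.e. over the sets $\boldsymbol{S}\cup\{i,j\}$; hence $\max_{\boldsymbol{S}}\tfrac14|g(\boldsymbol{S})| = \epsilon_{i,j}$ as defined in Equation~(\ref{eq:appendix_epsilon2}). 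After this step the estimate reads $|\Delta_{i,j}| \le 4\epsilon_{i,j}\,|x_i-\bar{x}_i|\,|x_j-\bar{x}_j|\cdot C$, where $C$ is the purely combinatorial coefficient sum $C=\sum_{\boldsymbol{S}\subseteq\mathcal{U}\setminus\{i,j\}}\tfrac{1}{2(M-|\boldsymbol{S}|-1)\binom{M}{|\boldsymbol{S}|+1}}$.

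The heart of the argument, and the main obstacle, is to evaluate $C$ in closed form and show $C = \tfrac14$, which precisely cancels the spurious factor of $4$ and delivers the constant $\epsilon_{i,j}$ as stated. I would group terms by the cardinality $s=|\boldsymbol{S}|$, using that there are $\binom{M-2}{s}$ subsets of each size, and reindex by $k=s+1$, so that $C = \sum_{k=1}^{M-1}\tfrac{\binom{M-2}{k-1}}{2(M-k)\binom{M}{k}}$. The binomial ratio simplifies to $\binom{M-2}{k-1}\big/\binom{M}{k} = k(M-k)\big/\big(M(M-1)\big)$, whereupon the $(M-k)$ factors cancel and the sum collapses to $\tfrac{1}{2M(M-1)}\sum_{k=1}^{M-1}k = \tfrac{1}{2M(M-1)}\cdot\tfrac{M(M-1)}{2} = \tfrac14$. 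Along the way I should check that the denominator $M-|\boldsymbol{S}|-1$ stays strictly positive over the whole summation range (it equals $1$ at the top end $|\boldsymbol{S}|=M-2$ and $M-1$ at the bottom end), so no term is singular and the termwise triangle inequality is legitimate. Assembling the three pieces gives $|\Delta_{i,j}| \le \epsilon_{i,j}\,|x_i-\bar{x}_i|\,|x_j-\bar{x}_j|$, which is the claim.
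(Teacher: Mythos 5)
Your proposal is correct and follows essentially the same route as the paper's own proof: both read the gap off Theorem~1, explicitly discard the remainder $o_{i,j}$, apply the triangle inequality and the definition of $\epsilon_{i,j}$ to pull out $4\epsilon_{i,j}$, and then evaluate the combinatorial coefficient sum in closed form (the paper verifies $\sum_{\boldsymbol{S}}\tfrac{2}{(M-|\boldsymbol{S}|-1)\binom{M}{|\boldsymbol{S}|+1}}=1$, which is exactly your $C=\tfrac14$ up to the factor of $4$). Your reindexing by $k=|\boldsymbol{S}|+1$ and the simplification $\binom{M-2}{k-1}/\binom{M}{k}=k(M-k)/(M(M-1))$ is just a slightly tidier presentation of the same factorial cancellation, and your flagged ``soft spot'' about $o_{i,j}$ is the same caveat the paper states at the outset of its proof.
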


\begin{proof}

We ignore the infinitesimal $o_{i,j}$ in Theorem 1 to prove the upper bound.
According to Theorem 1, we have
\begin{align}
\label{eq:proof_theorem2_1}
\!\!\!\! \big| \phi_i( f_v, \mathcal{U} ) \!-\! \phi_i( f_v, \mathcal{U} \setminus \{ j \} )  \big| \!\leq\! w_{i,j} |x_i \!-\! \bar{x}_i| |x_j \!-\! \bar{x}_j|, \!\!\!\!
% \nonumber
\end{align}
where $w_{i,j}$ is given by
\begin{align}
w_{i,j} &= \sum_{\boldsymbol{S} \subseteq \mathcal{U} \setminus \{ i,j \}} \!\!\!\!\! \frac{ \nabla_{i,j}^2 f_v( \boldsymbol{S} \!\cup\! \{ i,j \}) \!+\! \nabla_{j,i}^2 f_v( \boldsymbol{S} \!\cup\! \{ i,j \} ) }{2 (M - |\boldsymbol{S}| - 1) \binom{M}{|\boldsymbol{S}|+1}}.
\nonumber
% \\
% &\leq \epsilon_{i,j} \sum_{\boldsymbol{S} \subseteq \mathcal{U} \setminus \{ i,j \}} \frac{ 1 }{2 (M - |\boldsymbol{S}| - 1) \binom{M}{|\boldsymbol{S}|+1}};
\end{align}
Define $\epsilon_{i,j}$ following Equation~(\ref{eq:appendix_epsilon2}), we have
\begin{align}
\label{eq:proof_theorem2_2}
w_{i,j} &\leq \epsilon_{i,j} \!\!\!\!\! \sum_{\boldsymbol{S} \subseteq \mathcal{U} \setminus \{ i,j \}} \frac{ 2 }{ (M - |\boldsymbol{S}| - 1) \binom{M}{|\boldsymbol{S}|+1}} = \epsilon_{i,j}.
% \nonumber
\end{align}
Eventually, Theorem 2 can be proved via taking Equation~(\ref{eq:proof_theorem2_2}) to Equation~(\ref{eq:proof_theorem2_1}), where 
\begin{equation}
\begin{aligned}
&\sum_{\boldsymbol{S} \subseteq \mathcal{U} \setminus \{ i,j \}} \frac{ 2 }{ (M - |\boldsymbol{S}| - 1) \binom{M}{|\boldsymbol{S}|+1}}
\\
&= 2\sum_{|\boldsymbol{S}| = 0}^{M-2} \frac{ \binom{M-2}{|\boldsymbol{S}|} }{ (M - |\boldsymbol{S}| - 1) \binom{M}{|\boldsymbol{S}|+1}}
\\
&= 2\sum_{|\boldsymbol{S}| = 0}^{M-2} \frac{(M-|\boldsymbol{S}|-1)!(|\boldsymbol{S}|+1)!(M-2)!}{(M-|\boldsymbol{S}|-1)M!(M-|\boldsymbol{S}|-2)!|\boldsymbol{S}|!}
\\
&= \frac{2\sum_{|\boldsymbol{S}| = 0}^{M-2}(|\boldsymbol{S}|+1)}{M(M-1)} = 1.
\nonumber
\end{aligned}
\end{equation}

\end{proof}

\section{Proof of Remark 2}
\label{sec:proof_remark_2}

We prove Remark 2 in this section.

\begin{proof}
Remark 2 generalizes Theorem 2 to multiple features.
Specifically, we consider a subset of features $\mathcal{V} \!=\! \{j_1, \! j_2, \! \cdots \! \}$.
According to Equation~(\ref{eq:error_upper_bound}), we have
\begin{equation}
\begin{aligned}
& \big| \phi_i(f_v, \mathcal{U}) - \phi_i(f_v, \mathcal{U} \setminus \mathcal{V} ) \big|
\\
&= | \phi_i(f_v, \! \mathcal{U}) \!-\! \phi_i(f_v, \! \mathcal{U} \!\setminus\! \{j_1\} ) \\ &\quad\quad\quad\quad \!+\! \phi_i(f_v, \! \mathcal{U} \setminus \{j_1\} ) -\! \phi_i(f_v, \! \mathcal{U} \!\setminus\! \{j_1, j_2 \}) \\ &\quad\quad\quad\quad\quad\quad\quad\quad \!+\! \cdots |
\\
&\leq | \phi_i( f_v, \! \mathcal{U} ) \!-\! \phi_i( f_v, \! \mathcal{U} \!\setminus\! \{j_1\})| \\ &\quad\quad\quad\quad  \!+\! | \phi_i( f_v, \! \mathcal{U} \!\setminus\! \{j_1\} ) \!-\! \phi_i( f_v, \! \mathcal{U} \!\setminus\! \{j_1, j_2\} ) | \\ &\quad\quad\quad\quad\quad\quad\quad\quad \!+\! \cdots
\\
&= \sum_{j \in \mathcal{V}} \epsilon_{i,j} |x_i \!-\! \bar{x}_i| |x_j \!-\! \bar{x}_j|
\nonumber
\end{aligned}
\end{equation}
After taking $\mathcal{S} = \mathcal{U} \!\setminus\! \{i\} \!\setminus\! \mathcal{V}$, we have 
\begin{align}
\big| \phi_i( f_v, \mathcal{U} ) \!-\! \phi_i( f_v, \mathcal{S} \!\cup\! \{ i \} ) \big| \!\leq\!\!\!\!\!\!\! \sum_{j \in \mathcal{U} \setminus \{ i \} \setminus \mathcal{S} } \!\!\!\!\!\! \epsilon_{i,j} |x_i \!-\! \bar{x}_i| |x_j \!-\! \bar{x}_j|
\nonumber
\end{align}
\end{proof}

\section{Proof of Equation~(\ref{eq:S_selection_0}) to Equation~(\ref{eq:S_selection})}

We give the detailed proof from  Equation~(\ref{eq:S_selection_0}) to Equation~(\ref{eq:S_selection}) in this section.

\begin{proof}
According to Equation~(\ref{eq:S_selection}), for $\mathcal{S} \subseteq \mathcal{U} \setminus \{ i \}$, we have that
\begin{align}
&\sum_{j \in \mathcal{U} \setminus \mathcal{S}} \epsilon_{i,j} |x_i - \bar{x}_i| |x_j - \bar{x}_j| 
\\
&= \sum_{j \in \mathcal{U}} \epsilon_{i,j} |x_i - \bar{x}_i| |x_j - \bar{x}_j| 
\\&- \sum_{j\in \mathcal{S}} \epsilon_{i,j} |x_i - \bar{x}_i| |x_j - \bar{x}_j|,
\end{align}
where $\sum_{j \in \mathcal{U}}  \epsilon_{i,j} |x_i - \bar{x}_i| |x_j - \bar{x}_j|$ is a constant for the target feature~$i$.
In this way, Equation~(\ref{eq:S_selection}) can be considered as the dual problem of Equation~(\ref{eq:S_selection_0}).
\end{proof}

\section{Details about the Datasets}
\label{sec:appendix_dataset}

\begin{table*}[]
\centering
\caption{Dataset Statistics}
\begin{tabular}{l|c|c|c|c|c}
\hline
Dataset &  Continuous  &  Categorical  &  Training  &  Validation  &  Testing  \\
\hline
{Census Income} & 5 & 8 & 20838 & 5210 & 6513 \\
\hline
{German Credit} & 7 & 9 & 28934 & 7234 & 9043 \\
\hline
Cretio & 13 & 26 & 80000 & 10000 & 10000 \\
\hline
\end{tabular}
\label{tab:dataset}
\end{table*}

We give the details about the datasets in this section, and the dataset statistics are shown in Table~\ref{tab:dataset}.
\begin{itemize}[leftmargin=10pt, topsep=0pt]
\setlength{\parskip}{1pt}
\setlength{\parsep}{0pt}
\setlength{\itemsep}{0pt}

\item \textbf{Census Income}\footnote{\url{https://archive.ics.uci.edu/ml/datasets/census+income}}:
In this dataset, each sample has five continuous features and eight one-hot encoded categorical features.
The task for this dataset is to predict whether a person has income \emph{more than}~(\emph{target}=1) or \emph{less than}~(\emph{target}=0) 50K/yr based on her/his personal features~(\emph{education, occupation, working hours, etc.}).

\item \textbf{German credit}\footnote{ \url{https://archive.ics.uci.edu/ml/datasets/statlog+(german+credit+data)}}: 
The samples in this dataset have seven continuous features and nine one-hot encoded categorical features.
The task is to predict whether a person has \emph{good}~(\emph{y}=1) or \emph{bad}~(\emph{y}=0) credit risks based on her/his personal features~(\emph{job, education, balance, etc.}).

\item \textbf{Criteo}\footnote{\url{https://www.kaggle.com/c/criteo-display-ad-challenge/data}}\footnote{\url{https://labs.criteo.com/2014/02/kaggle-display-advertising-challenge-dataset/}}: 
Each sample in this dataset corresponds to a user and an ad which have 13 continuous features and 26 one-hot encoded categorical features.
The Criteo dataset is widely used in recommender systems, where the task is to predict the clicking rate of a user on an ad.

% \footnote{\small The dense and categorical features refer to the continuous and categorical features, respectively.} in total.
% \item \textbf{Meta}: 

\end{itemize}

\section{Details about the Baseline Methods}
\label{sec:appendix_baseline}

We give the details about the baseline methods in this section.

\begin{itemize}[leftmargin=10pt, topsep=0pt]
\setlength{\parskip}{1pt}
\setlength{\parsep}{0pt}
\setlength{\itemsep}{0pt}

    \item \textbf{Kernel-SHAP}~\cite{lundberg2017unified}: 
    Kernel-SHAP is a model agnostic method for feature importance estimation, which uses specific linear regressions to approach the original model outputs at each data point so that the linear weights approach the feature contribution.
    Kernel-SHAP estimates the feature contributions by
    \begin{align}
    \label{eq:KS_feature_contribution}
    [\hat{\phi}_1, \cdots, \hat{\phi}_M] = \big( \mathbf{A}^{\mathsf{T}} \mathbf{W} \mathbf{A} \big)^{-1} \mathbf{A}^{\mathsf{T}} \mathbf{W} \mathbf{b},
    \end{align}
    where $\mathbf{A} = [\mathbf{1}_{\boldsymbol{S}_{1}}^{\mathsf{T}}, \cdots, \mathbf{1}_{\boldsymbol{S}_{N}}^{\mathsf{T}}]$; $\mathbf{W} = \text{diag} \{ \pi_1, \cdots, \pi_N \}$; and for $1 \leq n \leq N$, $\pi_n$ is given by
    \begin{align}
        \pi_n = \frac{M-1}{\binom{M}{|\boldsymbol{S}_n|}|\boldsymbol{S}_n||M-|\boldsymbol{S}_n||}.
    \end{align}

    \item \textbf{KS-WF}~\cite{covert2021improving}: 
    KS-WF adopts the Welford algorithm~\cite{welford1962note} to calculate and reduce the variance of the feature contribution estimation. 
    Specifically, in each iteration $n$, it randomly selects feature subset $\boldsymbol{S}_{n}$, and calculates the coefficient matrix by
    \begin{align}
        \mathbf{A}_{n} &= \Big( 1-\frac{1}{n} \Big) \mathbf{A}_{n-1} + \frac{1}{n} \mathbf{1}_{\boldsymbol{S}_{n}}\mathbf{1}_{\boldsymbol{S}_{n}}^{\mathsf{T}}
        \nonumber
        \\
        \mathbf{b}_{n} &= \Big( 1-\frac{1}{n} \Big) \mathbf{b}_{n-1} + \frac{1}{n}  \Big( f_v(\mathcal{U}) - f_v(\mathcal{\varnothing}) \Big) \mathbf{1}_{\boldsymbol{S}_{n}},
        \nonumber
    \end{align}
    where $\mathbf{A}_0 = \mathbf{0}_{M \times M}$ and $\boldsymbol{b}_0 = \mathbf{0}_{M}$ for the first iteration.
    After the $n$ iterations, the feature contribution are estimated by
    \begin{align}
    [\hat{\phi}_1, \! \cdots \! , \! \hat{\phi}_M] \!=\! \mathbf{A}_{n}^{-1} \! \Big( \mathbf{b}_n \!-\! \mathbf{1}\frac{\mathbf{1}^{\mathsf{T}}  \mathbf{A}_{n}^{-1} \mathbf{b} \!-\! f_v(\mathcal{U}) \!+\! f_v(\mathcal{\varnothing})}{\mathbf{1}^{\mathsf{T}}  \mathbf{A}_{n}^{-1} \mathbf{1}} \Big).
    \nonumber
    \end{align}
    The iterative process of KS-WF has to follow the feed-forward pipeline without parallelization which constrains its computational efficiency.

    \item \textbf{KS-Pair}~\cite{covert2021improving}:
    KS-Pair adopts the pair sampling to accelerate the convergence of Kernel-SHAP.
    Specifically, KS-Pair samples pairs of input feature coalitions $(\boldsymbol{S}_n, \mathcal{U} \setminus \boldsymbol{S}_n)|_{1 \leq n \leq \frac{N}{2}}$ and estimates the feature contribution following Equation~(\ref{eq:KS_feature_contribution}).
    
    \item \textbf{PS}~\cite{mitchell2021sampling}: 
    The permutation sampling method estimates the feature contribution merely based on model inference.
    It randomly masks each feature for each data point and takes the average variety of model output as the feature importance.
    Given the model value function $f_v$ and input feature coalition $\boldsymbol{S}_1, \cdots, \boldsymbol{S}_N$, the feature contribution is estimated given by
    \begin{align}
    \label{eq:PS_feature_contribution}
        \hat{\phi}_i = \frac{1}{N} \sum_{j=1}^N f_v(\boldsymbol{S}_n \cup \{ i \}) - f_v(\boldsymbol{S}_n).
    \end{align}

    \item \textbf{APS}~\cite{lomeli2019antithetic, mitchell2021sampling}: 
    APS adopts the antithetical sampling to reduce the variance of feature contribution estimation.
    To be concrete, half of the feature coalitions are randomly selected from the feature space, and the remaining feature coalitions takes ${S}_{N+1-n} = \mathcal{U} \setminus \boldsymbol{S}_n$ for $\frac{N}{2} < n \leq N$.
    The feature contribution is estimated following Equation~(\ref{eq:PS_feature_contribution}).
    % \begin{align}
    %     {S}_{N-n} = \mathcal{U} \setminus \boldsymbol{S}_n,
    % \end{align}
    
\end{itemize}

\section{Implementation Details}

\label{sec:appendix_implementation_details}

The experiment on each dataset follows the pipeline of \emph{model training}: training the DNN model; \emph{interpretation benchmark}: adopting the brute-force algorithm to calculate the exact Shapley value as the ground truth explanation for the evaluation; and \emph{interpretation evaluation}: evaluating the interpretation methods.
Each step is specified as follows.

\noindent
\textbf{Model Training}:
We adopt 3-layer MLP (multi-layer perceptron) as the classification model for the Census Income and German Credit datasets.
To train the model, we adopt Adam optimizer with $10^{-3}$ learning rate to update the model parameters so that the cross-entropy loss can be minimized.
Adopting early stopping based on the performance on the validation dataset to avoid overfitting, we achieve $84.7\%$ and $89.8\%$ accuracy on the Census Income and German Credit testing set, respectively.
For the Cretio dataset, we use DeepFM~\cite{guo2017deepfm} as the model and adopt Adam optimizer with $10^{-4}$ learning rate to update the parameters.
Other settings are the same with that of Census Income dataset, and we achieve $71.09\%$ accuracy on the testing dataset.

\noindent
\textbf{Interpretation Benchmark}: We adopt the brute-force algorithm to calculate the ground-truth Shapley value (GT-Shapley value) for the evaluation.
Specifically, the GT-Shapley values are calculated according to Equation~(\ref{eq:shapley_value}), where $f_v(\boldsymbol{S})$ is given by Equation~(\ref{eq:value_func_approx}); the reference value of continuous features takes the mean value for all datasets; and that of categorical features takes the mean value for the Census Income and German Credit dataset, and takes the mode for the Cretio dataset\footnote{DeepFM works based on the Hash-index of categorical features. Instead of the mean value of Hash-index, we adopt the mode for the reference value of categorical features.}.
Other hyper-parameter settings are summarized in Table~\ref{tab:hyperparam}.  

\begin{table}[]
\caption{Hyper-parameter setting for model training and Shapley value benchmark.}
\centering
\begin{tabular}{l|c|c|c}
\hline
Dataset & {\small Census Income} & {\small German Credit} & Cretio \\
\hline
Model & {\small 3-layer MLP} & {\small 3-layer MLP} & {\small DeepFM} \\
\hline
{\small Hidden dim.} & 64 & 64 & 32 \\
\hline
Opt., LR & Adam,{\scriptsize $10^{-3}$} & Adam,{\scriptsize $10^{-3}$} & $\!\!$ Adam,{\scriptsize $10^{-4}$} $\!\!\!\!\!\!$ \\
\hline
Batch Size & 256 & 256 & 256 \\
\hline
{$\text{Ref}_{\text{continuous}}$} & Mean & Mean & Mean \\
\hline
{$\text{Ref}_{\text{categorical}}$} & Mean & Mean & Mode \\
\hline
\end{tabular}
\label{tab:hyperparam}
\end{table}

\noindent
\textbf{Interpretation Evaluation}:
\Algnameabbr{} and baseline methods are employed to generate interpretations for the instances in the testing set.
To evaluate the interpretation generated by different methods, we have five evaluation metrics given in Sections \ref{sec:GT_eval_metric}, \ref{sec:NGT_eval_metric} and \ref{sec:exp_throughput}, including two metrics taking the GT-Shapley value as the reference: the absolute estimation error and accuracy of feature importance ranking; two metrics evaluating the interpretation via model perturbation: Faithfulness and Monotonicity; and the algorithmic throughput to evaluate the running speed of generating the explanation.
The evaluation metrics are calculated for each testing instance, and we report the mean and standard deviation of the metric value to illustrate the interpretation performance and variance of \Algnameabbr{} and baseline methods.

% \subsection{{Unified Experimental Conditions}}

% \Algnameabbr{} and Permutation-based methods~(PS and APS) rely on $NM$ times of model evaluation to estimate the contributions of the $M$ features; while the Kernel-SHAP-based methods~(Kernel-SHAP, KS-WF and KS-Pair) adopt the matrix operations to complete the estimation based on $N$ times of model evaluation, where fewer model evaluations, less accurate estimation for each method.
% For the fairness of comparison, the Kernel-SHAP-based methods are executed for $M$ loops and output the average value of feature contribution\footnote{We cannot provide $NM$ times of model evaluation in single batch for the Kernel-SHAP-based methods, because their memory-cost and calculation-load will increase to more than $M$-time and $M^2$-time of other methods, respectively.}.
% In such a manner, the Kernel-SHAP-based methods also benefit from $NM$ times of model evaluation, and provide more accurate estimation than single-loop estimation.
% Beyond this, the overall memory-cost and computational-load can be controlled to be equal for all methods.

In the evaluation step, we have to unify the experimental conditions over different interpretation mechanisms to achieve a fair comparison. 
In particular, \Algnameabbr{} and Permutation-based methods (i.e., PS and APS) take $NM$ times of model evaluation to estimate the contributions of $M$ features, while Kernel-SHAP-based methods (i.e., Kernel-SHAP, KS-WF and KS-Pair) simply take $N$ times with the merit of matrix operations.
For unified the conditions, we should make sure Kernel-SHAP-based methods can also benefit from $NM$ times of model evaluation.
Note that setting $NM$ times of model evaluation in a single batch would not work for Kernel-SHAP-based methods in practice due to the $M^2$-time-related memory cost and computation load.
We choose to execute Kernel-SHAP-based methods by $M$ loops in our experiments, and average the contribution values as the output. 
In this way, we control the model evaluations, overall memory cost and computation load equal for all involved methods.

% since the related memory cost and computation load would be significantly increased up to $M^2$-time.
% For unified conditions, we choose to execute Kernel-SHAP-based methods by $M$ loops in our experiments, and average the contribution values as the output. 
% In this way, the Kernel-SHAP-based methods also benefit from $NM$ times of model evaluation, and the overall memory cost and computation load can be controlled to be equal for all involved methods.

% \footnote{Note that setting $NM$ times of model evaluation in a single batch would not work for Kernel-SHAP-based methods in practice, since the related memory cost and computation load would be significantly increased up to $M^2$-time.}.

% $M$-loop-boosted
% Note that the Kernel-SHAP-based methods~(Kernel-SHAP, KS-WF and KS-Pair) estimate the contributions of the $M$ features simultaneously based on the $N$ times of model evaluations via matrix operations,
% however, Permutation-based methods~(PS and APS) and \Algnameabbr{} can only finish the estimation for single feature, and rely on $M$ loops to provide the estimation for the $M$ features, 

% we set identical number of sampled feature subset at each time for all methods; and run the Kernel-SHAP-based methods for $M$ times and report the average feature contribution, where $M$ is the feature number.

\section{Details about Computing Infrastructure}
\label{sec:appendix_infrastructure}

The details about our physical computing infrastructure for testing the algorithmic throughput are given in Table~\ref{tab:computing_infrastructure}.
\begin{table}[H]
\centering
\caption{Computing infrastructure for the experiments.}
\begin{tabular}{l|c}
\hline
Device Attribute & Value \\
\hline
Computing infrastructure & CPU \\
CPU model & Apple M1 \\
CPU number & 1 \\
Core number & 8 \\
Memory size & 16GB \\
\hline
\end{tabular}
\label{tab:computing_infrastructure}
\vspace{-3mm}
\end{table}
We have not used GPUs in our experiments because \Algnameabbr{} and baseline methods mostly rely on DNN evaluation instead of the training of DNNs.
Since the DNN model for the interpretation is controlled equal for all methods, the ranking of algorithmic throughput in Figures~\ref{fig:run_time}~(a)-(c) should be roughly consistent with the testing results on other types of equipment.

\end{document}